\title{Zero-One Laws of Graph Neural Networks}
\author{%
    Sam Adam-Day\thanks{Alternative email address: \texttt{me@samadamday.com}} \\
    Department of Mathematics \\
    University of Oxford \\
    Oxford, UK \\
    \texttt{sam.adam-day@cs.ox.ac.uk} \\
    \And
    Theodor-Mihai Iliant \\
    Department of Computer Science \\
    University of Oxford \\
    Oxford, UK \\
    \texttt{theodor-mihai.iliant@lmh.ox.ac.uk} \\
    \And
    {\.I}smail {\.I}lkan Ceylan \\
    Department of Computer Science \\
    University of Oxford \\
    Oxford, UK \\
    \texttt{ismail.ceylan@cs.ox.ac.uk} \\
}
\theoremstyle{plain}
\newtheorem{theorem}{Theorem}[section]
\newtheorem{lemma}[theorem]{Lemma}
\theoremstyle{definition}
\newtheorem{definition}[theorem]{Definition}
\theoremstyle{remark}
\def\eqref#1{equation~\ref{#1}}
\def\1{\bm{1}}
\newcommand{\sumgGNN}{$\textsc{SumGNN}^{\textsc{+}}$\xspace}
\newcommand{\sumbGNN}{$\textsc{SumGNN}$\xspace}
\newcommand{\meangGNN}{$\textsc{MeanGNN}^{\textsc{+}}$\xspace}
\newcommand{\meanbGNN}{$\textsc{MeanGNN}$\xspace}
\newcommand{\sumbGNNs}{$\textsc{SumGNN}\text{s}$\xspace}
\newcommand{\meanbGNNs}{$\textsc{MeanGNN}\text{s}$\xspace}
\newcommand{\selfgGNN}{$\textsc{MeanGNN}^{\textsc{+}}$\xspace}
\newcommand{\GCN}{$\textsc{GCN}$\xspace}
\newcommand{\GCNs}{$\textsc{GCN}$\text{s}\xspace}
\DeclarePairedDelimiter{\abs}{|}{|}
\DeclarePairedDelimiter{\norm}{\lVert}{\rVert}
\let\Pr\undefined
\DeclareMathOperator{\Pr}{\mathbb{P}}
\DeclareMathOperator{\Ex}{\mathbb{E}}
\DeclareMathOperator{\Bin}{\mathrm{Bin}}
\def\rvw{{\mathbf{w}}}
\def\rvx{{\mathbf{x}}}
\def\rvy{{\mathbf{y}}}
\def\rvz{{\mathbf{z}}}
\def\erva{{\textnormal{a}}}
\def\ervz{{\textnormal{z}}}
\def\rmA{{\mathbf{A}}}
\def\rmX{{\mathbf{X}}}
\def\ermA{{\textnormal{A}}}
\def\vmu{{\bm{\mu}}}
\def\va{{\bm{a}}}
\def\vb{{\bm{b}}}
\def\vq{{\bm{q}}}
\def\vx{{\bm{x}}}
\def\vy{{\bm{y}}}
\def\vz{{\bm{z}}}
\def\mQ{{\bm{Q}}}
\def\mW{{\bm{W}}}
\def\mX{{\bm{X}}}
\DeclareMathAlphabet{\mathsfit}{\encodingdefault}{\sfdefault}{m}{sl}
\SetMathAlphabet{\mathsfit}{bold}{\encodingdefault}{\sfdefault}{bx}{n}
\def\gG{{\mathcal{G}}}
\def\sB{{\mathbb{B}}}
\def\sD{{\mathbb{D}}}
\def\sG{{\mathbb{G}}}
\def\sN{{\mathbb{N}}}
\def\sR{{\mathbb{R}}}
\newcommand{\E}{\mathbb{E}}
\newcommand{\R}{\mathbb{R}}
\begin{document}

\maketitle

\begin{abstract}
    Graph neural networks (GNNs) are the de facto standard deep learning architectures for machine learning on graphs.  This has led to a large body of work analyzing the capabilities and limitations of these models,  particularly pertaining to their representation and extrapolation capacity.  We offer a novel theoretical perspective on the representation and extrapolation capacity of GNNs, by answering the question: how do GNNs behave as the number of graph nodes become very large? Under mild assumptions, we show that when we draw graphs of increasing size from the Erd\H{o}s-R{\'e}nyi model, the probability that such graphs are mapped to a particular output by a class of GNN classifiers tends to either \emph{zero} or to \emph{one}. This class includes the popular graph convolutional network architecture. The result establishes `zero-one laws' for these GNNs, and analogously to other convergence laws,  entails theoretical limitations on their capacity.  We empirically verify our results, observing that the theoretical asymptotic limits are evident already on relatively small graphs.
\end{abstract}

\section{Introduction}

Graphs are common structures for representing relational data in a wide range of domains, including physical~\cite{Shlomi21}, chemical~\cite{DuvenaudMABHAA15,KearnesMBPR16}, and biological~\cite{ZitnikAL18,FoutBSB17} systems, which sparked interest in machine learning over graphs. Graph neural networks (GNNs) \cite{Scarselli09,Gori2005} have become prominent models for graph machine learning for a wide range of tasks, owing to their capacity to explicitly encode desirable relational inductive biases \cite{BattagliaGraphNetworks}. 
One important virtue of these architectures is that every GNN model can be applied to \emph{arbitrarily large} graphs, since in principle the model parameters are independent of the graph size. This raises the question: how do GNNs behave as the number of nodes becomes very large? When acting as binary classifiers, GNNs can be thought of as parametrizing Boolean properties of (labelled) graphs. A classical method of specifying such properties is through first-order formulas, which allow for precise definitions using a formal language \cite{Flach2010}. The celebrated `zero-one law' for first-order logic \cite{glebskii1969volume,10.2307/2272945} provides a crisp answer to the question of the asymptotic behaviour of such properties: as graphs of increasing size are drawn from the Erd\H{o}s-R{\'e}nyi distribution, the probability that a \emph{first-order} property holds either tends to \emph{zero} or to \emph{one}.

In this paper, we show an analogous result for binary classification GNNs: under mild assumptions on the model architecture, several GNN architectures including graph convolutional networks~\cite{Kipf16} satisfy a zero-one law over Erd\H{o}s-R{\'e}nyi graphs with random node features. The principal import of this result is that it establishes a novel upper-bound on the expressive power of GNNs: any property of graphs which can be \emph{uniformly} expressed by a GNN must obey a zero-one law. An example of a simple property which does \emph{not} asymptotically tend to zero or one is that of having an even number of nodes. Note however that our result, combined with the manifest success of GNNs in practice, suggests that zero-one laws must be abundant in nature: if a property we cared about did not satisfy a zero-one law, none of the GNN architectures we consider would be able to express it.

Our main results flexibly apply both to the case where we consider the GNN as a classifier applied to randomly sampled graphs and node features, and where we consider the random node features as part of the model. For the latter, it is known that incorporating randomized features into the model significantly increases its expressive power on graphs with \emph{bounded number of nodes} \cite{SatoSDM2020, AbboudCGL21}. Our results yield the first upper bound on the expressive power of these architectures which is \emph{uniform} in the number of nodes. We complement this with a corresponding lower bound, showing that these architectures can universally approximate any property which satisfies a certain zero-one law.

A key strength of the results is that they apply equally well to randomly initialized networks, trained networks, and anything in between. In this sense, our asymptotic analysis is orthogonal to the question of optimization, and holds regardless of the choice of training method. Another interesting aspect of these results is that they unite analysis of expressive power with extrapolation capacity. Our zero-one laws simultaneously provide limits on the ability of GNNs to extrapolate from smaller Erd\H{o}s-R{\'e}nyi graphs to larger ones: eventually any GNN must classify all large graphs the same way.
    
To validate our theoretical findings, we conduct a series of experiments: since zero-one laws are of asymptotic nature, we may need to consider very large graphs to observe clear empirical evidence for the phenomenon. Surprisingly however, GNNs already exhibit clear evidence of a zero-one law even on small graphs. Importantly, this is true for networks with very few layers (even a single-layer), which is reassuring, as it precludes confounding factors, such as the effect of over-smoothing due to increased number of layers~\cite{LiHW18}.  
We provide further experimental results in the appendix of this paper, where all proofs of technical statements can also be found. We make the code for our experiments available online at \url{\githubrepo}.

\section{Preliminaries}

\textbf{Random graphs and matrices.} The focus of our study is on classes of random graphs with random features, for which we introduce some notation. We write $\vx \in \sR^d$ to represent a vector, and $\mX \in \sR^{d \times n}$ to represent a matrix. Analogously, we write $\rvx$ to denote a \emph{random} vector, and $\rmX$ to denote a \emph{random} matrix, whose entries are (real) random variables. 
We write $\sG(n,r)$ to denote the class of simple, undirected Erd\H{o}s-R{\'e}nyi~(ER) graphs with $n$ nodes and edge probability $r$ and let $\sD(d)$ denote some distribution of feature vectors over $\sR^{d}$. We define an Erd\H{o}s-R{\'e}nyi graph equipped with random node features as a pair $\gG=(\rmA,\rmX)$, where $\rmA \sim \sG(n,r)$ 
is the random graph adjacency matrix of the graph $G=(V,E)$ and $\rmX \in \sR^{d \times n}$ is a corresponding random feature matrix, independent of $G$, which contains, for each node $v\in V$, an initial random node feature $\rvx_v \sim \sD(d)$ as the corresponding columns of $\rmX$.\footnote{We define a ${d \times |V|}$ dimensional (random) feature matrix as opposed to the more common ${|V| \times d}$. This is for ease of presentation, since we aim to work on the (random) column vectors of such matrices.}

\textbf{Message passing neural networks.} 
The focus of this work is on \emph{message-passing neural networks (MPNNs)}~\cite{GilmerSRVD17, NIPS2017_5dd9db5e} which encapsulate the vast majority of GNNs. The fundamental idea in MPNNs is to update the initial (random) state vector $\rvx_{v}^{(0)}=\rvx_{v}$ of each node $v$ for $T \in \sN$ iterations, based on its own state and the state of its neighbors $\mathcal{N}(v)$ as:
\begin{equation*}
\rvx_{v}^{(t+1)} = \phi^{(t)}\Big(\rvx_{v}^{(t)}, 
                         \psi^{(t)}  \big(\rvx_{v}^{(t)},\{\!\! \{ \rvx_{u}^{(t)}|~ u \in \mathcal{N}(v) \}\!\!\}\big)\Big), 
\end{equation*}
where $\{\!\!\{\cdot\}\!\!\}$ denotes a multiset, and $\phi^{(t)}$ and $\psi^{(t)}$ are differentiable \emph{combination}, and \emph{aggregation} functions, respectively. 
Each layer's node representations can have different dimensions: we denote by $d(t)$ the dimension of the node embeddings at iteration $t$ and typically write $d$ in place of $d(0)$.

The final node representations can then be used for node-level predictions. For graph-level predictions, the final node embeddings are \emph{pooled} to form a graph embedding vector to predict properties of entire graphs. The pooling often takes the form of simple averaging, summing or component-wise maximum. For Boolean node (resp., graph) classification, we further assume a classifier $\mathfrak{C}: \sR^{d(T)} \to \sB$ which acts on the final node representations (resp., on the final graph representation).

There exist more general message passing paradigms~\cite{BattagliaGraphNetworks} such as \emph{MPNNs with global readout} which additionally aggregate over all node features at every layer, and are known to be more expressive~\cite{BarceloKM0RS20}. Some model architectures considered in this paper include a global readout component and we consider different choices for the combine ($\phi^{(t)}$) and aggregate ($\psi^{(t)}$) functions, as we introduce next.

\textbf{GCN.}
The primary GNN architecture we consider is \emph{graph convolutional networks}~(GCN)~\cite{Kipf16}.
These are instances of MPNNs with self-loops, which aggregate over the extended neighborhood of a node $\mathcal N^+(v) \vcentcolon= \mathcal N(v) \cup \{v\}$. GCNs iteratively update the node representations as $\rvx_{v}^{(t)}  = \sigma \left( \rvy_{v}^{(t)} \right)$, where the preactivations are given by: 
\begin{equation*}
    \rvy_{v}^{(t)} = \mW_n^{(t)} \sum_{ u \in \mathcal{N}^+(v)} \frac 1 {\sqrt{\abs{\mathcal N(u)}\abs{\mathcal N(v)}}}\rvx_{u}^{(t-1)} + \vb^{(t)}
\end{equation*}
We apply the linear transformation $\mW_n^{(t)} \in \mathbb{R}^{d(t) \times d(t-1)}$ to a normalized sum of the activations for the previous layers of the neighbors of the node under consideration, together with its own activation. Adding a bias term $\vb^{(t)}$ yields the preactivation $\rvy_{v}^{(t)}$, to which we apply the non-linearity $\sigma$.

\textbf{\meanbGNN.}
We also consider the \meangGNN architecture which is a \emph{self-loop GNN with mean aggregation and global readout}~\cite{NIPS2017_5dd9db5e}, and updates the node representation as $\rvx_{v}^{(t)}  = \sigma \left( \rvy_{v}^{(t)} \right)$, where:
\begin{equation*}
\rvy_{v}^{(t)}  = 
        \frac 1 {\abs{\mathcal N^+(v)}} \mW_n^{(t)} \sum_{ u \in \mathcal{N}^+(v)} \rvx_{u}^{(t-1)} 
        + \frac 1 n \mW_r^{(t)} \sum_{u \in V} \rvx_{u}^{(t-1)} 
        + \vb^{(t)}
\end{equation*}
\meangGNN models additionally apply a linear transformation $\mW_r^{(t)} \in \mathbb{R}^{d(t) \times d(t-1)}$ to the mean of all previous node representations. We refer to \meanbGNN as the special case of this architecture which does \emph{not} include a global readout term (obtained by dropping the second term in the equation).

\textbf{\sumbGNN.}
Finally, we consider the \sumgGNN architecture which is a \emph{GNN with sum aggregation and global readout} \citep{GilmerSRVD17}, and updates the node representations as $\rvx_{u}^{(t)}  = \sigma \left( \rvy_{u}^{(t)} \right)$, where:
\begin{equation*}
\rvy_{v}^{(t)}  = 
        \mW_s^{(t)} \rvx_{v}^{(t-1)} 
        + \mW_n^{(t)} \sum_{ u \in \mathcal{N}(v)} \rvx_{u}^{(t-1)} 
        + \mW_r^{(t)} \sum_{u \in V} \rvx_{u}^{(t-1)} 
        + \vb^{(t)}
\end{equation*}
This time, we separate out the contribution from the preactivation of the previous activation for the node itself. This yields three linear transformations ${\mW_s^{(t)},\mW_n^{(t)},\mW_r^{(t)} \in \mathbb{R}^{d(t) \times d(t-1)}}$. The corresponding architecture without the global readout term is called \sumbGNN.

\section{Related work}
\label{sec-relatedwork}

Graph neural networks are flexible models which can be applied to graphs of any size following training. This makes an asymptotic analysis in the size of the input graphs very appealing, since such a study could lead to a better understanding of the extrapolation capabilities of GNNs, which is widely studied in the literature~\cite{Yehudai2020FromLS,xu2021how}. Previous studies of the asymptotic behaviour of GNNs have focused on convergence to theoretical limit networks \cite{NEURIPS2020_f5a14d49,NEURIPS2020_12bcd658} and their stability under the perturbation of large graphs \cite{82bc6ac891e6494cbf44089fdf031978, 10.5555/3546258.3546530}.

Zero-one laws have a rich history in first-order logic and random graph theory \cite{glebskii1969volume,10.2307/2272945,Libkin2004,10.2307/1990968,bollobas_2001}. Being the first of its kind in the graph machine learning literature, our study establishes new links between graph representation learning, probability theory, and logic, while also presenting a new and interesting way to analyze the expressive power of GNNs. 
It is well-known that the expressive power of MPNNs is upper bounded by the \emph{1-dimensional Weisfeiler Leman graph isomorphism test (1-WL)}~\cite{Keyulu18,MorrisAAAI19}
and  architectures such as \sumgGNN\cite{MorrisAAAI19} can match this. \citet{BarceloKM0RS20} further gives a logical characterization for a class of MPNNs showing \sumgGNN can capture any function which can be expressed in the logic $\mathsf{C}^2$, which is an extension of the two-variable fragment of first-order logic with counting quantifiers. 
Several works study the expressive power of these models under the assummption that there are \emph{unique node identifiers}~\cite{Loukas20}, or define \emph{higher-order} GNN models~\cite{MorrisAAAI19,MaronBSL19, MaronFSL19,KerivenP19} to obtain more expressive architectures. 

Our work has direct implications on GNNs using random node features~\cite{SatoSDM2020, AbboudCGL21}, which are universal in the bounded graph domain. Specifically, we derive a zero-one law for GNNs using random node features which puts an upper bound on the expressive power of such models in a uniform sense: what class of functions on graphs can be captured by a \emph{single} GNN with random node features? \citet{AbboudCGL21} prove a universality result for these models, but it is not uniform, since the construction depends on the graph sizes, and yields a different model parametrization depending on the choice of the graph sizes. Moreover, the construction of \citet{AbboudCGL21} is of size exponential in the worst case. \citet{grohe2023descriptive} recently improved upon this result, by proving that the functions that can be computed by a \emph{polynomial-size bounded-depth} family of GNNs using random node features are exactly the functions computed by bounded depth Boolean circuits with threshold gates. This establishes an upper bound on the power of GNNs with random node features, by requiring the class of models to be of bounded depth (fixed layers) and of size polynomial. However, this result is still \emph{not} uniform, since it allows the target function to be captured by different model parametrizations. There is no known upper bound for the expressive power of GNNs with random node features in the uniform setting, and our result establishes this.

Other limitations of MPNNs include \emph{over-smoothing} \cite{LiHW18, OonoS20} and \emph{over-squashing} \cite{Alon-ICLR21} which are related to information propagation, and are linked to using more message passing layers. The problem of over-smoothing has also been studied from an asymptotic perspective \cite{LiHW18, OonoS20}, where the idea is to see how the node features evolve as we increase the number of layers in the network.  Our study can be seen as orthogonal to this work: we conduct an asymptotic analysis in the size of the graphs rather than in the number of layers.

\section{Zero-one laws of graph neural networks}

\subsection{Problem statement}

We first define graph invariants following \citet{GroheLogicGNN}.
\begin{definition}
    A \emph{graph invariant} $\xi$ is a function over graphs, such that for any pair of graphs $G_1$, $G_2$, and, for any isomorphism $f$ from $G_1$ to $G_2$ it holds that $\xi(G_1)=\xi(f(G_2))$. Graph invariants for graphs with node features are defined analogously.
\end{definition}

Consider any GNN model $\mathcal{M}$ used for binary graph classification. It is immediate from the definition that $\mathcal{M}$ is invariant under isomorphisms of the graphs on which it acts. Hence $\mathcal{M}$, considered as function from graphs to $\sB = \{0,1\}$, is a graph invariant. In this paper, we study the asymptotic behavior of $\mathcal{M}$ as the number of nodes increases.

One remarkable and influential result from finite model theory is the `zero-one law' for first-order logic. A (Boolean) graph invariant $\xi$ satisfies a \emph{zero-one law} if when we draw graphs $G$ from the ER distribution $\sG(n,r)$, as $n$ tends to infinity the probability that $\xi(G) = 1$ either tends to $0$ or tends to $1$. The result, due to \citet{glebskii1969volume} and \citet{10.2307/2272945}, states that any graph invariant which can be expressed by a first-order formula satisfies a zero-one law.
Inspired by this asymptotic analysis of first-order properties, we ask whether GNNs satisfy a zero-one law. As the input of a GNN is a graph with node features, we need to reformulate the statement of the law to incorporate these features. 
\begin{definition}
Let $\gG=(\rmA,\rmX)$ be a graph with node features, where $\rmA \sim \sG(n,r)$ is a graph adjacency matrix and, independently, $\rmX$ is a matrix of node embeddings, where $\rvx_v \sim \sD(d)$ for every node $v$. A graph invariant $\xi$ for graphs with node features satisfies a \emph{zero-one law} with respect to $\sG(n,r)$ and $\sD(d)$ if, as $n$ tends to infinity, the probability that $\xi(\gG) = 1$ tends to either $0$ or $1$.
\end{definition}

Studying the asymptotic behavior of GNNs helps to shed light on their capabilities and limitations. A zero-one law establishes a limit on the ability of such models to extrapolate to larger graphs: any GNN fitted to a finite set of datapoints will tend towards outputting a constant value on larger and larger graphs drawn from the distribution described above.
A zero-one law in this setting also transfers to a corresponding zero-one law for GNNs with random features. This establishes an upper-bound on the uniform expressive power of such models.

\subsection{Graph convolutional networks obey a zero-one law}

Our main result in this subsection is that (Boolean) GCN classifiers obey a zero-one law. To achieve our result, we place some mild conditions on the model and initial node embeddings.

First, our study covers sub-Gaussian random vectors, which in particular include all \emph{bounded random vectors}, and all \emph{multivariate normal random vectors}. We note that in every practical setup all node features have bounded values (determined by the bit length of the storage medium), and are thus sub-Gaussian.

\begin{definition}\label{def:sub-Gaussian}
    A random vector $\rvx \in \sR^d$ is \emph{sub-Gaussian} if there is $C > 0$  such that for every unit vector $\vy \in \sR^d$ the random variable $\rvx \cdot \vy$ satisfies the \emph{sub-Gaussian property}; that is, for every $t > 0$:
    \begin{equation*}
        \Pr(\abs{\rvx \cdot \vy} \geq t) \leq 2 \exp\left(-\frac{t^2} {C^2}\right)
    \end{equation*}
\end{definition}

Second, we require that the non-linearity $\sigma$ be Lipschitz continuous. This is again a mild assumption, because all non-linearities used in practice are Lipschitz continuous, including $\mathrm{ReLU}$, clipped $\mathrm{ReLU}$, $\mathrm{sigmoid}$, linearized $\mathrm{sigmoid}$ and $\mathrm{tanh}$.

\begin{definition}
    A function $f \colon \R \to \R$ is \emph{Lipschitz continuous} if there is $C > 0$ such that for any $x, y \in \R$ it holds that $\abs{f(x) - f(y)} \leq C\abs{x - y}$.
\end{definition}

Third, we place a condition on the \GCN\ weights with respect to the classifier function $\mathfrak C \colon \sR^{d(T)} \to \sB$, which intuitively excludes a specific weight configuration.

\begin{definition}\label{def:non-splitting GCN}
    Consider a distribution $\sD(d)$ with mean $\vmu$. Let $\mathcal{M}$ be a \GCN\ used for binary graph classification.  Define the sequence $\vmu_0, \ldots, \vmu_T$ of vectors inductively by $\vmu_0 \vcentcolon= \vmu$ and $\vmu_{t} \vcentcolon= \sigma(\mW_n^{(t)} \vmu_{t-1} + \vb^{(t)})$. The classifier classifier $\mathfrak{C}: \sR^{d(T)} \to \sB$ is \emph{non-splitting} for $\mathcal M$ if the vector $\vmu_T$ does not lie on a decision boundary for $\mathfrak C$.
\end{definition}

For all reasonable choices of $\mathfrak C$, the decision boundary has dimension lower than the $d(T)$, and is therefore a set of zero-measure. This means that in practice essentially all classifiers are non-splitting.

Given these conditions, we are ready to state our main theorem:
\begin{theorem}\label{thm:main result gcn}
    Let $\mathcal{M}$ be a \GCN\ used for binary graph classification and take $r \in [0,1]$. Then, $\mathcal{M}$ satisfies a \emph{zero-one law} with respect to graph distribution $\sG(n,r)$ and feature distribution $\sD(d)$ assuming the following conditions hold: (i)~the distribution $\sD(d)$ is sub-Gaussian, (ii)~the non-linearity $\sigma$ is Lipschitz continuous, (iii)~the graph-level representation uses average pooling, and (iv)~the classifier $\mathfrak C$ is non-splitting.
\end{theorem}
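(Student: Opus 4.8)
The plan is to show that, with probability tending to one, every node embedding $\rvx_v^{(T)}$ is close to the deterministic vector $\vmu_T$ of Definition~\ref{def:non-splitting GCN}, so that the average-pooled graph representation converges in probability to $\vmu_T$, after which the non-splitting hypothesis forces the classifier output to stabilize. Concretely, once I establish that $\frac1n\sum_{v}\rvx_v^{(T)} \xrightarrow{p} \vmu_T$, I invoke condition~(iv): since $\vmu_T$ does not lie on the decision boundary of $\mathfrak C$, there is an open ball around $\vmu_T$ on which $\mathfrak C$ is constantly equal to $\mathfrak C(\vmu_T)$, so the event that the graph representation lands in this ball — and hence that $\mathcal M$ outputs $\mathfrak C(\vmu_T)$ — has probability tending to one. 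This produces a limiting probability of $0$ or $1$ according to the value $\mathfrak C(\vmu_T) \in \sB$, which is exactly the zero-one law.

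The heart of the argument is an inductive concentration claim: for each fixed layer $t \in \{1, \dots, T\}$, $\max_v \norm{\rvx_v^{(t)} - \vmu_t} \xrightarrow{p} 0$ as $n \to \infty$. I would first record the standard fact that in $\sG(n,r)$ with $r \in (0,1)$ the degrees concentrate: by a Chernoff bound and a union bound, with probability tending to one every node satisfies $\abs{\mathcal N(v)} = rn(1+o(1))$ uniformly, so every normalizing coefficient $1/\sqrt{\abs{\mathcal N(u)}\abs{\mathcal N(v)}}$ equals $\tfrac{1}{rn}(1+o(1))$ and each neighborhood $\mathcal N^+(v)$ has size $rn(1+o(1))$. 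Conditioning on this high-probability event and on the graph, the layer-$1$ preactivation is a weighted sum of the independent sub-Gaussian vectors $\rvx_u^{(0)}$ with weights summing to $1 + o(1)$; its fluctuation around $\vmu_0$ has conditional variance of order $\sum_{u}(rn)^{-2} = O\big((rn)^{-1}\big)$, so the sub-Gaussian tail bound of Definition~\ref{def:sub-Gaussian} applied coordinatewise, together with a union bound over the $n$ nodes, gives $\max_v \norm{\sum_{u \in \mathcal N^+(v)} \tfrac{1}{\sqrt{\abs{\mathcal N(u)}\abs{\mathcal N(v)}}}\rvx_u^{(0)} - \vmu_0} \xrightarrow{p} 0$. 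Applying the affine map $\mW_n^{(1)}(\cdot) + \vb^{(1)}$ and then the Lipschitz map $\sigma$ yields $\max_v \norm{\rvx_v^{(1)} - \vmu_1} \xrightarrow{p} 0$, the base case.

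For the inductive step I would propagate the bound through a layer using Lipschitz continuity rather than fresh randomness: on the event $\max_u \norm{\rvx_u^{(t)} - \vmu_t} \le \eps_t$, writing $\rvx_u^{(t)} = \vmu_t + \ve_u$ with $\norm{\ve_u} \le \eps_t$, the weighted neighborhood sum equals $(1+o(1))\vmu_t$ plus an error of norm at most $\eps_t$ (the weights summing to $1+o(1)$), so after applying $\mW_n^{(t+1)}$, the bias, and the $C$-Lipschitz $\sigma$ one obtains $\max_v \norm{\rvx_v^{(t+1)} - \vmu_{t+1}} \le C\norm{\mW_n^{(t+1)}}(\eps_t + o(1))$. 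Since the weight norms and $C$ are constants independent of $n$ and there are only $T$ layers, each $\eps_t \xrightarrow{p} 0$, establishing the claim; averaging over the $n$ nodes then gives $\frac1n\sum_v \rvx_v^{(T)} \xrightarrow{p} \vmu_T$, completing the reduction of the first paragraph.

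I expect the main obstacle to be the base case, precisely because the layer-$0$ features do \emph{not} individually concentrate — only their neighborhood averages do — so the uniform-in-$v$ control must come from carefully combining degree concentration, conditional sub-Gaussian concentration of the weighted sums over $\approx rn$ independent features, and a union bound whose exponential decay $n\exp(-\Omega(rn))$ beats the polynomial node count. Care is also needed to verify that the weights $1/\sqrt{\abs{\mathcal N(u)}\abs{\mathcal N(v)}}$ sum to $1 + o(1)$ uniformly, and to dispatch the degenerate boundary cases $r \in \{0,1\}$ (the empty and complete graphs, where the graph is deterministic) by a direct law-of-large-numbers argument over the i.i.d. node computations.
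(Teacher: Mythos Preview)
Your proposal is correct and follows essentially the same route as the paper: it proves the concentration statement $\max_v\norm{\rvx_v^{(t)}-\vmu_t}\xrightarrow{p}0$ (the paper's \cref{lem:node features asymptotically constant gcn}) by combining degree concentration, sub-Gaussian Hoeffding bounds for the layer-$1$ neighborhood averages, and Lipschitz propagation through subsequent layers, then finishes via average pooling and the non-splitting hypothesis. Your explicit observation that the normalizing weights sum to $1+o(1)$, your careful union bound $n\exp(-\Omega(rn))$, and your separate handling of $r\in\{0,1\}$ are details the paper either leaves implicit or omits, but the overall architecture of the argument is the same.
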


The proof hinges on a probabilistic analysis of the preactivations in each layer. We use a sub-Gaussian concentration inequality to show that the deviation of each of the first-layer preactivations $\rvy_v^{(1)}$ from its expected value becomes less and less as the number of node $n$ tends to infinity. Using this and the fact that $\sigma$ is Lipschitz continuous we show then that each activation $\rvx_v^{(1)}$ tends towards a fixed value. Iterating this analysis through all the layers of the network yields the following key lemma, which is the heart of the argument.

\begin{lemma}\label{lem:node features asymptotically constant gcn}
    Let $\mathcal M$ and $\sD(d)$ satisfy the conditions in \cref{thm:main result gcn}. Then, for every layer $t$, there is $\vz_t \in \R^{d(t)}$ such that when sampling a graph with node features from $\sG(n, r)$ and $\sD(d)$, for every $i \in \{1, \ldots, d(t)\}$ and for every ${\epsilon} > 0$ we have that:
    \begin{equation*}
        \Pr\left(\forall v \colon \abs*{\left[\rvx_v^{(t)} - \vz_t\right]_i} < \epsilon\right) \to 1 \quad\text{as }n \to \infty
    \end{equation*}
\end{lemma}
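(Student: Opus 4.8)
The plan is to prove the statement by induction on the layer $t$, taking $\vz_t = \vmu_t$ from \cref{def:non-splitting GCN} (so that $\vz_0 = \vmu$ is the mean of $\sD(d)$ and $\vz_t = \sigma(\mW_n^{(t)}\vz_{t-1} + \vb^{(t)})$). The engine of the argument is to show that the degree-normalised neighbourhood aggregate
\[
\rvs_v^{(t-1)} \vcentcolon= \sum_{u \in \mathcal{N}^+(v)} \frac{1}{\sqrt{\abs{\mathcal{N}(u)}\abs{\mathcal{N}(v)}}} \rvx_u^{(t-1)}
\]
concentrates, simultaneously for all nodes $v$, around $\vz_{t-1}$. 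Once this is established, $\rvy_v^{(t)} = \mW_n^{(t)}\rvs_v^{(t-1)} + \vb^{(t)}$ concentrates around $\mW_n^{(t)}\vz_{t-1} + \vb^{(t)}$, and applying the Lipschitz non-linearity $\sigma$ (with the fixed norm $\norm{\mW_n^{(t)}}$ as an amplification factor) transfers the concentration to $\rvx_v^{(t)} = \sigma(\rvy_v^{(t)})$ around $\vz_t$, coordinate by coordinate.

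First I would record a degree-concentration fact. Since each degree $\abs{\mathcal{N}(v)}$ is $\Bin(n-1, r)$ distributed, a Chernoff bound together with a union bound over the $n$ nodes shows that, for any fixed $\delta > 0$, the event $D_n$ that every degree lies in $(1 \pm \delta)(n-1)r$ has probability tending to $1$ (for $r \in (0,1)$; the degenerate cases $r \in \{0,1\}$ are treated separately, with $r = 1$ giving deterministic degrees). On $D_n$ the normalising coefficients $a_u^{(v)} = (\abs{\mathcal{N}(u)}\abs{\mathcal{N}(v)})^{-1/2}$ obey two estimates that drive everything: the coefficient sum $\sum_{u \in \mathcal{N}^+(v)} a_u^{(v)} \to 1$ (the neighbours contribute $\approx \abs{\mathcal{N}(v)}/(nr) \approx 1$ while the self-term is $\approx 1/(nr) \to 0$), and the sum of squares $\sum_{u} (a_u^{(v)})^2 \approx 1/(nr) \to 0$, both uniformly in $v$.

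The base case $t = 1$ carries the genuine probabilistic content. Conditioning on a graph $G \in D_n$ — legitimate because the features are independent of $G$ — the quantities $\vw_i^\top \rvx_u$, where $\vw_i^\top$ is the $i$-th row of $\mW_n^{(1)}$, are independent sub-Gaussian random variables across $u$, so the weighted sum $\sum_u a_u^{(v)} \vw_i^\top \rvx_u$ is sub-Gaussian with variance proxy controlled by $\norm{\vw_i}^2 \sum_u (a_u^{(v)})^2 = O(1/(nr))$ and mean $(\sum_u a_u^{(v)})\,\vw_i^\top\vmu \to \vw_i^\top \vmu$. The sub-Gaussian tail bound gives, for each fixed $v$ and $i$, a deviation bound of the form $2\exp(-c\, nr\, \epsilon^2)$; a union bound over the $n$ nodes and the finitely many coordinates costs a factor $n$, and $n\exp(-c\,nr\,\epsilon^2) \to 0$, so $\rvy_v^{(1)}$, and hence $\rvx_v^{(1)}$, concentrates around $\vz_1$ for all $v$ simultaneously with probability $\to 1$.

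For the inductive step $t - 1 \to t$ with $t \geq 2$ no new randomness is needed beyond $D_n$ and the inductive hypothesis: on the event where all $\rvx_u^{(t-1)}$ lie within $\eta$ of $\vz_{t-1}$ and all degrees concentrate, the triangle inequality gives $\norm{\rvs_v^{(t-1)} - \vz_{t-1}} \leq (\sum_u a_u^{(v)})\,\eta + \abs{\sum_u a_u^{(v)} - 1}\,\norm{\vz_{t-1}}$, and both terms are small by taking $\eta$ small and using the coefficient-sum estimate. Since the inductive hypothesis supplies this event with probability $\to 1$ for \emph{every} $\eta > 0$, and the $\eta$ required to reach a target $\epsilon$ at layer $t$ depends only on $\epsilon$, the Lipschitz constant, $\norm{\mW_n^{(t)}}$ and $\norm{\vz_{t-1}}$, the conclusion propagates. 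I expect the main obstacle to be the base case: making the conditioning on the graph rigorous and verifying that the sub-Gaussian deviation scale $1/\sqrt{nr}$ beats the union-bound factor $n$ uniformly, while simultaneously absorbing the $O(\delta)$ and $O(1/\sqrt{nr})$ bias coming from the approximation $\sum_u a_u^{(v)} \approx 1$ (handled by first fixing $\delta$ small and then letting $n \to \infty$).
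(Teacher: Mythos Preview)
Your proposal is correct and follows essentially the same route as the paper: degree concentration via a Chernoff-type bound, a sub-Gaussian Hoeffding inequality for the first-layer aggregate (with a union bound over nodes and coordinates), and then a purely deterministic propagation through later layers using the Lipschitz property of $\sigma$. If anything your outline is slightly cleaner than the paper's version, since you make explicit the conditioning on the graph and the fact that $\sum_{u\in\mathcal N^+(v)} a_u^{(v)} \to 1$, which is what pins down the limit as exactly $\vmu_t$; the paper handles this somewhat implicitly.
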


With the lemma established, the proof of \cref{thm:main result gcn} follows straightforwardly from the last two assumptions. Since the final node embeddings $\rvx_v^{(T)}$ tend to a fixed value $\vz_T$, the average-pooled graph-level representations also tend to this. Since we assume that the classifier is non-splitting, this value cannot lie on a decision boundary, and thus the final output is asymptotically stable at $\mathfrak C(\vz_T)$.

We expect that the \emph{rate} of converge will depend in a complex way on the number of layers, the embedding dimensionality, and the choice of non-linearity, which makes a rigorous analysis very challenging. However, considering the manner in which \cref{lem:node features asymptotically constant gcn} is proved, we can arrive at the following intuitive argument for why the rate of convergence should decrease as the embedding dimensionality increases: if we fix a node $v$ and a layer $t$ then each of the components of its preactivation can be viewed as the weighted sum of $d(t-1)$ random variables, each of which is the aggregation of activations in the previous layer. Intuitively, as $d(t-1)$ increases, the variance of this sum also increases. This increased variance propagates through the network, resulting in a higher variance for the final node representations and thus a slower convergence.

Using analogous assumptions and techniques to those presented in this section, we also establish a zero-one law for \meangGNN, which we report in detail in \Cref{app:mean}. Abstracting away from technicalities, the overall structure of the proofs for \meangGNN and \GCN is very similar, except that for the former case we additionally need to take care of the global readout component. 

\subsection{Graph neural networks with sum aggregation obey a zero-one law}

The other variant of GNNs we consider are those with sum aggregation. The proof in the case works rather differently, and we place different conditions on the model.

\begin{definition}
    A function $\sigma \colon \R \to \R$ is \emph{eventually constant in both directions} if there are $x_{-\infty}, x_{\infty} \in \sR$ such that  $\sigma(y)$ is constant for $y < x_{-\infty}$ and $\sigma(y)$ is constant for $y > x_{\infty}$.  We write $\sigma_{-\infty}$ to denote the minimum and $\sigma_\infty$ to denote the maximum value of an eventually constant function $\sigma$.
\end{definition}

This means that there is a threshold ($x_{-\infty}$) below which sigma is constant, and another threshold ($x_\infty$) above which sigma is constant. Both the linearized $\mathrm{sigmoid}$ and clipped $\mathrm{ReLU}$ are eventually constant in both directions. Moreover, when working with finite precision any function with vanishing gradient in both directions (such as $\mathrm{sigmoid}$) can be regarded as eventually constant in both directions.

We also place the following condition on the weights of the model with respect to the mean of $\sD(d)$ and the edge-probability $r$.
\begin{definition}\label{def:synchronously saturating}
    Let $\mathcal{M}$ be any \sumgGNN\ for binary graph classification with a non-linearity $\sigma$ which is eventually constant in both directions. Let $\sD(d)$ be any distribution with mean $\vmu$, and let $r \in [0,1]$. Then, the model $\mathcal{M}$ is \emph{synchronously saturating} for $\sG(n,r)$ and $\sD(d)$ if the following conditions hold:
    \begin{enumerate}[leftmargin=.75cm]
        \item For each $1 \leq i \leq d(1)$:
        \begin{equation*}
            \left[(r\mW_n^{(1)} + \mW_g^{(1)})\vmu\right]_i \neq 0
        \end{equation*}
        \item For every layer $1 < t \leq T$, for each $1 \leq i \leq d(t)$ and for each $\vz \in \{\sigma_{-\infty}, \sigma_{\infty}\}^{d(t-1)}$:
        \begin{equation*}
            \left[(r\mW_n^{(t)} + \mW_g^{(t)})\vz\right]_i \neq 0
        \end{equation*}
    \end{enumerate}
\end{definition}

Analysis of our proof of the zero-one law for \sumgGNN models (\cref{thm:main result sum} below) reveals that the asymptotic behaviour is determined by the matrices $\mQ_t \coloneqq r\mW_n^{(t)} + \mW_g^{(t)}$, where the asymptotic final layer embeddings are $\sigma(\mQ_T(\sigma(\mQ_{T-1} \cdots \sigma(\mQ_0 \vmu) \cdots ))))$. To be synchronously saturating is to avoid the boundary case where one of the intermediate steps in the asymptotic final layer embedding computation has a zero component.

Similarly to the case of a non-splitting classifier, the class of synchronously saturating models is very wide. Indeed, the space of models which are not synchronously saturating is the union of solution space of each equality (i.e. the negation of an inequality in \cref{def:synchronously saturating}). Thus, assuming that $\vmu$, $\sigma_{-\infty}$ and $\sigma_{\infty}$ are non-zero, the space of non-synchronously-saturating models has lower dimension than the space of all models, and thus has measure zero.

With these definitions in place we can now lay out the main result:

\begin{theorem}\label{thm:main result sum}
    Let $\mathcal{M}$ be a \sumgGNN\ model used for binary graph classification and take $r \in [0,1]$. Then, $\mathcal{M}$ satisfies a \emph{zero-one law} with respect to graph distribution $\sG(n,r)$ and feature distribution $\sD(d)$ assuming the following conditions hold: (i) the distribution $\sD(d)$ is sub-Gaussian, (ii) the non-linearity $\sigma$ is eventually constant in both directions, (iii) the graph-level representation uses either average or component-wise maximum pooling, and (iv) $\mathcal{M}$ is synchronously saturating for $\sG(n,r)$ and $\sD(d)$.
\end{theorem}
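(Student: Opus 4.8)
The plan is to prove an analogue of \cref{lem:node features asymptotically constant gcn}, but exploiting the fact that here the non-linearity is eventually constant rather than merely Lipschitz: I will show by induction on the layer $t$ that there is a fixed vector $\vz_t \in \{\sigma_{-\infty}, \sigma_\infty\}^{d(t)}$ such that, sampling from $\sG(n,r)$ and $\sD(d)$, the probability that $\rvx_v^{(t)} = \vz_t$ for \emph{every} node $v$ tends to $1$ as $n \to \infty$. The crucial structural observation, already flagged before the theorem, is that with sum aggregation both the neighbourhood sum $\sum_{u \in \mathcal N(v)} \rvx_u^{(t-1)}$ and the global sum $\sum_{u \in V} \rvx_u^{(t-1)}$ scale like $n$, so the preactivation $\rvy_v^{(t)}$ is dominated by the term $n\,\mQ_t \vz_{t-1}$ with $\mQ_t = r\mW_n^{(t)} + \mW_g^{(t)}$; since $\sigma$ saturates, each coordinate is driven to $\sigma_\infty$ or $\sigma_{-\infty}$ according to the sign of $[\mQ_t \vz_{t-1}]_i$.

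For the base case $t = 1$ I would condition on the random graph and use that, independently, the features $\rvx_v = \rvx_v^{(0)}$ are i.i.d.\ sub-Gaussian with mean $\vmu$. Since $\abs{\mathcal N(v)} \sim \Bin(n-1,r)$, a Chernoff bound together with a union bound over the $n$ nodes shows that with high probability every degree satisfies $\abs{\mathcal N(v)} = rn + o(n)$, while the sub-Gaussian property (applied to the centred features $\rvx_u - \vmu$ along each standard basis vector) combined with a union bound shows that with high probability $\sum_{u \in \mathcal N(v)} \rvx_u = \abs{\mathcal N(v)}\vmu + o(n)$ for every $v$ and $\sum_{u \in V}\rvx_u = n\vmu + o(n)$; here the exponential tail decay of the sub-Gaussian inequality comfortably beats the factor of $n$ coming from the union bound, because the signal is of order $n$ whereas the surviving fluctuations are of order $\sqrt{n\log n}$. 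Hence $\rvy_v^{(1)} = n\,\mQ_1 \vmu + o(n)$ uniformly in $v$, and the first condition of \cref{def:synchronously saturating}, namely $[\mQ_1\vmu]_i \neq 0$, forces each coordinate of $\rvy_v^{(1)}$ past the saturation threshold $x_\infty$ (if positive) or $x_{-\infty}$ (if negative). I then set $[\vz_1]_i \vcentcolon= \sigma_\infty$ when $[\mQ_1\vmu]_i > 0$ and $[\vz_1]_i \vcentcolon= \sigma_{-\infty}$ otherwise.

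The inductive step is cleaner, since the randomness of the features has been eliminated: on the high-probability event that $\rvx_v^{(t-1)} = \vz_{t-1}$ for all $v$, the two aggregation sums become \emph{exactly} $\abs{\mathcal N(v)}\vz_{t-1}$ and $n\vz_{t-1}$, so $\rvy_v^{(t)} = n\,\mQ_t\vz_{t-1} + (\abs{\mathcal N(v)} - rn)\mW_n^{(t)}\vz_{t-1} + \mW_s^{(t)}\vz_{t-1} + \vb^{(t)}$, and only the degree concentration $\abs{\mathcal N(v)} = rn + o(n)$ is needed to conclude that the $n\,\mQ_t\vz_{t-1}$ term dominates. The second condition of \cref{def:synchronously saturating}, stated precisely for $\vz = \vz_{t-1} \in \{\sigma_{-\infty},\sigma_\infty\}^{d(t-1)}$, guarantees $[\mQ_t\vz_{t-1}]_i \neq 0$, so $\sigma$ again saturates coordinate-wise and defines $\vz_t$. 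Iterating to $t = T$ yields a fixed $\vz_T$ with $\Pr(\forall v \colon \rvx_v^{(T)} = \vz_T) \to 1$. Finally, both average pooling and component-wise maximum pooling of identical vectors $\vz_T$ return $\vz_T$ exactly, so the graph-level representation equals $\vz_T$ with probability tending to $1$ and the output is the constant $\mathfrak C(\vz_T)$; note that, in contrast to \cref{thm:main result gcn}, this exact saturation is why no non-splitting hypothesis on $\mathfrak C$ is required. This establishes the zero-one law.

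I expect the main obstacle to be the base case, where the graph and feature randomness are simultaneously in play: one must condition on the neighbourhood structure before applying the sub-Gaussian concentration inequality to the neighbourhood feature sums, and then ensure the per-node failure probabilities decay fast enough to survive the union bound over all $n$ nodes (and all $d(1)$ coordinates) uniformly. Once the features saturate to exact constants, the remaining layers involve only the comparatively benign binomial degree concentration, so the conceptual weight of the argument sits entirely in this first layer.
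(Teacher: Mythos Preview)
Your proposal is correct and follows essentially the same route as the paper: the key lemma (\cref{lem:node features asymptotically constant sum}) that all node embeddings equal a fixed $\vz_t \in \{\sigma_{-\infty},\sigma_\infty\}^{d(t)}$ with high probability, proved by induction on layers, with the synchronously saturating condition supplying the nonzero sign at each step, and then pooling of identical vectors finishing the argument. The one organizational difference is in the base case: you separate the two sources of randomness (first concentrate all degrees via Chernoff, then concentrate the neighbourhood feature sums conditionally on the graph), whereas the paper packages them together by writing $\rvy_v = \sum_u \rvw_{u,v} + \vb$ with $\rvw_{u,v} = (\ermA_{uv}\mW_n + \mW_g)\rvx_u^{(0)}$ for $u \neq v$, observes that each $[\rvw_{u,v}]_i$ is sub-Gaussian (product of an independent Bernoulli with a sub-Gaussian), and applies Hoeffding once to the whole sum. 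Both are valid; the paper's packaging is slightly slicker since it avoids the conditioning step and handles graph and feature randomness in a single concentration bound, while your decomposition makes the $n\,\mQ_1\vmu$ heuristic more transparent.
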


The proof works differently than the \GCN and \selfgGNN cases, but still rests on a probabilistic analysis of the preactivations in each layer. Assuming that $\mathcal{M}$ is synchronously saturating for $\sG(n,r)$ and $\sD(d)$, we can show that the expected absolute value of each preactivation tends to infinity as the number of nodes increases, and that moreover the probability that it lies below any fixed value tends to $0$ exponentially. Hence, the probability that all node embeddings after the first layer are the same and have components which are all $\sigma_{-\infty}$ or $\sigma_\infty$ tends to $1$. We then extend this analysis to further layers, using the fact that $\mathcal{M}$ is synchronously saturating, which yields inductively that all node embeddings are the same with probability tending to $1$, resulting in the following key lemma.

\begin{lemma}\label{lem:node features asymptotically constant sum}
    Let $\mathcal M$, $\sD(d)$ and $r$ be as in \cref{thm:main result sum}. Let $\sigma_{-\infty}$ and $\sigma_\infty$ be the extremal values taken by the non-linearity. Then, for every layer $t$, there is $\vz_t \in \{\sigma_{-\infty}, \sigma_\infty\}^{d(t)}$ such that when we sample graphs with node features from $\sG(n,r)$ and $\sD(d)$ the probability that $\rvx_v^{(t)} = \vz_t$ for every node $u$ tends to $1$ as $n$ tends to infinity.
\end{lemma}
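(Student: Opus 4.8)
The plan is to prove \cref{lem:node features asymptotically constant sum} by induction on the layer $t$, establishing at each step that all node embeddings collapse to a common vector in $\{\sigma_{-\infty}, \sigma_\infty\}^{d(t)}$ with probability tending to $1$. The central mechanism differs sharply from the \GCN case: rather than showing preactivations \emph{concentrate} around a finite value, here I want to show they are pushed to $\pm\infty$ and thus land in the flat (saturated) regions of $\sigma$. The key structural observation, noted after \cref{def:synchronously saturating}, is that both the neighborhood sum $\sum_{u \in \mathcal N(v)} \rvx_u^{(t-1)}$ and the global readout sum $\sum_{u \in V} \rvx_u^{(t-1)}$ grow linearly in $n$, so after multiplying by the weight matrices the preactivation $\rvy_v^{(t)}$ is dominated by the term $(r\mW_n^{(t)} + \mW_g^{(t)})$ applied to the (common, in the inductive step) node value, scaled by $n$.

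For the \textbf{base case} ($t=1$), I would fix a coordinate $i$ and a node $v$ and analyze the $i$-th preactivation component. The self-loop and bias terms are $O(1)$ and negligible. The dominant contribution is $\mW_n^{(1)} \sum_{u \in \mathcal N(v)} \rvx_u^{(0)} + \mW_g^{(1)} \sum_{u \in V} \rvx_u^{(0)}$. Since $\rvx_u^{(0)} \sim \sD(d)$ has mean $\vmu$ and the expected degree is $r(n-1)$, the expectation of this quantity is approximately $n(r\mW_n^{(1)} + \mW_g^{(1)})\vmu$, whose $i$-th component is nonzero by condition~1 of \cref{def:synchronously saturating}. So the expected preactivation magnitude grows like $n \cdot |[(r\mW_n^{(1)}+\mW_g^{(1)})\vmu]_i|$, which tends to $\infty$ with a fixed sign. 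I then need a concentration bound: the sum is over roughly $n$ independent sub-Gaussian vectors, so its deviation from the mean is $O(\sqrt{n}\,\log n)$ with high probability (using the sub-Gaussian tail from \cref{def:sub-Gaussian}, plus independence of the features; the adjacency randomness affects which terms appear but the neighborhood size itself concentrates via a Chernoff/binomial bound). Since the signal grows like $n$ and the fluctuation like $\sqrt{n}$, the probability that $\rvy_v^{(1)}$ has the ``wrong'' sign — or lies below any fixed threshold $x_\infty$ (resp.\ above $x_{-\infty}$) — decays exponentially in $n$. A union bound over all $n$ nodes and all $d(1)$ coordinates still tends to $0$, so with probability $\to 1$ every node's preactivation is saturated, giving $\rvx_v^{(1)} = \vz_1$ for the common saturated value $\vz_1 \in \{\sigma_{-\infty},\sigma_\infty\}^{d(1)}$ determined by the signs of the components of $(r\mW_n^{(1)}+\mW_g^{(1)})\vmu$.

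For the \textbf{inductive step}, I condition on the high-probability event that all previous-layer embeddings equal a single vector $\vz_{t-1} \in \{\sigma_{-\infty},\sigma_\infty\}^{d(t-1)}$. On this event the neighborhood sum is exactly $\abs{\mathcal N(v)}\,\vz_{t-1}$ and the global sum is exactly $n\,\vz_{t-1}$, so the preactivation becomes deterministic given the degrees: $\rvy_v^{(t)} \approx (\abs{\mathcal N(v)}\mW_n^{(t)} + n\mW_g^{(t)})\vz_{t-1} + \mW_s^{(t)}\vz_{t-1} + \vb^{(t)}$. Using $\abs{\mathcal N(v)} \approx rn$ (concentrated via Chernoff), the dominant term is $n(r\mW_n^{(t)}+\mW_g^{(t)})\vz_{t-1}$, whose $i$-th component is nonzero by condition~2 of \cref{def:synchronously saturating}, again giving magnitude $\to\infty$ with a fixed sign and hence saturation to a common $\vz_t$. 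I then remove the conditioning: the target event is a superset of the intersection of the inductive event and the degree-concentration event, both of probability $\to 1$, so the conclusion follows. The \textbf{main obstacle} is handling the two sources of randomness jointly and cleanly — the Erd\H{o}s--R\'enyi edge randomness determines which features are summed in the neighborhood term while the features supply an independent sub-Gaussian sum — and ensuring the $\sqrt{n}$-scale fluctuation bound survives the union bound over all $n$ nodes uniformly, which requires the tail decay to be exponential in $n$ rather than merely polynomial. Carefully separating the degree concentration from the feature concentration, and verifying that the linear-in-$n$ signal strictly dominates in every coordinate, is where the real work lies.
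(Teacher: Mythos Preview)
Your proposal is correct and follows the same inductive skeleton as the paper: show each first-layer preactivation has expectation growing linearly in $n$ with sign fixed by $[(r\mW_n^{(1)}+\mW_g^{(1)})\vmu]_i$, bound the fluctuation at scale $\sqrt n$ via sub-Gaussian concentration, union-bound over all $n$ nodes and $d(1)$ coordinates to force simultaneous saturation, and in the inductive step condition on the event that every layer-$(t{-}1)$ embedding equals $\vz_{t-1}$ so that the only residual randomness is the binomial degree.

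The one place the paper is cleaner is precisely what you flag as the ``main obstacle''. Instead of separating the edge randomness from the feature randomness in the base case, the paper absorbs both into a single summand by writing $\rvy_v = \sum_{u=1}^n \rvw_{u,v} + \vb$ with
\[
  \rvw_{u,v} \;=\; (\ermA_{uv}\mW_n + \mW_g)\,\rvx_u^{(0)}\,1_{u\ne v} \;+\; (\mW_s+\mW_g)\,\rvx_u^{(0)}\,1_{u=v}.
\]
For fixed $v$ these $n$ summands are mutually independent (each involves a distinct feature vector and, for $u\ne v$, a distinct adjacency entry), and each component is sub-Gaussian since the product of a sub-Gaussian variable with an independent Bernoulli is sub-Gaussian. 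One application of Hoeffding's inequality then yields the exponential tail $\Pr(\abs{[\rvy_v - \Ex\rvy_v]_i}\ge t)\le 2\exp(-ct^2/(K^2 n))$ directly, with no separate Chernoff step on the degree. Your two-stage route (concentrate $\abs{\mathcal N(v)}$ first, then concentrate the feature sum given the neighborhood) also goes through, but the combined decomposition dissolves the obstacle you identified rather than working around it.
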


The final classification output must therefore be the same asymptotically, since its input consists of node embeddings which always take the same value.

\section{Graph neural networks with random node features}

Up to this point we have been considering the graph plus node features as the (random) input to GNNs. In this section, we make a change in perspective and regard the initial node features as part of the model, so that its input consists solely of the graph without features. We focus in this section on \sumgGNN. Adding random initial features to GNNs is known to increase their power \cite{SatoSDM2020}.

Note that \cref{thm:main result sum} immediately yields a zero-one law for these models. This places restrictions on what can be expressed by \sumgGNN models with random features subject to the conditions of \cref{thm:main result sum}. For example, it is not possible to express that the number of graph nodes is even, since the property of being even does not satisfy a zero-one law with respect to any $r \in [0,1]$.

It is natural to wonder how tight these restrictions are: what precisely is the class of functions which can be approximated by these models? Let us first formalize the notion of approximation.

\begin{definition}
    Let $f$ be a Boolean function on graphs, and let $\zeta$ be a random function on graphs. Take $\delta > 0$. Then $\zeta$ \emph{uniformly $\delta$-approximates} $f$ if:
    \begin{equation*}
        \forall n \in \sN \colon \Pr(\zeta(G) = f(G) \mid \abs G = n) \geq 1 - \delta
    \end{equation*}
    when we sample $G \sim \sG(n, 1/2)$.
\end{definition}

The reason for sampling graphs from $\sG(n, 1/2)$ is that under this distribution all graphs on $n$ nodes are equally likely. Therefore, the requirement is the same as that for every $n \in \sN$ the proportion of $n$-node graphs on which $\zeta(G) = f(G)$ is at least $1 - \delta$.

Building on results due to \citet{AbboudCGL21}, we show a partial converse to \cref{thm:main result sum}: if a graph invariant satisfies a zero-one law for $\sG(n, 1/2)$ then it can be universally approximated by a \sumgGNN with random node features.

\begin{theorem}\label{thm:uniform expressivity RNI}
    Let $\xi$ be any graph invariant which satisfies a zero-one law with respect to $\sG(n, 1/2)$. Then, for every $\delta > 0$ there is a \sumgGNN\ with random node features $\mathcal M$ which uniformly $\delta$-approximates $\xi$.
\end{theorem}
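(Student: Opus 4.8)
The plan is to combine the zero-one law, which pins down the behaviour of $\xi$ on all sufficiently large graphs, with the bounded-size universality result of \citet{AbboudCGL21}, which handles the remaining finitely many graph sizes. Without loss of generality I assume $\xi$ satisfies the \emph{one}-law, so that $\Pr(\xi(G) = 1 \mid \abs G = n) \to 1$ as $n \to \infty$ when $G \sim \sG(n, 1/2)$; the zero-law case is entirely symmetric, replacing the Boolean combination below by its dual. Fix $\delta > 0$. By the zero-one law there is a threshold $N \in \sN$ with $\Pr(\xi(G) = 1 \mid \abs G = n) \geq 1 - \delta$ for every $n > N$, so on these large graphs a model that simply outputs the constant $1$ already $\delta$-approximates $\xi$.

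It remains to handle graphs of size at most $N$. Here I would invoke \citet{AbboudCGL21}: since there are only finitely many such sizes, their (non-uniform) universality result yields a \sumgGNN with random node features $\mathcal M'$ whose output $a(G) \in \sB$ agrees with $\xi(G)$ with probability at least $1 - \delta$ on every graph $G$ with $\abs G \leq N$. The difficulty is that $\mathcal M'$ carries no guarantee on larger graphs, so I cannot use it directly; I must override it to the constant $1$ once $\abs G > N$ while leaving its behaviour untouched for $\abs G \leq N$.

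The mechanism for this is a size-detecting gate running in parallel with $\mathcal M'$. Because a layer may zero out its weight matrices on one coordinate and use only the bias there, the model can manufacture a coordinate that is constant across all nodes; the global readout $\sum_{u \in V} \rvx_u$ of a subsequent layer then computes a fixed multiple of the node count $n$, and a bounded number of clipping/threshold gates turn this into the indicator bit $b(G) = \1[\abs G > N]$, which is exact at every integer $n$ since the ramp need only separate the counts for $N$ and $N+1$. I would carry both $a(G)$ and $b(G)$ to the final graph-level representation and have the classifier compute the disjunction $b(G) \lor a(G)$: when $\abs G > N$ we have $b = 1$ and the output is $1$, and when $\abs G \leq N$ we have $b = 0$ and the output equals $a(G)$. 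Correctness then follows by cases: for $\abs G \leq N$ the output matches $\xi$ with probability $\geq 1 - \delta$ by the guarantee on $\mathcal M'$, and for $\abs G > N$ the output is the constant $1$, which matches $\xi$ with probability $\geq 1 - \delta$ by the choice of $N$.

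The main obstacle is realising this composition \emph{within a single \sumgGNN}: one must verify that the constant-coordinate trick, the readout-based counter, the exact threshold, and the final disjunction are all implementable with the available linear maps, biases and non-linearity, and --- crucially --- that appending these auxiliary coordinates does not perturb the computation of $\mathcal M'$ on small graphs, so that its approximation guarantee is preserved. This amounts to an explicit but routine widening of $\mathcal M'$ by a constant number of coordinates together with a bounded number of gating layers; the genuine content of the argument lies in the reduction to bounded graph size via the zero-one law together with the bounded-size universality of \citet{AbboudCGL21}, which is exactly the partial converse to \cref{thm:main result sum} that we seek.
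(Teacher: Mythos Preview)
Your proposal is correct and follows essentially the same approach as the paper: use the zero-one law to find a threshold $N$, invoke \citet{AbboudCGL21} for graphs of size at most $N$, and build a size-detecting gate via the global readout (setting a constant coordinate with the bias, summing it to count nodes, and thresholding at $N$) so that the model outputs the asymptotic constant on large graphs and defers to $\mathcal M'$ on small ones. The paper's construction is exactly this, differing only in presentation (it speaks of adding a ``$0$th component'' and at least three layers rather than of a parallel gate and a disjunction in the classifier).
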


The basis of the proof is a result due to \citet{AbboudCGL21} which states that a \sumgGNN with random node features can approximate any graph invariant \emph{on graphs of bounded size}. When the graph invariant satisfies a zero-one law, we can use the global readout to count the number of nodes. Below a certain threshold, we use the techniques of \citet{AbboudCGL21} to approximate the invariant, and above the threshold we follow its asymptotic behavior. We emphasise that the combination of these techniques yields a model which provides an approximation which is uniform across all graph sizes.

\section{Experimental evaluation}

We empirically verify our theoretical findings on a carefully designed synthetic experiment using ER graphs with random features. The goal of these experiments is to answer the following questions for each model under consideration: 
\begin{enumerate}[label={},leftmargin=.2cm]
    \item \textbf{Q1.} Do we empirically observe a zero-one law? 
    \item \textbf{Q2.} What is the rate of convergence like empirically? 
    \item \textbf{Q3.} What is the impact of the number of layers on the convergence?
\end{enumerate}

\subsection{Experimental setup} We report experiments for GCN, \meanbGNN, and \sumbGNN. The following setup is carefully designed to eliminate confounding factors: 

\begin{enumerate} [$\bullet$,leftmargin=.75cm]
\item We consider $10$ GNN models of the same architecture each with \emph{randomly initialized weights}, where each weight is sampled independently from $U(-1,1)$. The non-linearity is eventually constant in both directions: identity between $[-1,1]$, and truncated to $-1$ if the input is smaller than $-1$, and $1$ if the input is greater than $1$. In the appendix we include additional experiments in which test other choices of non-linearity (see \cref{ssec:other non-linearities}). We apply mean pooling to yield a final representation $\rvz_G \in \sR^d$ of the input graph.
\item For every model, we apply a final classifier $\sigma(f): \sR^d \to \sB $ where $f$ is a 2-layer MLP with \emph{random weights} and with $\mathrm{tanh}$ activation, which outputs a real value, and $\sigma$ is the sigmoid function. Graphs are classified as $1$ if the output of the sigmoid is greater than $0.5$, and $0$ otherwise.
\item The input graphs are drawn from $\sG(n, 1/2)$ with corresponding node features independently drawn from $U(0,1)$.
\item We conduct these experiments with three choices of layers: $10$ models with $T=1$ layer, $10$ models with $T=2$ layers, and $10$ models with $T=3$ layers. 
\end{enumerate}

The goal of these experiments is to understand the behavior of the respective GNN graph classifiers with mean-pooling, as we draw larger and larger ER graphs. Specifically, each model classifies graphs of varying sizes, and we are interested in knowing \emph{how the proportion of the graphs which are classified as $1$ evolves, as we increase the graph sizes}. 

We independently sample 10 models to ensure this is not a model-specific behavior, aiming to observe the same phenomenon across the models. If there is a zero-one law, then for each model, we should only see two types of curves: either tending to $0$ or tending to $1$, as graph sizes increase. Whether it will tend to $0$ or $1$ depends on the final classifier: since each of these are independent MLPs with random weights the specific outcome is essentially random.

 We consider models with up to $3$ layers to ensure that the node features do not become alike because of the orthogonal over-smoothing issue~\cite{Li15}, which surfaces with increasing number of layers. A key feature of our theoretical results is that they do not depend on the number of layers, and this is an aspect which we wish to validate empirically. Using models with random weights is a neutral setup, and random  GNNs are widely used in the literature as baseline models~\cite{thompson2022on}, as they define valid graph convolutions and tend to perform reasonably well.

\begin{figure*}[!ht]
\centering

\begin{subfigure}
    \centering
    \includegraphics[width=0.357\columnwidth]{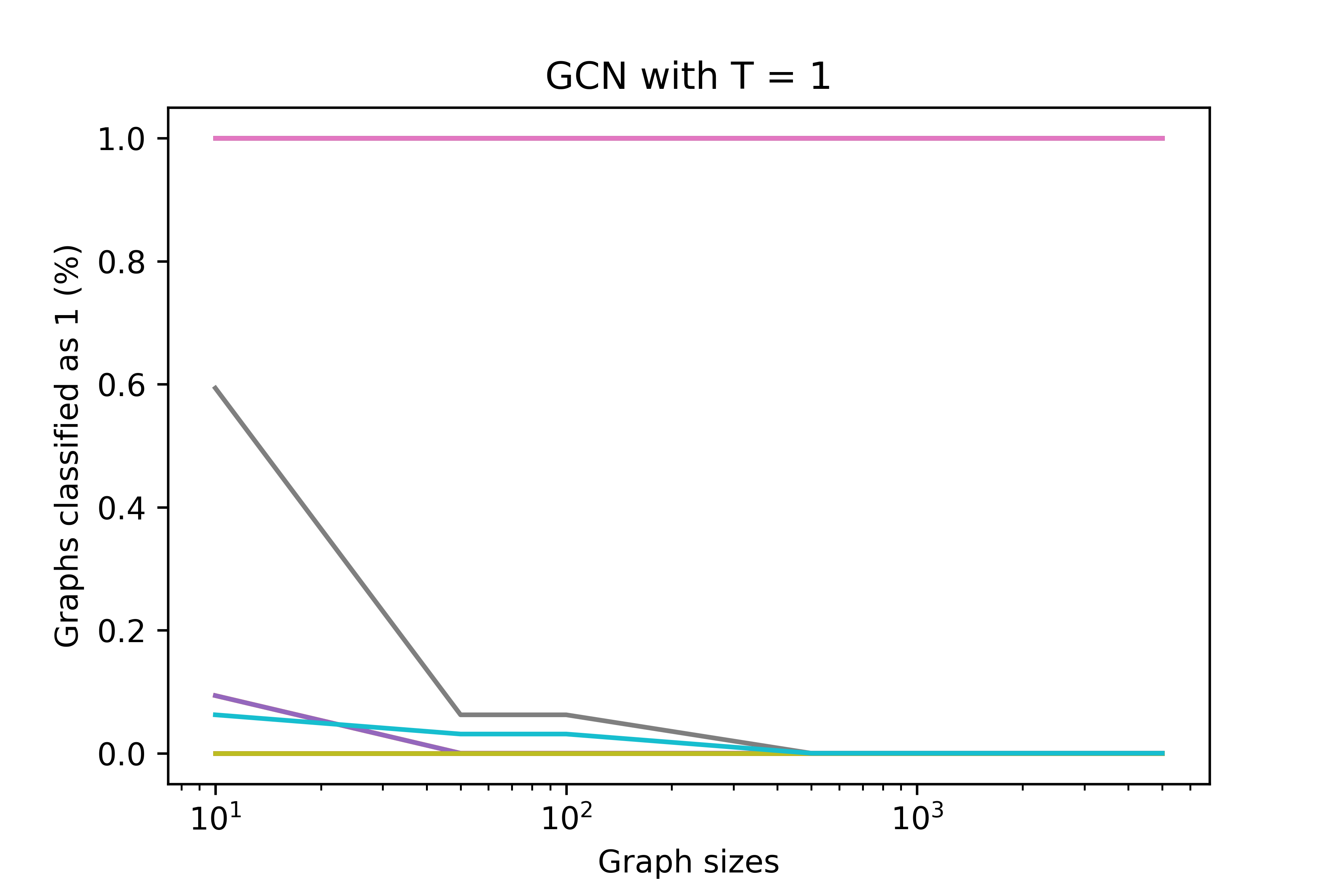}
\end{subfigure}
\hspace{-0.75cm}
\begin{subfigure}
    \centering
    \includegraphics[width=0.357\columnwidth]{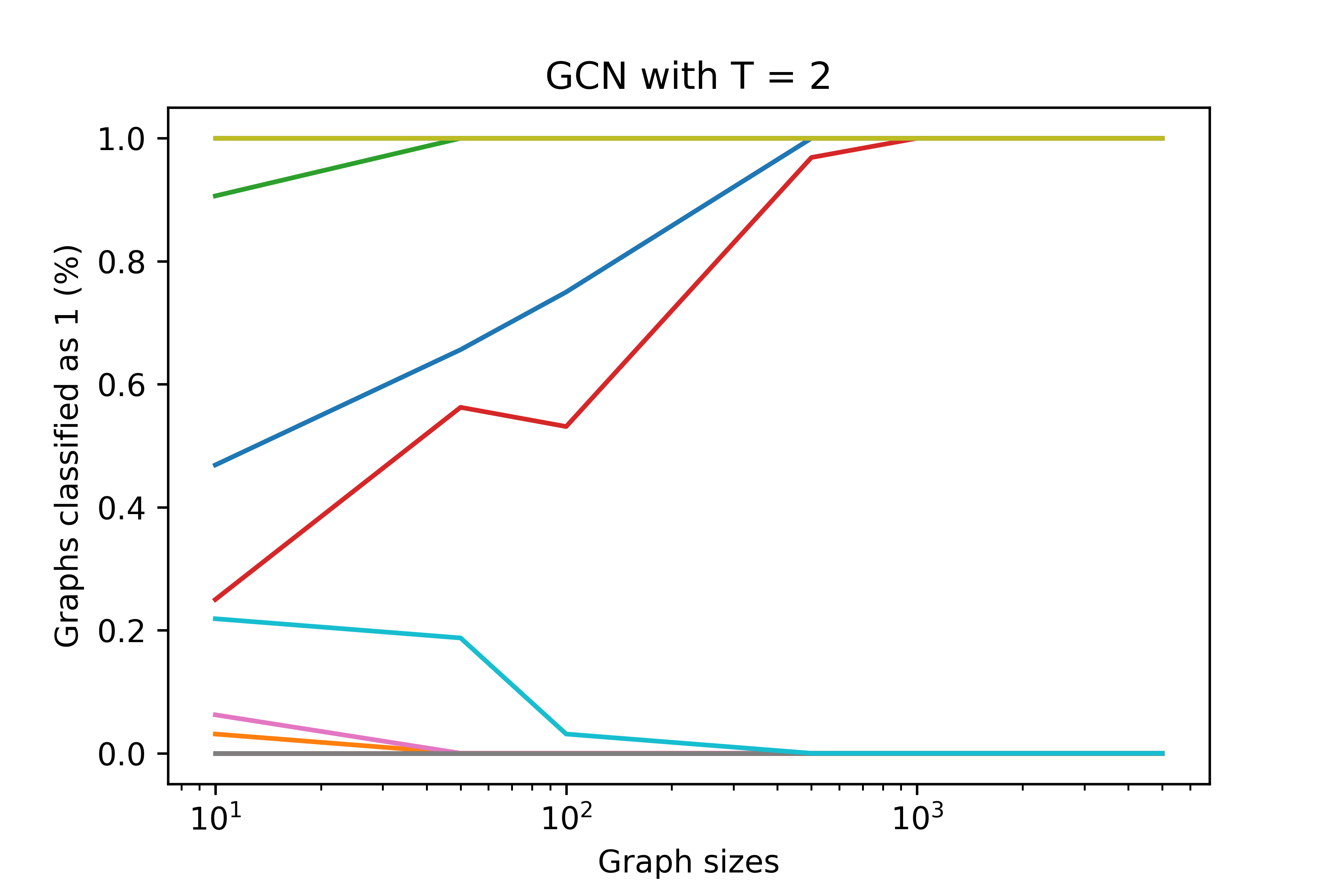}
\end{subfigure}
\hspace{-0.75cm}
\begin{subfigure}
    \centering
    \includegraphics[width=0.357\columnwidth]{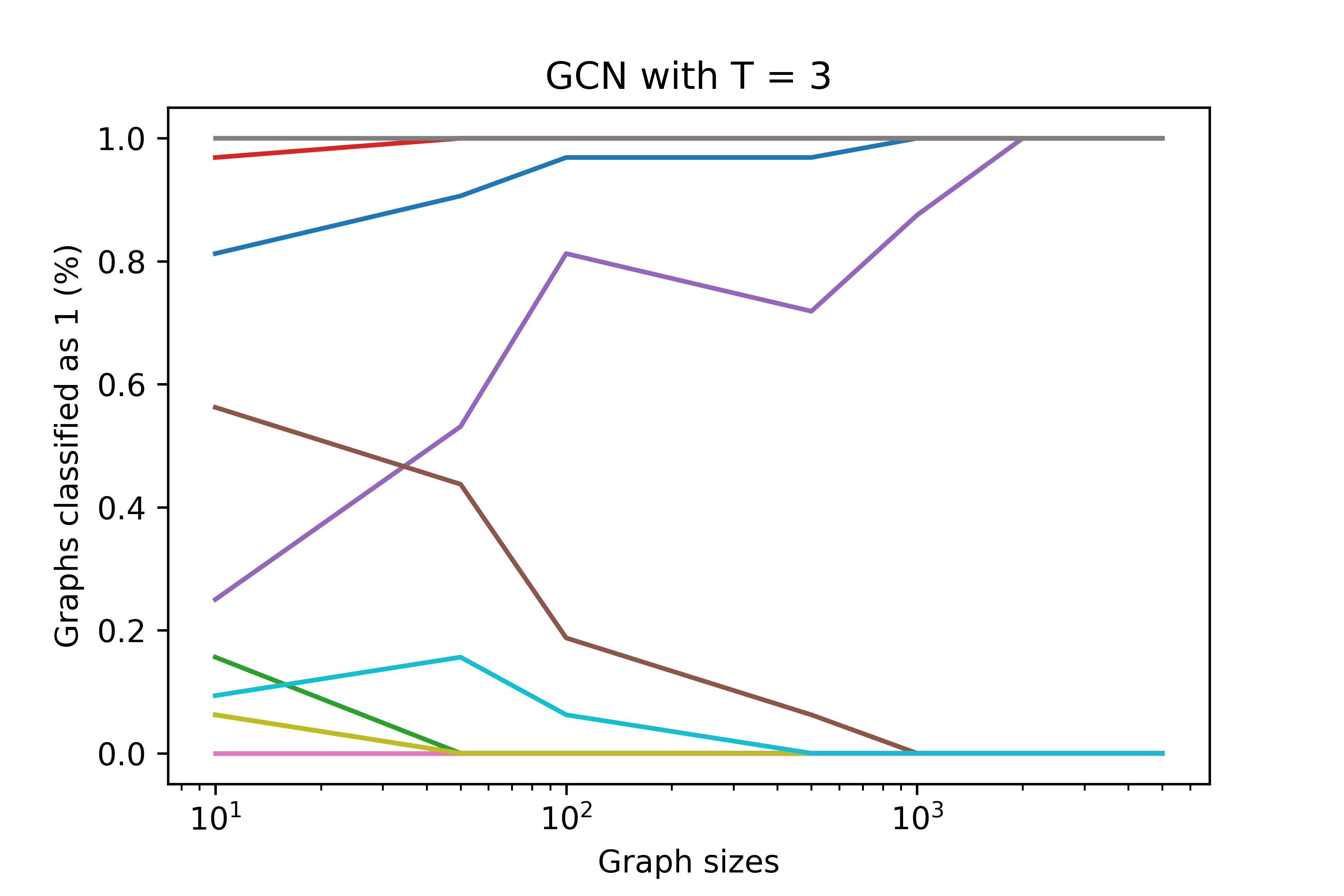}
\end{subfigure}

\begin{subfigure}
    \centering
    \includegraphics[width=0.357\columnwidth]{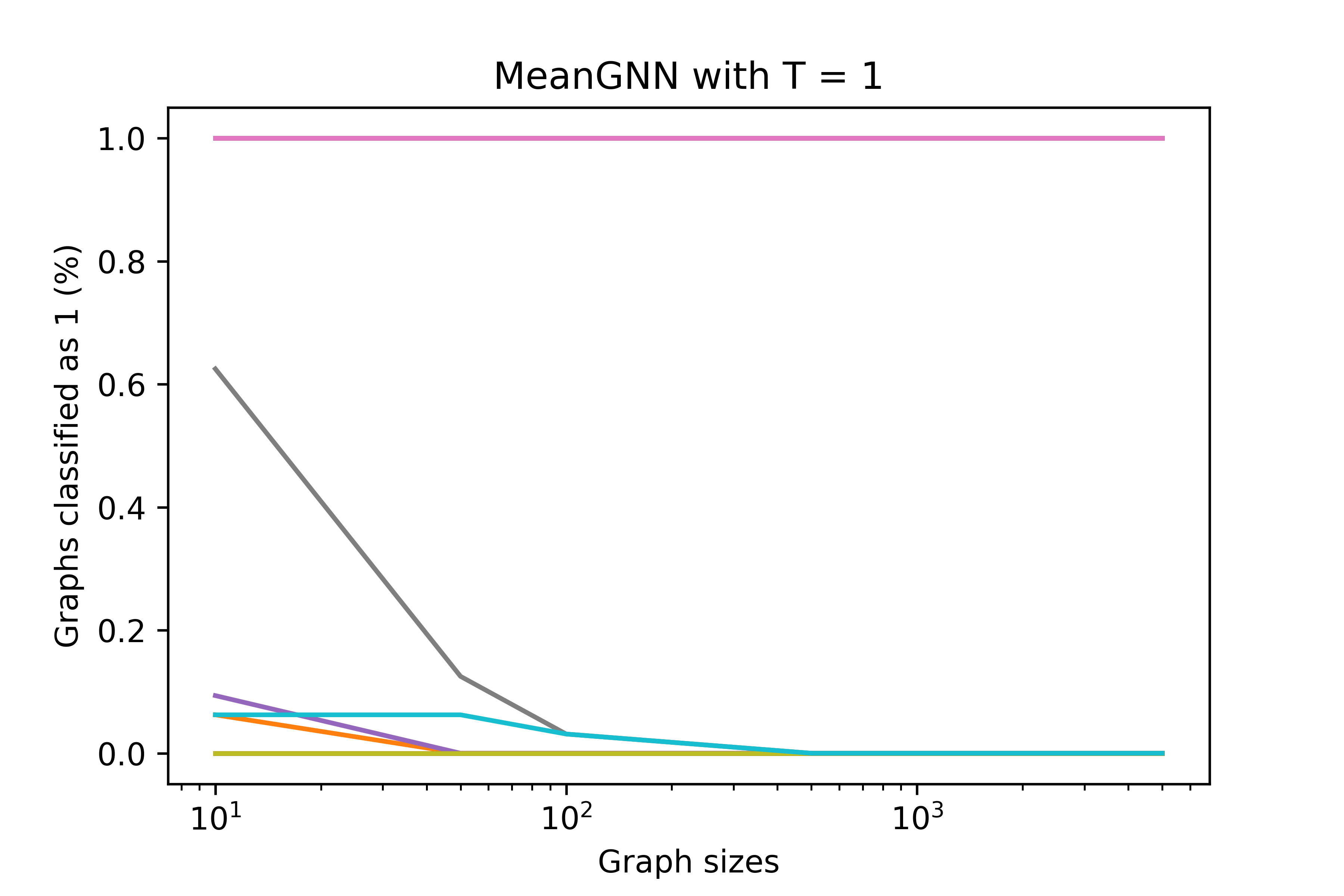}
\end{subfigure}
\hspace{-0.75cm}
\begin{subfigure}
    \centering
    \includegraphics[width=0.357\columnwidth]{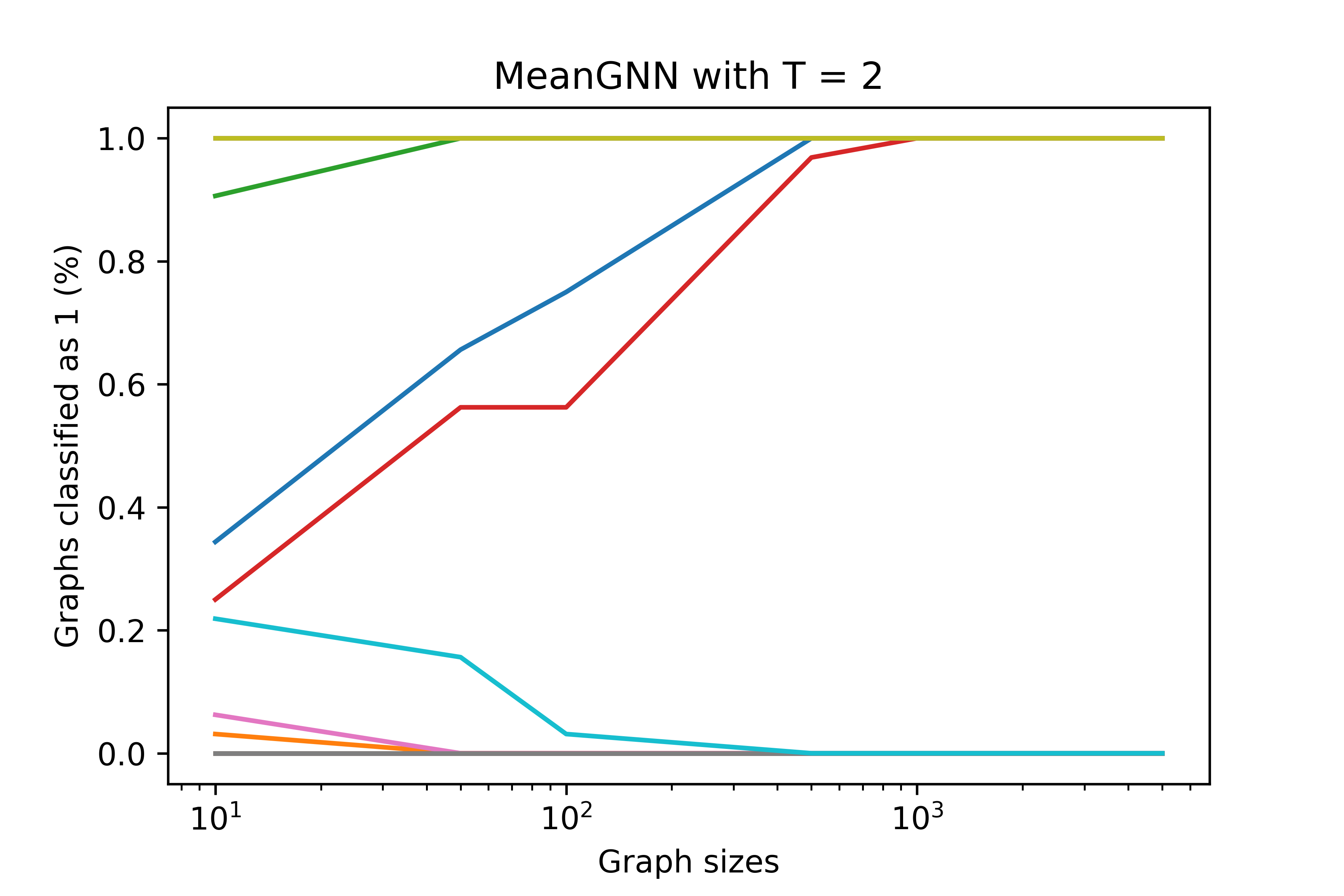}
\end{subfigure}
\hspace{-0.75cm}
\begin{subfigure}
    \centering
    \includegraphics[width=0.357\columnwidth]{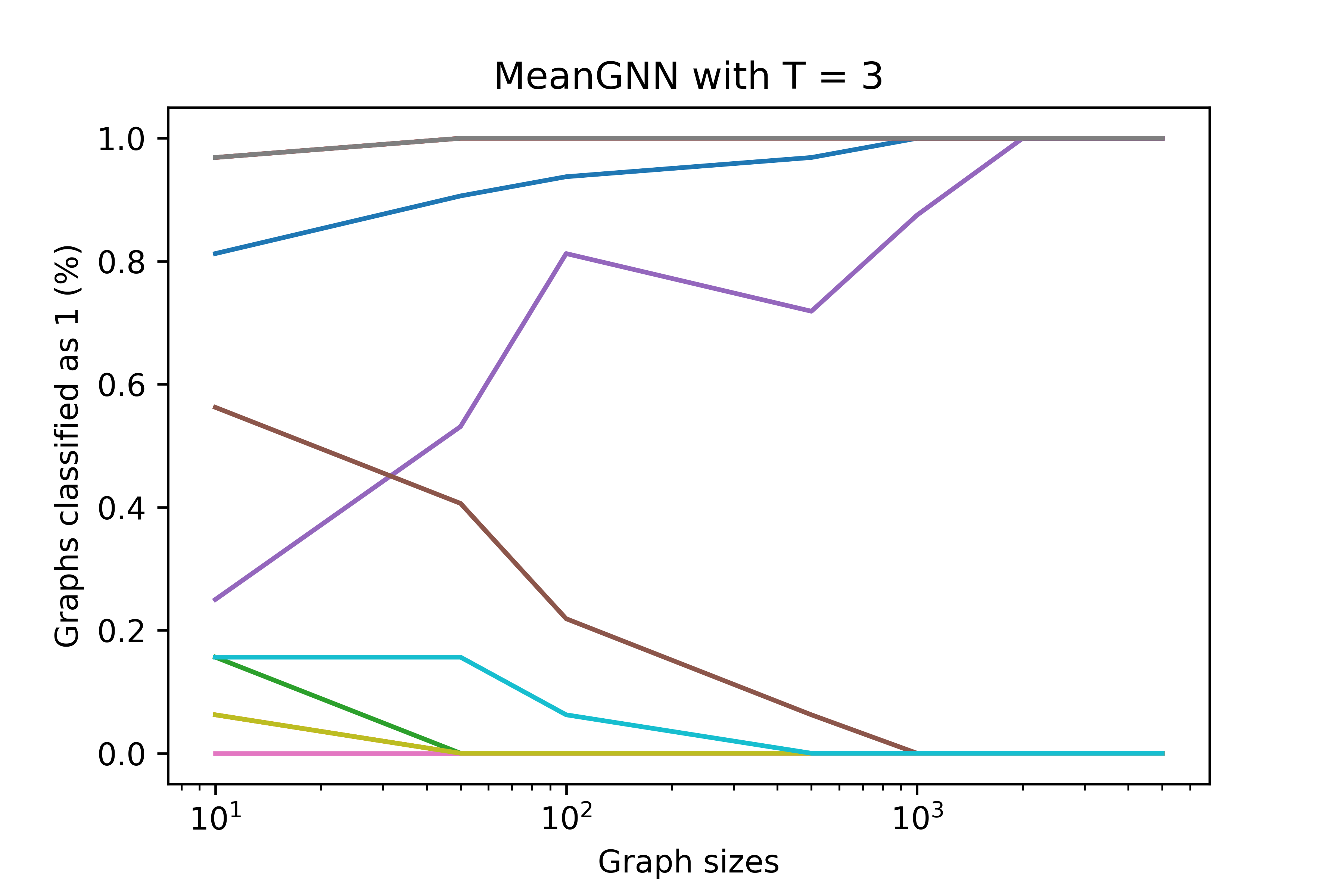}
\end{subfigure}

\begin{subfigure}
    \centering
    \includegraphics[width=0.357\columnwidth]{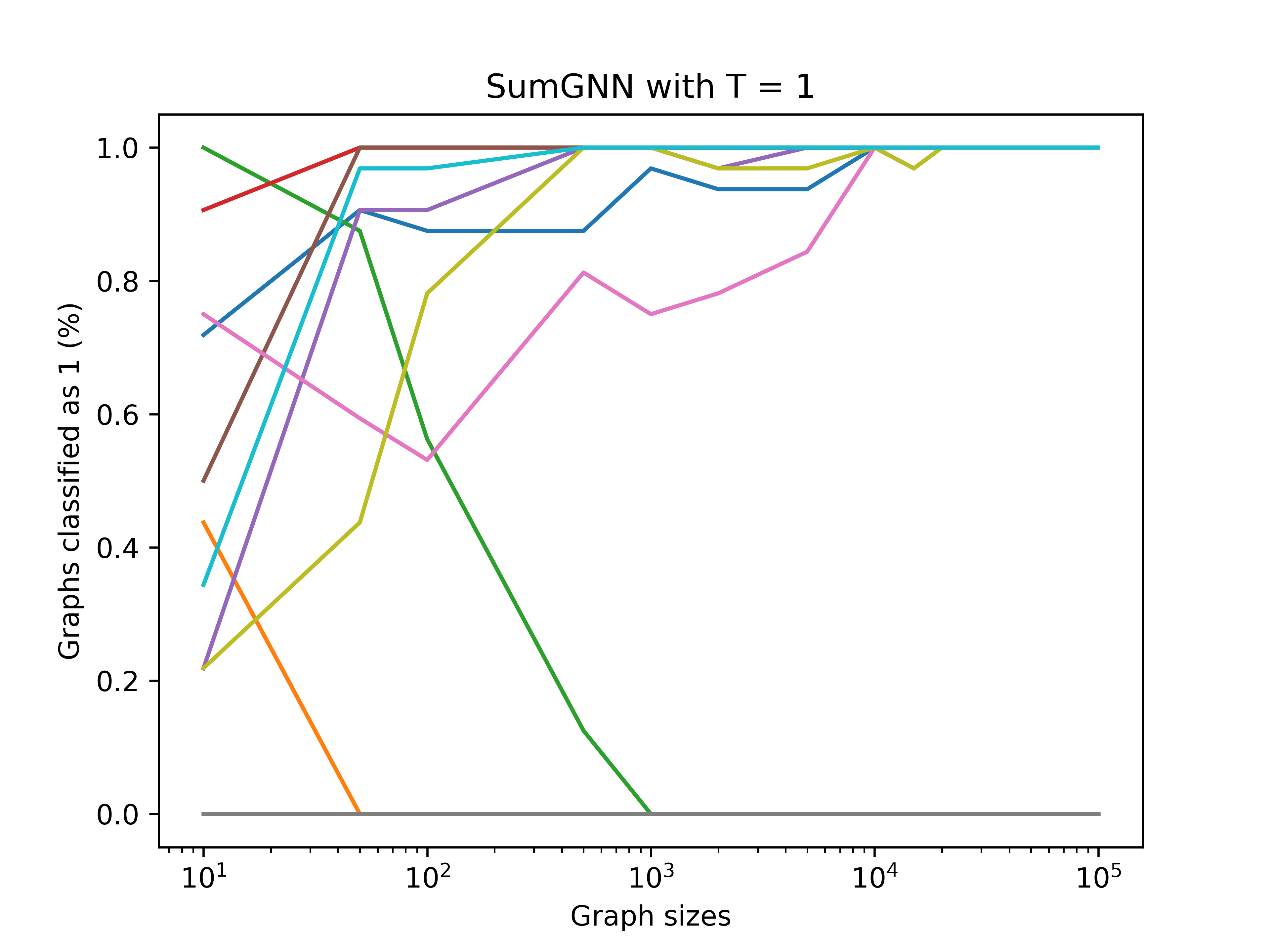}
\end{subfigure}
\hspace{-0.75cm}
\begin{subfigure}
    \centering
    \includegraphics[width=0.357\columnwidth]{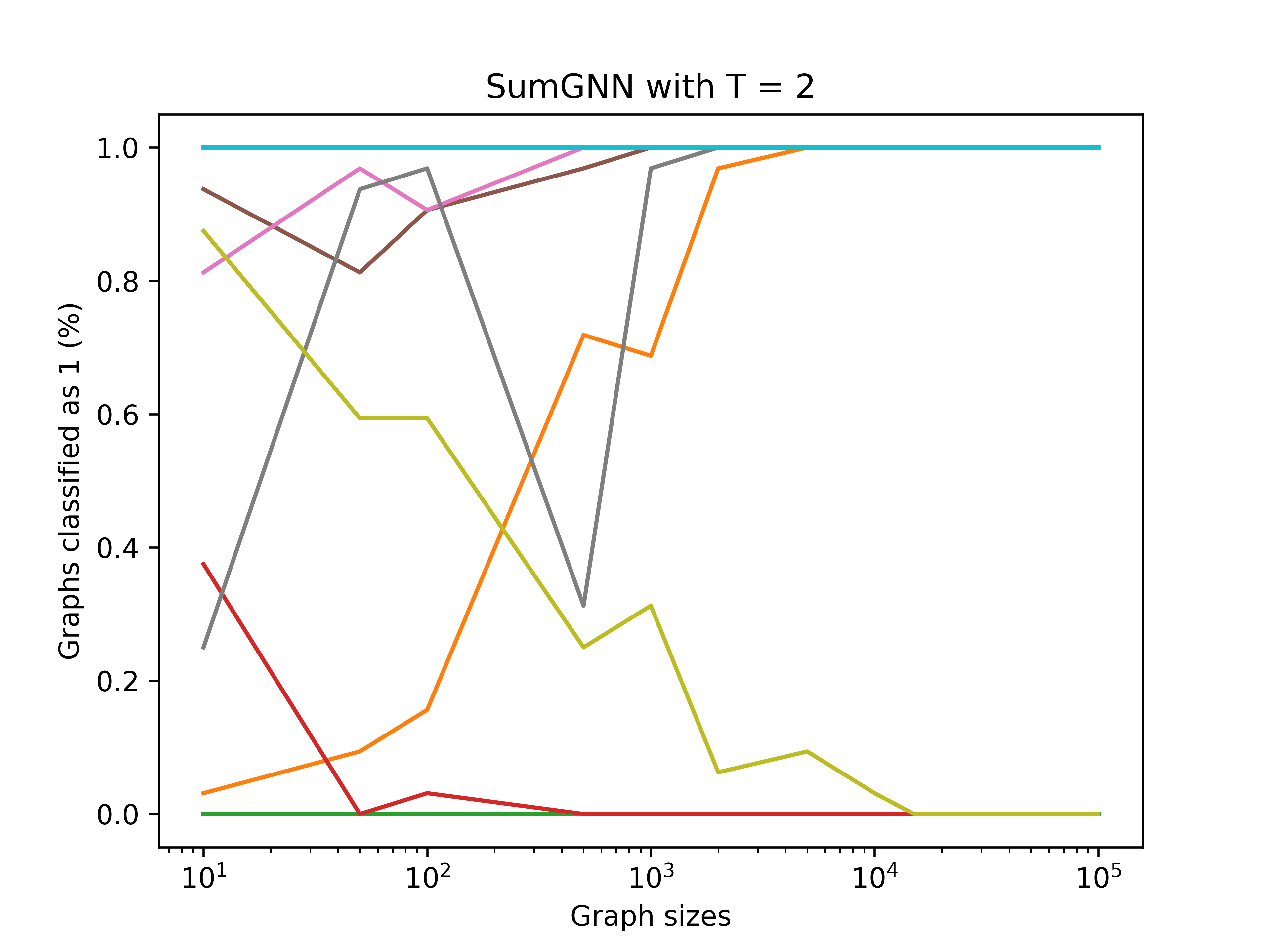}
\end{subfigure}
\hspace{-0.75cm}
\begin{subfigure}
    \centering
    \includegraphics[width=0.357\columnwidth]{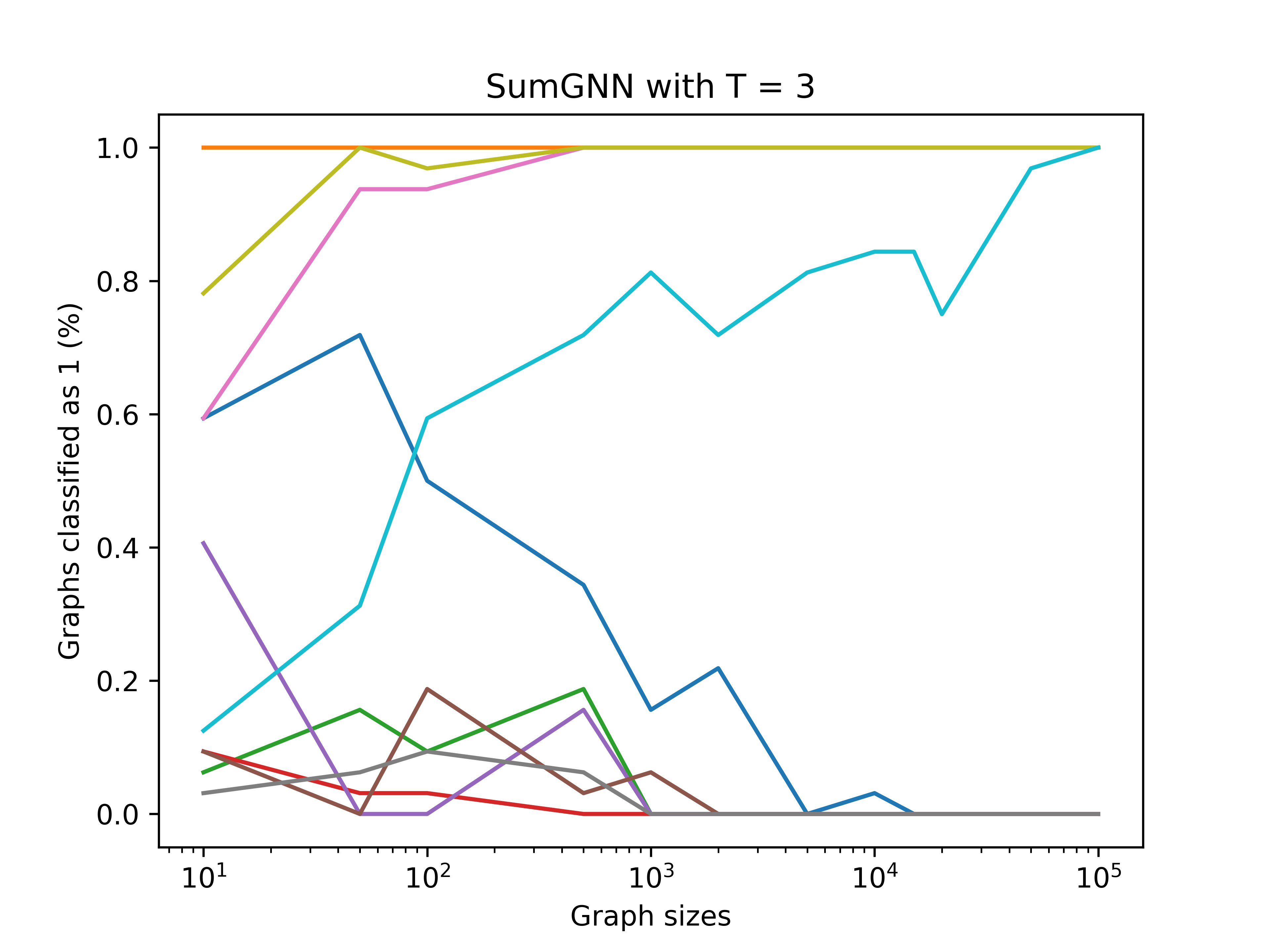}
\end{subfigure}
\caption{Each plot shows the proportion of graphs of certain size which are classified as $1$ by a set of ten GCNs (top row), \meanbGNNs (middle), and \sumbGNNs (bottom row). Each curve (color-coded) shows the behavior of a model, as we draw increasingly larger graphs. The phenomenon is observed for 1-layer models (left column), 2-layer models (mid column), and 3-layer models (last column). GCNs and \meanbGNNs behave very similarly with all models converging quickly to $0$ or to $1$. \sumbGNNs shows slightly slower convergence, but all models perfectly converge in all layers.} 
\label{fig:main experiments}
\end{figure*}

\subsection{Empirical results} We report all results in \Cref{fig:main experiments} for all models considered and discuss them below. Each plot in this figure depicts the curves corresponding to the behavior of independent models with random weights.
 
\textbf{GCN.} 
For this experiment, we use an embedding dimensionality of $128$ for each GCN model and draw graphs of sizes up $5000$, where we take $32$ samples of each size.  The key insight of \Cref{thm:main result gcn} is that the final mean-pooled embedding vector $\rvz_G$ tends to a constant vector as we draw larger graphs. Applying an MLP followed by a sigmoid function will therefore map $\rvz_G$  to either $0$ or $1$, showing a zero-one law.  It is evident from \Cref{fig:main experiments} (top row) that all curves tend to either $0$ or $1$, confirming our expectation regarding the outcome of these experiments for GCNs. Moreover, this holds regardless of the number of layers considered. Since the convergence occurs quickly, already around graphs of size of $1000$, we did not experiment with larger graph sizes in this experiment.

\textbf{\meanbGNN.} 
Given that the key insight behind this result is essentially similar to that of \Cref{thm:main result gcn}, we follow the exact same configuration for these models as for GCNs. The proof structure is the same in both cases: we show that the preactivations and activations become closer and closer to some fixed values as the number of nodes increases. Moreover, comparing the summations in the definitions of \GCN and \meanbGNN, on a typical ER graph drawn from $\sG(n,1/2)$ we would expect each corresponding summand to have a similar value, since $\sqrt{\abs{\mathcal N(u)}\abs{\mathcal N(v)}}$ should be close to $\abs{\mathcal N^+(v)}$.
\Cref{fig:main experiments}(mid row) illustrates the results for \meanbGNN and the trends are reassuringly similar to those of GCNs: all models converge quickly to either $0$ and $1$ with all choices of layers. Interestingly, the plots for \GCN and \meanbGNN models are almost identical. We used the same seed when drawing each of the model weights, and the number of parameters is the same between the two. Hence, the \GCN models were parameterized with the same values as the \meanbGNN models. The fact that each pair of models preforms near identically confirms the expectation that the two architectures work in similar ways on ER graphs.

\textbf{\sumbGNN.}
\Cref{thm:main result sum} shows that, as the number of nodes grow, the embedding vector $\rvz_v$ of each \emph{node} $v$ will converge to a constant vector with high probability. Hence, when we do mean-pooling at the end, we expect to get the same vector for different graphs of the same size. The mechanism by which a zero-one law is arrived at is quite different compared with the \GCN and \meanbGNN case. In particular, in order for the embedding vectors to begin to converge, there must be sufficiently many nodes so that the preactivations surpass the thresholds of the non-linearity. For this experiment, we use a smaller embedding dimensionality of $64$ for each \sumbGNN model and draw graphs of sizes up to $100000$, where we take $32$ samples of each size.  \Cref{fig:main experiments} shows the results for \sumbGNN. Note that we observe a slower convergence than with \GCN or \meanbGNN.

\section{Limitations, discussion, and outlook}

The principal limitations of our work come from the assumptions placed on the main theorems. In our formal analysis, we focus on graphs drawn from the ER distribution. From the perspective of characterizing the \emph{expressiveness} of GNNs this is unproblematic, and accords with the classical analysis of first-order properties of graphs. However, when considering the \emph{extrapolation capacity} of GNNs, other choices of distributions may be more realistic. In \cref{ssec:sparse er,ssec:ba model} we report experiments in which zero-one laws are observed empirically for sparse ER graphs and Barabási-Albert graphs \cite{doi:10.1126/science.286.5439.509}, suggesting that formal results may be obtainable.
While we empirically observe that GNNs converge to their asymptotic behaviour very quickly, we leave it as future work to rigorously examine the rate at which this convergence occurs.

In this work we show that GNNs with random features can at most capture properties which follow a zero-one law. We complement this with an \emph{almost} matching lower bound: \cref{thm:uniform expressivity RNI} currently requires a graph invariant $\xi$ which obeys a zero-one law with respect to a specific value of $r$ (i.e., $1/2$), and if this assumption could be relaxed, it would yield a complete characterization of the expressive power of these models.

\section{Acknowledgments}

We would like to thank the anonymous reviewers for their feedback, which lead to several improvements in the presentation of the paper. The first author was supported by an EPSRC studentship with project reference \emph{2271793}.

\bibliographystyle{plainnat}
\bibliography{refs.bib}

\begin{thebibliography}{42}
\providecommand{\natexlab}[1]{#1}
\providecommand{\url}[1]{\texttt{#1}}
\expandafter\ifx\csname urlstyle\endcsname\relax
  \providecommand{\doi}[1]{doi: #1}\else
  \providecommand{\doi}{doi: \begingroup \urlstyle{rm}\Url}\fi

\bibitem[Abboud et~al.(2021)Abboud, Ceylan, Grohe, and
  Lukasiewicz]{AbboudCGL21}
Ralph Abboud, {\.I}smail~{\.I}lkan Ceylan, Martin Grohe, and Thomas
  Lukasiewicz.
\newblock The surprising power of graph neural networks with random node
  initialization.
\newblock In \emph{Proceedings of the Thirtieth International Joint Conference
  on Artificial Intelligence, {IJCAI}}, pages 2112--2118, 2021.

\bibitem[Alon and Yahav(2021)]{Alon-ICLR21}
Uri Alon and Eran Yahav.
\newblock On the bottleneck of graph neural networks and its practical
  implications.
\newblock In \emph{Proceedings of the Ninth International Conference on
  Learning Representations, {ICLR}}, 2021.

\bibitem[Barabási and Albert(1999)]{doi:10.1126/science.286.5439.509}
Albert-László Barabási and Réka Albert.
\newblock Emergence of scaling in random networks.
\newblock \emph{Science}, 286\penalty0 (5439):\penalty0 509--512, 1999.

\bibitem[Barcel{\'{o}} et~al.(2020)Barcel{\'{o}}, Kostylev, Monet, P{\'{e}}rez,
  Reutter, and Silva]{BarceloKM0RS20}
Pablo Barcel{\'{o}}, Egor~V. Kostylev, Mika{\"{e}}l Monet, Jorge P{\'{e}}rez,
  Juan~L. Reutter, and Juan~Pablo Silva.
\newblock The logical expressiveness of graph neural networks.
\newblock In \emph{Proceedings of the Eighth International Conference on
  Learning Representations, {ICLR}}, 2020.

\bibitem[Battaglia et~al.(2018)Battaglia, Hamrick, Bapst, Sanchez-Gonzalez,
  Zambaldi, Malinowski, Tacchetti, Raposo, Santoro, Faulkner, Gulcehre, Song,
  Ballard, Gilmer, Dahl, Vaswani, Allen, Nash, Langston, Dyer, Heess, Wierstra,
  Kohli, Botvinick, Vinyals, Li, and Pascanu]{BattagliaGraphNetworks}
Peter~W. Battaglia, Jessica~B. Hamrick, Victor Bapst, Alvaro Sanchez-Gonzalez,
  Vinicius Zambaldi, Mateusz Malinowski, Andrea Tacchetti, David Raposo, Adam
  Santoro, Ryan Faulkner, Caglar Gulcehre, Francis Song, Andrew Ballard, Justin
  Gilmer, George Dahl, Ashish Vaswani, Kelsey Allen, Charles Nash, Victoria
  Langston, Chris Dyer, Nicolas Heess, Daan Wierstra, Pushmeet Kohli, Matt
  Botvinick, Oriol Vinyals, Yujia Li, and Razvan Pascanu.
\newblock Relational inductive biases, deep learning, and graph networks, 2018.

\bibitem[Bollobás(2001)]{bollobas_2001}
Béla Bollobás.
\newblock \emph{Random Graphs}.
\newblock Cambridge Studies in Advanced Mathematics. Cambridge University
  Press, second edition, 2001.

\bibitem[Duvenaud et~al.(2015)Duvenaud, Maclaurin, Aguilera{-}Iparraguirre,
  G{\'{o}}mez{-}Bombarelli, Hirzel, Aspuru{-}Guzik, and
  Adams]{DuvenaudMABHAA15}
David Duvenaud, Dougal Maclaurin, Jorge Aguilera{-}Iparraguirre, Rafael
  G{\'{o}}mez{-}Bombarelli, Timothy Hirzel, Al{\'{a}}n Aspuru{-}Guzik, and
  Ryan~P. Adams.
\newblock Convolutional networks on graphs for learning molecular fingerprints.
\newblock In \emph{Proceedings of the Twenty-Eighth Annual Conference on
  Advances in Neural Information Processing Systems, {NIPS}}, pages 2224--2232,
  2015.

\bibitem[Fagin(1976)]{10.2307/2272945}
Ronald Fagin.
\newblock Probabilities on finite models.
\newblock \emph{The Journal of Symbolic Logic, JSL}, 41\penalty0 (1):\penalty0
  50--58, 1976.
\newblock ISSN 00224812.

\bibitem[Flach(2010)]{Flach2010}
Peter~A. Flach.
\newblock First-order logic.
\newblock In Claude Sammut and Geoffrey~I. Webb, editors, \emph{Encyclopedia of
  Machine Learning}, pages 410--415. Springer US, Boston, MA, 2010.
\newblock ISBN 978-0-387-30164-8.
\newblock \doi{10.1007/978-0-387-30164-8_311}.

\bibitem[Fout et~al.(2017)Fout, Byrd, Shariat, and Ben{-}Hur]{FoutBSB17}
Alex Fout, Jonathon Byrd, Basir Shariat, and Asa Ben{-}Hur.
\newblock Protein interface prediction using graph convolutional networks.
\newblock In \emph{Proceedings of the Thirtieth Annual Conference on Advances
  in Neural Information Processing Systems, {NIPS}}, pages 6530--6539, 2017.

\bibitem[Gama et~al.(2020)Gama, Bruna, and
  Ribeiro]{82bc6ac891e6494cbf44089fdf031978}
Fernando Gama, Joan Bruna, and Alejandro Ribeiro.
\newblock Stability properties of graph neural networks.
\newblock \emph{IRE Transactions on Audio}, 68:\penalty0 5680--5695, 2020.
\newblock ISSN 1053-587X.

\bibitem[Gilmer et~al.(2017)Gilmer, Schoenholz, Riley, Vinyals, and
  Dahl]{GilmerSRVD17}
Justin Gilmer, Samuel~S. Schoenholz, Patrick~F. Riley, Oriol Vinyals, and
  George~E. Dahl.
\newblock Neural message passing for quantum chemistry.
\newblock In \emph{Proceedings of the Thirty-Fourth International Conference on
  Machine Learning, {ICML}}, pages 1263--1272, 2017.

\bibitem[Glebskii et~al.(1969)Glebskii, Kogan, Liogonkii, and
  Talanov]{glebskii1969volume}
Yu~V Glebskii, DI~Kogan, MI~Liogonkii, and VA~Talanov.
\newblock Volume and fraction of satisfiability of formulas of the lower
  predicate calculus.
\newblock \emph{Kibernetika}, 2:\penalty0 17--27, 1969.

\bibitem[Gori et~al.(2005)Gori, Monfardini, and Scarselli]{Gori2005}
Marco Gori, Gabriele Monfardini, and Franco Scarselli.
\newblock A new model for learning in graph domains.
\newblock In \emph{Proceedings of the 2005 IEEE International Joint Conference
  on Neural Networks, {IJCNN}}, volume~2, pages 729--734, 2005.

\bibitem[Grohe(2023)]{grohe2023descriptive}
M.~Grohe.
\newblock The descriptive complexity of graph neural networks.
\newblock In \emph{2023 38th Annual ACM/IEEE Symposium on Logic in Computer
  Science, LICS}, pages 1--14, Los Alamitos, CA, USA, jun 2023. IEEE Computer
  Society.
\newblock \doi{10.1109/LICS56636.2023.10175735}.

\bibitem[Grohe(2021)]{GroheLogicGNN}
Martin Grohe.
\newblock The logic of graph neural networks.
\newblock In \emph{Proceedings of the 36th Annual ACM/IEEE Symposium on Logic
  in Computer Science, LICS}, New York, NY, USA, 2021. Association for
  Computing Machinery.
\newblock ISBN 9781665448956.

\bibitem[Hamilton et~al.(2017)Hamilton, Ying, and Leskovec]{NIPS2017_5dd9db5e}
Will Hamilton, Zhitao Ying, and Jure Leskovec.
\newblock Inductive representation learning on large graphs.
\newblock In \emph{Proceedings of the Thirty-First Annual Conference on
  Advances in Neural Information Processing Systems, {NIPS}}. Curran
  Associates, Inc., 2017.

\bibitem[Kearnes et~al.(2016)Kearnes, McCloskey, Berndl, Pande, and
  Riley]{KearnesMBPR16}
Steven~M. Kearnes, Kevin McCloskey, Marc Berndl, Vijay~S. Pande, and Patrick
  Riley.
\newblock Molecular graph convolutions: moving beyond fingerprints.
\newblock \emph{Journal of Computer Aided Molecular Design}, 30\penalty0
  (8):\penalty0 595--608, 2016.

\bibitem[Keriven and Peyr{\'{e}}(2019)]{KerivenP19}
Nicolas Keriven and Gabriel Peyr{\'{e}}.
\newblock Universal invariant and equivariant graph neural networks.
\newblock In \emph{Proceedings of the Thirty-Second Annual Conference on
  Advances in Neural Information Processing Systems, {NeurIPS}}, pages
  7090--7099, 2019.

\bibitem[Keriven et~al.(2020)Keriven, Bietti, and Vaiter]{NEURIPS2020_f5a14d49}
Nicolas Keriven, Alberto Bietti, and Samuel Vaiter.
\newblock Convergence and stability of graph convolutional networks on large
  random graphs.
\newblock In \emph{Proceedings of the Thirty-Fourth Annual Conference on
  Advances in Neural Information Processing Systems, {NeurIPS}}, pages
  21512--21523. Curran Associates, Inc., 2020.

\bibitem[Kipf and Welling(2017)]{Kipf16}
Thomas Kipf and Max Welling.
\newblock Semi-supervised classification with graph convolutional networks.
\newblock In \emph{Proceedings of the Fifth International Conference on
  Learning Representations, {ICLR}}, 2017.

\bibitem[Levie et~al.(2021)Levie, Huang, Bucci, Bronstein, and
  Kutyniok]{10.5555/3546258.3546530}
Ron Levie, Wei Huang, Lorenzo Bucci, Michael Bronstein, and Gitta Kutyniok.
\newblock Transferability of spectral graph convolutional neural networks.
\newblock \emph{Journal of Machine Learning Research}, 22\penalty0 (1), 2021.
\newblock ISSN 1532-4435.

\bibitem[Li et~al.(2018)Li, Han, and Wu]{LiHW18}
Qimai Li, Zhichao Han, and Xiao{-}Ming Wu.
\newblock Deeper insights into graph convolutional networks for semi-supervised
  learning.
\newblock In \emph{Proceedings of the Thirty-Second {AAAI} Conference on
  Artificial Intelligence, {AAAI}}, pages 3538--3545, 2018.

\bibitem[Li et~al.(2016)Li, Tarlow, Brockschmidt, and Zemel]{Li15}
Yujia Li, Daniel Tarlow, Marc Brockschmidt, and Richard Zemel.
\newblock Gated graph sequence neural networks.
\newblock In \emph{Proceedings of the Fourth International Conference on
  Learning Representations, ICLR}, 2016.

\bibitem[Libkin(2004)]{Libkin2004}
Leonid Libkin.
\newblock \emph{Zero-One Laws}, pages 235--248.
\newblock Springer Berlin Heidelberg, Berlin, Heidelberg, 2004.
\newblock ISBN 978-3-662-07003-1.

\bibitem[Loukas(2020)]{Loukas20}
Andreas Loukas.
\newblock What graph neural networks cannot learn: depth vs width.
\newblock In \emph{Proceedings of the Eighth International Conference on
  Learning Representations, {ICLR}}, 2020.

\bibitem[Maron et~al.(2019{\natexlab{a}})Maron, Ben{-}Hamu, Serviansky, and
  Lipman]{MaronBSL19}
Haggai Maron, Heli Ben{-}Hamu, Hadar Serviansky, and Yaron Lipman.
\newblock Provably powerful graph networks.
\newblock In \emph{Proceedings of the {T}hirty-{S}econd Annual Conference on
  Advances in Neural Information Processing Systems, {NeurIPS}}, pages
  2153--2164, 2019{\natexlab{a}}.

\bibitem[Maron et~al.(2019{\natexlab{b}})Maron, Fetaya, Segol, and
  Lipman]{MaronFSL19}
Haggai Maron, Ethan Fetaya, Nimrod Segol, and Yaron Lipman.
\newblock On the universality of invariant networks.
\newblock In \emph{Proceedings of the Thirty-Sixth International Conference on
  Machine Learning, {ICML}}, pages 4363--4371, 2019{\natexlab{b}}.

\bibitem[Morris et~al.(2019)Morris, Ritzert, Fey, Hamilton, Lenssen, Rattan,
  and Grohe]{MorrisAAAI19}
Christopher Morris, Martin Ritzert, Matthias Fey, William~L. Hamilton, Jan~Eric
  Lenssen, Gaurav Rattan, and Martin Grohe.
\newblock Weisfeiler and {L}eman go neural: Higher-order graph neural networks.
\newblock In \emph{Proceedings of the Thirty-Third {AAAI} Conference on
  Artificial Intelligence, {AAAI}}, pages 4602--4609, 2019.

\bibitem[Oono and Suzuki(2020)]{OonoS20}
Kenta Oono and Taiji Suzuki.
\newblock Graph neural networks exponentially lose expressive power for node
  classification.
\newblock In \emph{Proceedings of the Eighth International Conference on
  Learning Representations, {ICLR}}, 2020.

\bibitem[Ruiz et~al.(2020)Ruiz, Chamon, and Ribeiro]{NEURIPS2020_12bcd658}
Luana Ruiz, Luiz Chamon, and Alejandro Ribeiro.
\newblock Graphon neural networks and the transferability of graph neural
  networks.
\newblock In H.~Larochelle, M.~Ranzato, R.~Hadsell, M.F. Balcan, and H.~Lin,
  editors, \emph{Proceedings of the Thirty-Fourth Annual Conference on Advances
  in Neural Information Processing Systems, {NeurIPS}}, pages 1702--1712.
  Curran Associates, Inc., 2020.

\bibitem[Sato et~al.(2021)Sato, Yamada, and Kashima]{SatoSDM2020}
Ryoma Sato, Makoto Yamada, and Hisashi Kashima.
\newblock Random features strengthen graph neural networks.
\newblock In \emph{Proceedings of the 2021 {SIAM} International Conference on
  Data Mining, {SDM}}, pages 333--341, 2021.

\bibitem[Scarselli et~al.(2009)Scarselli, Gori, Tsoi, Hagenbuchner, and
  Monfardini]{Scarselli09}
Franco Scarselli, Marco Gori, Ah~Chung Tsoi, Markus Hagenbuchner, and Gabriele
  Monfardini.
\newblock The graph neural network model.
\newblock \emph{IEEE Transactions on Neural Networks}, 20\penalty0
  (1):\penalty0 61--80, 2009.

\bibitem[Shelah and Spencer(1988)]{10.2307/1990968}
Saharon Shelah and Joel Spencer.
\newblock Zero-one laws for sparse random graphs.
\newblock \emph{Journal of the American Mathematical Society}, 1\penalty0
  (1):\penalty0 97--115, 1988.
\newblock ISSN 08940347, 10886834.

\bibitem[Shlomi et~al.(2021)Shlomi, Battaglia, and Vlimant]{Shlomi21}
Jonathan Shlomi, Peter Battaglia, and Jean-Roch Vlimant.
\newblock Graph neural networks in particle physics.
\newblock \emph{Machine Learning: Science and Technology}, 2\penalty0
  (2):\penalty0 021001, 2021.

\bibitem[Thompson et~al.(2022)Thompson, Knyazev, Ghalebi, Kim, and
  Taylor]{thompson2022on}
Rylee Thompson, Boris Knyazev, Elahe Ghalebi, Jungtaek Kim, and Graham~W.
  Taylor.
\newblock On evaluation metrics for graph generative models.
\newblock In \emph{Proceedings of the Tenth International Conference on
  Learning Representations, {ICLR}}, 2022.

\bibitem[Velickovic et~al.(2018)Velickovic, Cucurull, Casanova, Romero,
  Li{\`{o}}, and Bengio]{velickovic2018graph}
Petar Velickovic, Guillem Cucurull, Arantxa Casanova, Adriana Romero, Pietro
  Li{\`{o}}, and Yoshua Bengio.
\newblock Graph attention networks.
\newblock In \emph{Proceedings of the Sixth International Conference on
  Learning Representations, ICLR}, 2018.

\bibitem[Vershynin(2018)]{VershyninRoman2018Hp:a}
Roman Vershynin.
\newblock \emph{High-dimensional probability}.
\newblock Number~47 in Cambridge series on statistical and probabilistic
  mathematics. Cambridge University Press, Cambridge, 2018.
\newblock ISBN 9781108415194.

\bibitem[Xu et~al.(2019)Xu, Hu, Leskovec, and Jegelka]{Keyulu18}
Keyulu Xu, Weihua Hu, Jure Leskovec, and Stefanie Jegelka.
\newblock How powerful are graph neural networks?
\newblock In \emph{Proceedings of the Seventh Annual Conference on Learning
  Representations, {ICLR}}, 2019.

\bibitem[Xu et~al.(2021)Xu, Zhang, Li, Du, Kawarabayashi, and
  Jegelka]{xu2021how}
Keyulu Xu, Mozhi Zhang, Jingling Li, Simon~Shaolei Du, Ken-Ichi Kawarabayashi,
  and Stefanie Jegelka.
\newblock How neural networks extrapolate: From feedforward to graph neural
  networks.
\newblock In \emph{Proceedings of the Ninth International Conference on
  Learning Representations, {ICLR}}, 2021.

\bibitem[Yehudai et~al.(2020)Yehudai, Fetaya, Meirom, Chechik, and
  Maron]{Yehudai2020FromLS}
Gilad Yehudai, Ethan Fetaya, Eli~A. Meirom, Gal Chechik, and Haggai Maron.
\newblock From local structures to size generalization in graph neural
  networks.
\newblock In \emph{International Conference on Machine Learning, ICML}, 2020.

\bibitem[Zitnik et~al.(2018)Zitnik, Agrawal, and Leskovec]{ZitnikAL18}
Marinka Zitnik, Monica Agrawal, and Jure Leskovec.
\newblock Modeling polypharmacy side effects with graph convolutional networks.
\newblock \emph{Bioinformatics}, 34\penalty0 (13):\penalty0 i457--i466, 2018.

\end{thebibliography}


\newpage
\appendix


\section{Proof of the zero-one law for \texorpdfstring{\GCN}{GCN}}
\label{app:gcn}
The proof of \cref{lem:node features asymptotically constant gcn} and subsequently \cref{thm:main result gcn} relies on an asymptotic analysis of the distributions of the node embeddings at each layer. The following famous concentration inequality for sub-Gaussian random variables allows us to put bounds on the deviation of a sum of random variables from its expected value.

\begin{theorem}[Hoeffding Inequality for sub-Gaussian random variables]\label{thm:Hoeffding sub-Gaussian}
    There is a universal constant $c$ such that the following holds. Let $\ervz_1, \ldots, \ervz_N$ be independent sub-Gaussian scalar random variables with mean $0$. Assume that the constants $C$ from \cref{def:sub-Gaussian} for each $\ervz_i$ can be bounded by $K$. Then for all $t > 0$:
    \begin{equation*}
        \Pr\left(\abs*{\sum_{i=1}^N \ervz_i} \geq t\right) \leq \exp\left(- \frac{ct^2}{K^2N}\right)
    \end{equation*}
\end{theorem}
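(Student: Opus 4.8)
The plan is to follow the classical moment-generating-function (MGF) route to Hoeffding-type bounds, which is the standard way of converting a sub-Gaussian tail hypothesis into concentration for a sum. Write $S = \sum_{i=1}^N \ervz_i$. First I would upgrade the tail hypothesis on a single summand to a bound on its MGF. Starting from $\Pr(\abs{\ervz_i} \ge t) \le 2\exp(-t^2/C^2)$ with each $C \le K$, one bounds the moments $\Ex[\abs{\ervz_i}^p]$ by integrating the tail and then sums the Taylor expansion of $\exp(\lambda\ervz_i)$ to obtain a universal constant $c_1$ with
\[
\Ex\left[\exp(\lambda \ervz_i)\right] \le \exp\left(c_1 K^2 \lambda^2\right) \qquad \text{for all } \lambda \in \R .
\]
The mean-zero assumption is essential here: it removes the linear ($p = 1$) term, so that the expansion is $1 + O(\lambda^2)$ near the origin and the exponential-in-$\lambda^2$ bound holds globally rather than only for large $\lambda$.

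Next I would tensorize using independence,
\[
\Ex[\exp(\lambda S)] = \prod_{i=1}^N \Ex[\exp(\lambda \ervz_i)] \le \exp\left(c_1 K^2 N \lambda^2\right),
\]
and then apply the Chernoff bound: for any $\lambda > 0$, Markov's inequality applied to $\exp(\lambda S)$ gives $\Pr(S \ge t) \le \exp(-\lambda t)\,\Ex[\exp(\lambda S)] \le \exp(-\lambda t + c_1 K^2 N \lambda^2)$. Optimizing over $\lambda$ (the minimizer is $\lambda = t/(2 c_1 K^2 N)$) yields $\Pr(S \ge t) \le \exp(-t^2/(4 c_1 K^2 N))$. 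Running the identical argument for $-S$ and taking a union bound gives the two-sided estimate $\Pr(\abs{S} \ge t) \le 2\exp(-t^2/(4 c_1 K^2 N))$; the factor $2$ is immaterial and, following the usual convention, is absorbed into the universal constant $c$.

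I expect the first step — deriving a global MGF bound from the tail bound — to be the main obstacle, since this is where the paper's sub-Gaussian definition (phrased purely as a tail estimate) must be connected to an analytic object valid for every real $\lambda$. Concretely one must show $\Ex[\abs{\ervz_i}^p] \le (c_2 K)^p p^{p/2}$ from the tail, verify that the resulting series $\sum_p \lambda^p \Ex[\ervz_i^p]/p!$ converges, and use $\Ex[\ervz_i] = 0$ to discard its first-order term; keeping track of the universal constants through this computation is the only delicate part. The remaining ingredients — tensorization, the Chernoff optimization, and the union bound — are entirely routine.
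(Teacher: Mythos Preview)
The paper does not prove this statement at all; it simply cites Theorem~2.6.2 of Vershynin's \emph{High-Dimensional Probability}. Your sketch is precisely the argument given there: pass from the tail characterization of sub-Gaussianity to an MGF bound via the moment method (using the mean-zero hypothesis to kill the linear term), tensorize by independence, and run the Chernoff optimization. One small quibble: the factor~$2$ from the union bound cannot literally be absorbed into $c$ uniformly over all $t>0$ (no choice of $c'$ makes $2\exp(-ct^2)\le\exp(-c't^2)$ hold near $t=0$); Vershynin's stated bound in fact retains the leading~$2$, and the paper's omission of it is a harmless imprecision rather than something your argument needs to justify.
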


\begin{proof}
    See Theorem 2.6.2 in \citep{VershyninRoman2018Hp:a}.
\end{proof}

We also make use of the following three basic facts about sub-Gaussian random variables.

\begin{lemma}\label{lem:dot prod sub-Gaussian}
    If $\rvz$ is a sub-Gaussian random vector and $\vq$ is any vector of the same dimension then $\vq \cdot \rvz$ is sub-Gaussian.
\end{lemma}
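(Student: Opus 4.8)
The plan is to reduce the scalar random variable $\vq \cdot \rvz$ to the one-dimensional instance of \cref{def:sub-Gaussian}. The first thing to pin down is what ``sub-Gaussian'' means for a scalar: a scalar random variable $\xi$ is the $d = 1$ case of a random vector, and the only unit vectors in $\sR$ are $\pm 1$, so \cref{def:sub-Gaussian} collapses to the existence of a constant $C' > 0$ with $\Pr(\abs{\xi} \ge t) \le 2\exp(-t^2/C'^2)$ for every $t > 0$. Hence it suffices to exhibit such a constant for $\xi = \vq \cdot \rvz$.

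First I would dispose of the degenerate case $\vq = \vzero$, in which $\vq \cdot \rvz = 0$ and the tail bound holds trivially for any $C' > 0$. Assuming $\vq \ne \vzero$, set $\vy \vcentcolon= \vq / \norm{\vq}$, the unit vector in the direction of $\vq$, so that $\vq \cdot \rvz = \norm{\vq}\,(\vy \cdot \rvz)$. Since $\rvz$ is a sub-Gaussian random \emph{vector} and $\vy$ is a unit vector, \cref{def:sub-Gaussian} supplies a constant $C > 0$ with $\Pr(\abs{\vy \cdot \rvz} \ge s) \le 2\exp(-s^2/C^2)$ for every $s > 0$.

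The key step is then a one-line rescaling: for any $t > 0$,
\[
    \Pr\bigl(\abs{\vq \cdot \rvz} \ge t\bigr) = \Pr\!\left(\abs{\vy \cdot \rvz} \ge \frac{t}{\norm{\vq}}\right) \le 2\exp\!\left(-\frac{t^2}{C^2\norm{\vq}^2}\right),
\]
so the sub-Gaussian property holds for $\vq \cdot \rvz$ with constant $C' \vcentcolon= C\norm{\vq}$, which completes the argument.

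There is no genuine obstacle here: the whole content is that projecting a sub-Gaussian vector onto a fixed direction and rescaling by $\norm{\vq}$ preserves the sub-Gaussian tail, with the constant scaling linearly in $\norm{\vq}$. The only point demanding a moment's care is the bookkeeping of which notion of sub-Gaussianity (vector versus scalar) is in play, and this is settled at the outset by recognising the scalar notion as the $d = 1$ specialisation of \cref{def:sub-Gaussian}.
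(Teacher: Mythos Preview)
Your proof is correct and is exactly the unpacking of what the paper means when it says the lemma ``follows directly from \cref{def:sub-Gaussian}'': normalise $\vq$, apply the defining tail bound to the unit vector $\vq/\norm{\vq}$, and rescale. The paper's own proof is a one-line appeal to the definition, so you have simply made explicit the routine verification it leaves to the reader.
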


\begin{proof}
    This follows directly from \cref{def:sub-Gaussian}.
\end{proof}

\begin{lemma}\label{lem:sub-Gaussian centering}
    If $\ervz$ is a sub-Gaussian scalar random variable then so is $\ervz - \Ex[\ervz]$.
\end{lemma}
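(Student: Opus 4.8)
The plan is to work from the scalar specialization of \cref{def:sub-Gaussian}: viewing a scalar random variable as a $1$-dimensional random vector, the only unit vectors are $\pm 1$, and $\abs{\pm\ervz} = \abs{\ervz}$, so $\ervz$ being sub-Gaussian means exactly that there is $C > 0$ with $\Pr(\abs{\ervz} \geq t) \leq 2\exp(-t^2/C^2)$ for all $t > 0$. The goal is then to exhibit an analogous constant $C'$ for the centered variable $\ervz - \mu$, where $\mu \vcentcolon= \Ex[\ervz]$.

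First I would verify that $\mu$ is finite and in fact controlled by $C$. Integrating the tail bound,
$$\abs{\mu} \leq \Ex[\abs{\ervz}] = \int_0^\infty \Pr(\abs{\ervz} \geq t)\, dt \leq \int_0^\infty 2\exp(-t^2/C^2)\, dt = C\sqrt{\pi},$$
so, writing $m \vcentcolon= \abs{\mu}$, we have $m \leq C\sqrt{\pi} < \infty$. Next I would bound the tail of $\ervz - \mu$ by a shifted tail of $\ervz$ via the reverse triangle inequality: for any $t > 0$, the event $\abs{\ervz - \mu} \geq t$ forces $\abs{\ervz} \geq t - m$, whence $\Pr(\abs{\ervz - \mu} \geq t) \leq \Pr(\abs{\ervz} \geq t - m)$.

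I would then split on the size of $t$. For $t \geq 2m$ we have $t - m \geq t/2$, so
$$\Pr(\abs{\ervz - \mu} \geq t) \leq 2\exp\!\left(-\frac{(t-m)^2}{C^2}\right) \leq 2\exp\!\left(-\frac{t^2}{(2C)^2}\right).$$
For $t < 2m$ I would fall back on the trivial estimate $\Pr(\abs{\ervz - \mu} \geq t) \leq 1$, which is dominated by $2\exp(-t^2/C'^2)$ whenever $t \leq C'\sqrt{\ln 2}$, and this covers all $t < 2m$ provided $C' \geq 2m/\sqrt{\ln 2}$. Taking $C' \vcentcolon= \max\{2C,\ 2m/\sqrt{\ln 2}\}$ then makes both regimes yield $\Pr(\abs{\ervz - \mu} \geq t) \leq 2\exp(-t^2/C'^2)$ for every $t > 0$, which is precisely the scalar sub-Gaussian property for $\ervz - \mu$.

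There is no genuine obstacle here; this is the standard fact that centering preserves sub-Gaussianity (cf.\ \citep{VershyninRoman2018Hp:a}). The only point requiring a little care is ensuring that a \emph{single} constant $C'$ works simultaneously across the large-$t$ and small-$t$ regimes, which is handled cleanly by the $\max$. Everything else reduces to the routine Gaussian-tail integral used to bound $m$.
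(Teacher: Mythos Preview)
Your argument is correct. The paper does not actually prove this lemma: it simply cites \citep[Lemma~2.6.8]{VershyninRoman2018Hp:a} and moves on. Your route is genuinely different in that you give a self-contained tail-probability argument directly from the paper's \cref{def:sub-Gaussian}: bound $\abs{\Ex[\ervz]}$ by integrating the Gaussian tail, then split on $t \gtrless 2\abs{\Ex[\ervz]}$ and absorb both regimes into a single constant via a $\max$. This buys independence from the external reference and matches the exact tail-bound formulation of sub-Gaussianity used in the paper (Vershynin's Lemma~2.6.8 is phrased in terms of the sub-Gaussian norm $\norm{\cdot}_{\psi_2}$, so invoking it tacitly relies on the equivalence of that norm with the tail condition). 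The citation approach is of course shorter and defers the bookkeeping; your approach makes the dependence of the new constant on $C$ and $\abs{\Ex[\ervz]}$ explicit, which is harmless here since the lemma is only used qualitatively.
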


\begin{proof}
    See Lemma 2.6.8 in \citep{VershyninRoman2018Hp:a}.
\end{proof}

\begin{lemma}\label{lem:prod sub-Gaussian Bernoulli}
    If $\ervz$ is a sub-Gaussian scalar random variable and $\erva$ is an independent Bernoulli random variable then $\ervz \erva$ is sub-Gaussian.
\end{lemma}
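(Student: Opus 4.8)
The plan is to verify the tail bound of \cref{def:sub-Gaussian} directly for the product $\ervz\erva$, reusing the very constant $C$ that already witnesses the sub-Gaussianity of $\ervz$. The crucial observation is that multiplying by a Bernoulli variable can only \emph{thin out} the tails: since $\erva$ takes values in $\{0,1\}$, for any threshold $t > 0$ the event $\{\abs{\ervz\erva} \geq t\}$ cannot occur when $\erva = 0$, because the product then vanishes and is certainly below the strictly positive $t$. Hence that event is contained in $\{\erva = 1\}$.

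Concretely, I would fix $t > 0$ and write $\Pr(\abs{\ervz\erva} \geq t) = \Pr(\abs{\ervz} \geq t \text{ and } \erva = 1)$. Using the independence of $\ervz$ and $\erva$, this factorizes as $\Pr(\abs{\ervz} \geq t)\,\Pr(\erva = 1)$, which is at most $\Pr(\abs{\ervz} \geq t)$ since $\Pr(\erva = 1) \leq 1$. The sub-Gaussian property of $\ervz$ then bounds this quantity by $2\exp(-t^2/C^2)$, which is precisely the inequality demanded of $\ervz\erva$ with the same constant $C$. Thus $\ervz\erva$ is sub-Gaussian, as claimed.

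There is essentially no obstacle to overcome here; the statement is a short consequence of the definition. The only point worth keeping in mind is that \cref{def:sub-Gaussian} quantifies over $t > 0$ \emph{strictly}, and it is exactly this restriction that makes the argument clean, as it lets us discard the contribution of the event $\{\erva = 0\}$ without needing any separate estimate for it.
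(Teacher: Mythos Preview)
Your argument is correct for the standard reading of ``Bernoulli'' as a $\{0,1\}$-valued random variable, and in that case it is tighter and cleaner than the paper's proof: you keep the very same sub-Gaussian constant $C$, because multiplying by $0$ or $1$ can only thin the tails.

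The paper, however, proves something slightly more general. It lets $\erva$ take two arbitrary real values $\alpha$ and $\beta$ (so ``Bernoulli'' is read as ``two-point distribution''), splits via the Law of Total Probability on $\{\erva=\alpha\}$ and $\{\erva=\beta\}$, rescales the threshold to $t/|\alpha|$ and $t/|\beta|$, and obtains a sub-Gaussian bound with constant $\max\{|\alpha|,|\beta|\}\,C$. This extra generality is actually used later in the paper: in the proof of \cref{lem:node features asymptotically constant sum} the authors refer to random variables taking the two values $[(\mW_n^{(2)}+\mW_g^{(2)})\vz_1]_i$ and $[\mW_g^{(2)}\vz_1]_i$ as ``Bernoulli'' and invoke this lemma. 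Your argument does not cover that case, since when $\erva$ can equal a nonzero value other than $1$ the event $\{\erva=\alpha\}$ no longer forces $|\ervz\erva|<t$, and the containment you rely on fails. If you want your version to serve throughout the paper, you would need either to extend it to general two-point $\erva$ (essentially reproducing the paper's conditioning step) or to add a remark that any two-point variable can be written as an affine image of a $\{0,1\}$ Bernoulli and argue from there.
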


\begin{proof}
    Let $C$ be the constant given by \cref{def:sub-Gaussian} for $\ervz$. Let $\erva$ take values $\alpha$ and $\beta$. Using the Law of Total Probability:
    \begin{align*}
        \Pr(\abs{\ervz \erva} \geq t)
            &= \Pr(\abs{\ervz \erva} \geq t \mid \erva = \alpha)\Pr(\erva = \alpha) + \Pr(\abs{\ervz \erva} \geq t \mid \erva = \beta)\Pr(\erva = \beta) \\
            &= \Pr(\abs{\ervz} \geq t / \abs{\alpha})\Pr(\erva = \alpha) + \Pr(\abs{\ervz} \geq t / \abs{\beta})\Pr(\erva = \beta) \\
            &\leq 2\exp\left(- \frac{t^2}{\abs{\alpha}^2 C^2}\right)\Pr(\erva = \alpha) + 2\exp\left(- \frac{t^2}{\abs{\beta}^2 C^2}\right)\Pr(\erva = \beta) \\
            &\leq 2\exp\left(- \frac{t^2}{\max\{\abs{\alpha}, \abs{\beta}\}^2 C^2}\right)
    \end{align*}
    Therefore $\ervz \erva$ is sub-Gaussian.
\end{proof}

We first prove the key lemma regarding the node embeddings.

\begin{proof}[Proof of \cref{lem:node features asymptotically constant gcn}]
    Let $C$ be the Lipschitz constant for $\sigma$. Start by considering the first layer preactivations $\rvy_{v}^{(1)}$ and drop superscript $(1)$'s for notational clarity. We have that:
    \begin{equation*}
        \rvy_{v} = \sum_{ v \in \mathcal{N}^+(v)} \frac 1 {\sqrt{\abs{\mathcal N(v)}\abs{\mathcal N(u)}}}\mW_n \rvx_{u}^{(0)} + \vb
    \end{equation*}
    Fix $i \in \{1, \ldots, d(1)\}$. The deviation from the expected value in the $i$th component is as follows:
    \begin{equation*}
        \abs{[\rvy_v - \Ex[\rvy_v]]_i} = 
            \abs*{\sum_{ u \in \mathcal{N}^+(v)} \frac 1 {\sqrt{\abs{\mathcal N(v)}\abs{\mathcal N(u)}}} \left[\mW_n \rvx_{u}^{(0)} - \Ex\left[\mW_n \rvx_{u}^{(0)}\right]\right]_i}
    \end{equation*}

    Now every $\abs{\mathcal N(u)}$ is a sum of $n$ independent $0$-$1$ Bernoulli random variables with success probability $r$ (since the graph is sampled from an Erd\H{o}s-R{\'e}nyi distribution). Since Bernoulli random variables are sub-Gaussian (\cref{lem:prod sub-Gaussian Bernoulli}) we can use Hoeffding's Inequality to bound the deviation of $\abs{\mathcal N^+(u)} = \abs{\mathcal N(u)} + 1$ from its expected value $nr + 1$. By \cref{thm:Hoeffding sub-Gaussian} there is a constant $K$ such that for every $\gamma \in (0,1)$ and node $u$:
    \begin{align*}
        \Pr(\abs{\mathcal N^+(u)} \leq \gamma nr)
        &\leq \Pr(\abs{\abs{\mathcal N^+(u)} - nr} \geq (1 - \gamma)nr) \\
            &\leq \Pr(\abs{\abs{\mathcal N(u)} - nr} \geq (1 - \gamma)nr - 1) \\
            &\leq 2\exp\left( - \frac{K ((1 - \gamma)nr - 1)^2} n\right)
    \end{align*}
    This means that, taking a union bound:
    \begin{equation*}
        \Pr(\forall u \in V \colon \abs{\mathcal N^+(u)} \geq \gamma nr)
            \geq 1- 2n\exp\left( - \frac{K_0 ((1 - \gamma)nr - 1)^2} n\right)
    \end{equation*}

    Fix $i \in \{1, \ldots, d(1)\}$. In the case where $\forall u \in V \colon \abs{\mathcal N^+(u)} \geq \gamma nr$ we have that:
    \begin{equation*}
        \abs{[\rvy_v - \Ex[\rvy_v]]_i} 
            \leq \frac{1}{\sqrt{\abs{\mathcal N(v)} \gamma nr}} \abs*{\sum_{ u \in \mathcal{N}^+(v)} \left[\mW_n \rvx_{u}^{(0)} - \Ex\left[\mW_n \rvx_{u}^{(0)}\right]\right]_i}
    \end{equation*}
    Now, by \cref{lem:dot prod sub-Gaussian} and \cref{lem:sub-Gaussian centering} each $\mW_n \rvx_{u}^{(0)} - \Ex\left[\mW_n \rvx_{u}^{(0)}\right]$ is sub-Gaussian. We can thus apply Hoeffding's Inequality (\cref{thm:Hoeffding sub-Gaussian}) to obtain a constant $K$ such that for every $t>0$ we have:
    \begin{align*}
        \Pr(\abs{[\rvy_v - \Ex[\rvy_v]]_i} \geq t)
            &\leq \Pr\left(\abs*{\sum_{ u \in \mathcal{N}^+(v)} \left[\mW_n \rvx_{u}^{(0)} - \Ex\left[\mW_n \rvx_{u}^{(0)}\right]\right]_i} \geq t \sqrt{\abs{\mathcal N(v)} \gamma nr} \right) \\
            &\leq 2\exp \left(-\frac {K t^2 \abs{\mathcal N(v)} \gamma nr} {\abs{\mathcal N^+(v)}}\right) \\
            &\leq 2\exp \left(-\frac {K t^2 \gamma nr} {2}\right)
    \end{align*}

    Now using the Law of Total Probability, partitioning depending on whether $\forall u \in V \colon \abs{\mathcal N^+(u)} \geq \gamma nr$, we get a bound as follows:
    \begin{equation*}
        \Pr(\abs{[\rvy_v - \Ex[\rvy_v]]_i} \geq t)
        \leq 
        2\exp \left(-\frac {K t^2 \gamma nr} {2}\right)
        + 2n\exp\left( - \frac{K_0 ((1 - \gamma)nr - 1)^2} n\right)
    \end{equation*}
    From now on fix any $\gamma \in (0,1)$.

    Let $\vz_1 \vcentcolon= \sigma(\E[\rvy_v])$ for any $v$ (this is the same for every $v$). Applying the bound with $t = C \epsilon$ we can bound the deviation of $\rvx_v$ from $\vz_1$ as follows, using the Lipschitz constant $C$.
    \begin{align*}
        \Pr(\abs{[\rvx_v - \vz_1]_i} \geq \epsilon)
            &= \Pr(\abs{[\sigma(\rvy_v) - \sigma(\E[\rvy_v])]_i} \geq \epsilon) \\
            &\leq \Pr(\abs{[\rvy_v - \E[\rvy_v]]_i} \geq C\epsilon) \\
            &\leq 
            2\exp \left(-\frac {K C^2 \epsilon^2 \gamma nr} {2}\right)
            + 2n\exp\left( - \frac{K_0 ((1 - \gamma)nr - 1)^2} n\right)
    \end{align*}
    Taking a union bound, the probability that $\abs{[\rvx_v - \vz_1]_i} < \epsilon$ for every node $v$ and every $i \in \{1, \ldots, d(1)\}$ is at least:
    \begin{equation*}
        1 - nd(i) \Pr(\abs{[\rvx_v - \vz_1]_i} \geq \epsilon)
    \end{equation*}
    This tends to $1$ as $n$ tends to infinity, which yields the result for the first layer.

    Now consider the preactivations for the second layer:
    \begin{equation*}
        \rvy_{v}^{(2)} = \sum_{ u \in \mathcal{N}^+(v)} \frac 1 {\sqrt{\abs{\mathcal N(v)}\abs{\mathcal N(u)}}}\mW_n^{(2)} \rvx_{u}^{(1)} + \vb^{(2)}
    \end{equation*}
    As in the single layer case, we can bound the probability that any $\abs{\mathcal N(u)}$ is less than some $\gamma n r$. Condition on the event that $\forall u \in V \colon \abs{\mathcal N^+(u)} \geq \gamma nr$.
    
    By applying the result for the first layer to $\epsilon' = \epsilon\sqrt{\gamma r} / (2C\norm{\mW_n^{(2)}}_\infty)$, we have that for each $i \in \{1, \ldots, d(2)\}$:
    \begin{equation*}
        \Pr\left(\forall v \colon \abs*{\left[\rvx_v^{(1)} - \vz_1\right]_i} < \epsilon'\right) \to 1 \quad\text{as }n \to \infty
    \end{equation*}
    Condition additionally on the event that $\abs{[\rvx_v^{(1)} - \vz_1]_i} < \epsilon'$ for every node $v$ and every $i \in \{1, \ldots, d(1)\}$.

    Now define:
    \begin{equation*}
        \va_2 \vcentcolon= \sum_{ v \in \mathcal{N}^+(v)} \frac 1 {\sqrt{\abs{\mathcal N(v)}\abs{\mathcal N(u)}}}\mW_n^{(2)} \vz_1 + \vb^{(2)}
    \end{equation*}
    Then we have that for every $i \in \{1, \ldots, d(1)\}$:
    \begin{align*}
        \abs{[\rvy_v^{(2)} - \va_2]_i} 
            &\leq \abs*{\sum_{ u \in \mathcal{N}^+(v)} \frac 1 {\sqrt{\abs{\mathcal N(u)}\abs{\mathcal N(v)}}} \left[\mW_n^{(2)} (\rvx_{u}^{(1)} - \vz_1)\right]_i} \\
            &\leq \frac{1}{\sqrt{\abs{\mathcal N(v)} \gamma nr}} \abs*{\sum_{ u \in \mathcal{N}^+(v)} \left[\mW_n^{(2)} (\rvx_{u}^{(1)} - \vz_1)\right]_i} \\
            &\leq \frac{1}{\sqrt{\abs{\mathcal N(v)} \gamma nr}}\norm*{\mW_n^{(2)}}_\infty \sum_{ u \in \mathcal{N}^+(v)} \norm*{\rvx_{u}^{(1)} - \vz_1}_\infty \\
            &\leq \frac{\epsilon\abs{\mathcal{N}^+(v)}}{2C\sqrt{\abs{\mathcal N(v)}n}} \\
            &\leq \frac{\epsilon(n+1)}{2Cn} \\
            &\leq \frac{\epsilon}{C}
    \end{align*}

    Now let $\vz_2 \vcentcolon= \sigma(\va_2)$. As in the single-layer case we can use the bound on $\abs{[\rvy_v^{(2)} - \va_2]_i}$ and the fact that $\sigma$ is Lipschitz to find that, for every node $v$ and $i \in \{1, \ldots, d(2)\}$:
    \begin{equation*}
        \abs{[\rvx_v^{(2)} - \vz_2]_i} < \epsilon
    \end{equation*}
    Since the probability that the two events on which we conditioned tends to $1$, the result follows for the second layer.

    Finally, we apply the argument inductively through the layers to obtain the ultimate result.
\end{proof}

With the key lemma established we can prove the main result.

\begin{proof}[Proof of \cref{thm:main result gcn}]
    By \cref{lem:node features asymptotically constant gcn} the final node embeddings $\rvx_v^{(T)}$ deviate less and less from $\vz_T$ as the number of nodes $n$ increases. Therefore, the average-pooled graph-level representation also deviates less and less from $\vz_T$. By inspecting the proof, we can see that this $\vz_T$ is exactly the vector $\vmu_T$ in the definition of non-splitting (\cref{def:non-splitting GCN}). This means that $\vz_T$ cannot lie on a decision boundary for the classifier $\mathfrak C$. Hence, there is $\epsilon > 0$ such that $\mathfrak C$ is constant on:
    \begin{equation*}
        \{\vx \in \sR^{d(T)} \mid \forall i \in \{1, \ldots, d(T)\} \colon [\vz_T - \vx]_i < \epsilon\}
    \end{equation*}
    Therefore, the probability that the output of $\mathcal M$ is $\mathfrak C(\vz_T)$ tends to $1$ as $n$ tends to infinity.
\end{proof}


\section{Proof of the zero-one law for \texorpdfstring{\selfgGNN}{MeanGNN+}}
\label{app:mean}
Let us turn now to establishing a zero-one law for GNNs using mean aggregation. We place the same conditions as with \cref{thm:main result gcn}. This time the notion of `non-splitting' is as follows.

\begin{definition}\label{def:non-splitting mean}
    Consider a distribution $\sD(d)$ with mean $\vmu$. Let $\mathcal{M}$ be a \selfgGNN\ used for binary graph classification.  Define the sequence $\vmu_0, \ldots, \vmu_T$ of vectors inductively by $\vmu_0 \vcentcolon= \vmu$ and $\vmu_{t} \vcentcolon= \sigma((\mW_n^{(t)} + \mW_r^{(t)}) \vmu_{t-1} + \vb^{(t)})$. The classifier $\mathfrak{C}: \sR^{d(T)} \to \sB$ is \emph{non-splitting} for $\mathcal M$ if the vector $\vmu_T$ does not lie on a decision boundary for $\mathfrak C$.
\end{definition}

Again, in practice essentially all classifiers are non-splitting.

\begin{theorem}\label{thm:main result mean}
    Let $\mathcal{M}$ be a \selfgGNN\ used for binary graph classification and take $r \in [0,1]$. Then, $\mathcal{M}$ satisfies a \emph{zero-one law} with respect to graph distribution $\sG(n,r)$ and feature distribution $\sD(d)$ assuming the following conditions hold: (i) the distribution $\sD(d)$ is sub-Gaussian, (ii) the non-linearity $\sigma$ is Lipschitz continuous, (iii) the graph-level representation uses average pooling, (iv) the classifier $\mathfrak{C}$ is non-splitting.
\end{theorem}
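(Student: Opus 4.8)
The plan is to mirror the proof structure of \cref{thm:main result gcn} as closely as possible, since \cref{thm:main result mean} places exactly the same four conditions on the model. The backbone is again a key lemma asserting that the node embeddings become asymptotically constant: for every layer $t$ there is a vector $\vz_t$ (which I expect to coincide with the vector $\vmu_t$ from \cref{def:non-splitting mean}) such that, sampling from $\sG(n,r)$ and $\sD(d)$, for each coordinate $i$ and each $\epsilon > 0$ we have $\Pr(\forall v \colon \abs{[\rvx_v^{(t)} - \vz_t]_i} < \epsilon) \to 1$ as $n \to \infty$. Once this lemma is in hand, the main theorem follows exactly as in the \GCN\ case: average pooling preserves the asymptotic constancy, the limit $\vz_T = \vmu_T$ does not lie on a decision boundary by the non-splitting assumption, so $\mathfrak{C}$ is constant on an $\epsilon$-neighbourhood of $\vz_T$, and hence the output is asymptotically $\mathfrak{C}(\vz_T)$ with probability tending to $1$.

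First I would analyse the first-layer preactivations, which now split into two summands: a local mean term $\frac{1}{\abs{\mathcal N^+(v)}} \mW_n^{(1)} \sum_{u \in \mathcal N^+(v)} \rvx_u^{(0)}$ and a global readout term $\frac{1}{n} \mW_r^{(1)} \sum_{u \in V} \rvx_u^{(0)}$. For each term the strategy is the same: by \cref{lem:dot prod sub-Gaussian} and \cref{lem:sub-Gaussian centering} the centred summands are sub-Gaussian, so applying the Hoeffding inequality (\cref{thm:Hoeffding sub-Gaussian}) bounds the coordinatewise deviation of each averaged sum from its expectation by a quantity decaying in $n$. The global readout term is in fact cleaner than in \GCN, since dividing by the deterministic $n$ avoids the square-root normalisation; the local term requires the same control on the degrees $\abs{\mathcal N^+(u)} \geq \gamma n r$ via a union bound over nodes, conditioning on that high-probability event. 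The expected preactivation is $(\mW_n^{(1)} + \mW_r^{(1)})\vmu + \vb^{(1)}$ (both averaged terms have mean $\vmu$ after applying the respective weight matrix), so setting $\vz_1 \vcentcolon= \sigma((\mW_n^{(1)} + \mW_r^{(1)})\vmu + \vb^{(1)}) = \vmu_1$ and using Lipschitz continuity of $\sigma$ transfers the concentration from preactivations to activations, then a union bound over the $n$ nodes and $d(1)$ coordinates finishes the first layer.

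For the inductive step I would condition simultaneously on the degree event $\forall u \colon \abs{\mathcal N^+(u)} \geq \gamma n r$ and on the previous-layer event $\abs{[\rvx_u^{(t-1)} - \vz_{t-1}]_i} < \epsilon'$ for a suitably shrunk $\epsilon'$, both of which hold with probability tending to $1$. Under these events, the local-mean term at layer $t$ is a convex combination (weights summing to $1$) of vectors each within $\epsilon'$ of $\mW_n^{(t)}\vz_{t-1}$, and likewise the global term is within $\epsilon'$-scale of $\mW_r^{(t)}\vz_{t-1}$, so the preactivation is forced into a small neighbourhood of $(\mW_n^{(t)} + \mW_r^{(t)})\vz_{t-1} + \vb^{(t)}$; choosing $\epsilon'$ in terms of $\epsilon$, $C$ and $\norm{\mW_n^{(t)}}_\infty, \norm{\mW_r^{(t)}}_\infty$ as in the \GCN\ proof makes the resulting deviation at most $\epsilon$ after applying $\sigma$. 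This defines $\vz_t = \sigma((\mW_n^{(t)} + \mW_r^{(t)})\vz_{t-1} + \vb^{(t)}) = \vmu_t$, matching \cref{def:non-splitting mean}. The main obstacle I anticipate is bookkeeping the interaction of the two aggregation terms in the inductive step — ensuring the convex-combination bound for the mean term and the $1/n$-normalised bound for the readout term are combined with compatible choices of $\epsilon'$ and $\gamma$ so that everything collapses uniformly over all nodes; but this is essentially the same calculation as in \cref{lem:node features asymptotically constant gcn}, only duplicated, so no genuinely new probabilistic idea should be required.
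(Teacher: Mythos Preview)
Your proposal is correct and follows essentially the same approach as the paper: the same key lemma on asymptotic constancy of node embeddings, the same split of the preactivation into the local-mean and global-readout terms each handled by Hoeffding, and the same convex-combination argument for the inductive step with $\epsilon'$ chosen via $C$ and $\norm{\mW_n^{(t)}}_\infty,\norm{\mW_r^{(t)}}_\infty$. One harmless redundancy: in the inductive step the paper does not re-condition on the degree event $\abs{\mathcal N^+(u)} \ge \gamma nr$, since once all $\rvx_u^{(t-1)}$ are within $\epsilon'$ of $\vz_{t-1}$ the local mean is already a convex combination of nearby vectors regardless of the degree; you may drop that conditioning at layers $t \ge 2$.
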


Note that the result immediately applies to the \meanbGNN architecture, since it is a special case of \meangGNN.

The overall structure of the proof is the same as for \GCN. In particular, we prove the following key lemma stating that all node embeddings tend to fixed values.

\begin{lemma}\label{lem:node features asymptotically constant mean}
    Let $\mathcal M$ and $\sD(d)$ satisfy the conditions in \cref{thm:main result mean}. Then, for every layer $t$, there is $\vz_t \in \R^{d(t)}$ such when sampling a graph with node features from $\sG(n, r)$ and $\sD(d)$, for every $i \in \{1, \ldots, d(t)\}$ and for every $\epsilon > 0$ we have that:
    \begin{equation*}
        \Pr\left(\forall v \colon \abs*{\left[\rvx_v^{(t)} - \vz_t\right]_i} < \epsilon\right) \to 1 \quad\text{as }n \to \infty
    \end{equation*}
\end{lemma}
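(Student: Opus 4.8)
The plan is to follow the template established for \GCN in the proof of \cref{lem:node features asymptotically constant gcn}, inducting over the layers and showing that at each layer $t$ the node embeddings concentrate around the deterministic vector $\vz_t \vcentcolon= \vmu_t$ from \cref{def:non-splitting mean}. The only structural novelty relative to the \GCN argument is the global readout term $\frac 1 n \mW_r^{(t)} \sum_{u \in V} \rvx_u^{(t-1)}$, which must be shown to concentrate alongside the neighbourhood-mean term. The guiding observation is that, since the features are i.i.d.\ with mean $\vmu$ and independent of the graph, the conditional expectation of the first-layer preactivation given the graph is $(\mW_n^{(1)} + \mW_r^{(1)})\vmu + \vb^{(1)}$ for every node, so its image under $\sigma$ is exactly $\vmu_1$.

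For the base case I would first reproduce the neighbourhood-size control from the \GCN proof: using \cref{lem:prod sub-Gaussian Bernoulli} and \cref{thm:Hoeffding sub-Gaussian} together with a union bound, for any fixed $\gamma \in (0,1)$ the event $\{\forall u \in V \colon \abs{\mathcal N^+(u)} \geq \gamma nr\}$ has probability tending to $1$. Conditioning on this event (and, for the concentration step, on the graph structure), I would write the deviation $[\rvy_v^{(1)} - \Ex[\rvy_v^{(1)}]]_i$ as the sum of a neighbourhood term and a readout term, each a normalized sum of the independent, mean-zero sub-Gaussian scalars $[\mW_n^{(1)}(\rvx_u^{(0)} - \vmu)]_i$ and $[\mW_r^{(1)}(\rvx_u^{(0)} - \vmu)]_i$ (sub-Gaussian by \cref{lem:dot prod sub-Gaussian} and \cref{lem:sub-Gaussian centering}). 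Applying \cref{thm:Hoeffding sub-Gaussian} to each term separately --- the neighbourhood term divided by $\abs{\mathcal N^+(v)} \geq \gamma nr$ and the readout term divided by the deterministic $n$ --- gives tail bounds whose exponents grow with $n$. A union bound over all $v$ and all $i$, followed by the Lipschitz transfer from $\rvy_v^{(1)}$ to $\rvx_v^{(1)} = \sigma(\rvy_v^{(1)})$, then yields the claim for $t = 1$ with $\vz_1 = \vmu_1$.

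For the inductive step I would condition on the event, guaranteed by the inductive hypothesis to have probability tending to $1$, that $\abs{[\rvx_u^{(t-1)} - \vz_{t-1}]_i} < \epsilon'$ for every $u$ and $i$. On this event both the neighbourhood mean $\frac 1 {\abs{\mathcal N^+(v)}}\sum_{u \in \mathcal N^+(v)} \rvx_u^{(t-1)}$ and the global mean $\frac 1 n \sum_{u \in V} \rvx_u^{(t-1)}$ lie within $\epsilon'$ of $\vz_{t-1}$ in every component, so a purely deterministic triangle-inequality estimate bounds $\abs{[\rvy_v^{(t)} - \va_t]_i}$ by $(\norm{\mW_n^{(t)}}_\infty + \norm{\mW_r^{(t)}}_\infty)\epsilon'$, where $\va_t \vcentcolon= (\mW_n^{(t)} + \mW_r^{(t)})\vz_{t-1} + \vb^{(t)}$. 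Choosing $\epsilon'$ to absorb this factor and the Lipschitz constant, and setting $\vz_t \vcentcolon= \sigma(\va_t)$, gives $\abs{[\rvx_v^{(t)} - \vz_t]_i} < \epsilon$ on the conditioning event; since $\vz_{t-1} = \vmu_{t-1}$ by induction, $\vz_t = \vmu_t$. Notably, the exact mean normalization makes this step cleaner than in the \GCN case, as no control of neighbourhood sizes is needed here.

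I expect the genuine work to be concentrated in the base case: the inductive step is a deterministic Lipschitz estimate, so the only place requiring a probabilistic argument is the first layer. There the main point of care is handling the two aggregation terms simultaneously --- ensuring that the neighbourhood-mean term concentrates (which forces the neighbourhood-size control, since its Hoeffding exponent scales with $\abs{\mathcal N^+(v)}$ rather than $n$) while the global-readout term concentrates through its deterministic denominator $n$. Once both are shown to vanish in probability, the remainder follows the \GCN argument essentially verbatim.
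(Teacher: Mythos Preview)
Your proposal is correct and follows essentially the same approach as the paper: a first-layer concentration argument via Hoeffding's inequality applied separately to the neighbourhood-mean and global-readout terms (with neighbourhood-size control for the former), followed by a purely deterministic Lipschitz/triangle-inequality propagation through subsequent layers. Your exposition is in fact slightly cleaner in two places --- you make explicit the conditioning on the graph structure before applying Hoeffding, and you correctly identify $\vz_t$ with $\vmu_t$ from \cref{def:non-splitting mean} --- but these are presentational rather than substantive differences.
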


We are ready to present the proofs of the statements. The proof of the key lemma works in a similar way to the \GCN case.

\begin{proof}[Proof of \cref{lem:node features asymptotically constant mean}]
    Let $C$ be the Lipschitz constant for $\sigma$. Start by considering the first layer preactivations $\rvy_{v}^{(1)}$ and drop superscript $(1)$'s for notational for clarity. We have that:
    \begin{equation*}
        \rvy_{v} = 
            \frac 1 {\abs{\mathcal N^+(v)}}\mW_n \sum_{ v \in \mathcal{N}^+(u)} \rvx_{v}^{(0)}
            + \frac 1 n \mW_r \sum_{u \in V} \rvx_{u}^{(0)}
            + \vb
    \end{equation*}
    Fix $i \in \{1, \ldots, d(1)\}$. We can bound the deviation from the expected value as follows:
    \begin{equation*}
        \abs{[\rvy_v - \Ex[\rvy_v]]_i} \leq 
            \frac 1 {\abs{\mathcal N^+(v)}} \abs*{\sum_{ u \in \mathcal{N}^+(v)} \left[\mW_n \rvx_{u}^{(0)} - \Ex\left[\mW_n \rvx_{u}^{(0)}\right]\right]_i}
            + \frac 1 {n} \abs*{\sum_{ u \in V} \left[\mW_n \rvx_{u}^{(0)} - \Ex\left[\mW_n \rvx_{u}^{(0)}\right]\right]_i}
    \end{equation*}

    By \cref{lem:dot prod sub-Gaussian} and \cref{lem:sub-Gaussian centering} both each $\left[\mW_n \rvx_{u}^{(0)} - \Ex\left[\mW_n \rvx_{u}^{(0)}\right]\right]_i$ and each $\left[\mW_n \rvx_{u}^{(0)} - \Ex\left[\mW_n \rvx_{u}^{(0)}\right]\right]_i$ are sub-Gaussian. We can therefore apply Hoeffding's Inequality to their sums. First, by \cref{thm:Hoeffding sub-Gaussian} there is a constant $K_{\mathrm g}$ such that for any $t > 0$:
    \begin{align*}
        \Pr\left(\frac 1 {n} \abs*{\sum_{ u \in V} \left[\mW_n \rvx_{u}^{(0)} - \Ex\left[\mW_n \rvx_{u}^{(0)}\right]\right]_i} \leq t\right)
            &=\Pr\left(\abs*{\sum_{ u \in V} \left[\mW_n \rvx_{u}^{(0)} - \Ex\left[\mW_n \rvx_{u}^{(0)}\right]\right]_i} \leq t n\right) \\
            &\leq 2\exp\left(- \frac {K_{\mathrm g} t^2 n^2} n\right) \\
            &= 2\exp\left(- K_{\mathrm g} t^2 n\right)
    \end{align*}
    
    Second, applying \cref{thm:Hoeffding sub-Gaussian} again there is a constant $K_{\mathrm n}$ such that for any $t > 0$:
    \begin{align*}
        &\quad\Pr\left(\frac 1 {\abs{\mathcal N^+(v)}} \abs*{\sum_{ u \in \mathcal{N}^+(v)} \left[\mW_n \rvx_{u}^{(0)} - \Ex\left[\mW_n \rvx_{u}^{(0)}\right]\right]_i} \leq t\right)\\
             &= \Pr\left(\abs*{\sum_{ u \in \mathcal{N}^+(v)} \left[\mW_n \rvx_{u}^{(0)} - \Ex\left[\mW_n \rvx_{u}^{(0)}\right]\right]_i} \leq t \abs{\mathcal N^+(v)}\right) \\
            &\leq  2\exp\left(- \frac {K_{\mathrm n} t^2 \abs{\mathcal N^+(v)}^2} n\right)
    \end{align*}
    Now $\abs{\mathcal N(v)}$ is a the sum of $n$ independent $0$-$1$ Bernoulli random variables with success probability $r$. Hence, as in the proof of \cref{lem:node features asymptotically constant gcn}, by Hoeffding's Inequality  (\cref{thm:Hoeffding sub-Gaussian}) there is a constant $K_0$ such that for every $\gamma \in (0,1)$:
    \begin{align*}
        \Pr(\abs{\mathcal N^+(v)} \geq \gamma nr)
            &\leq 2\exp\left( - \frac{K_0 ((1 - \gamma)nr - 1)^2} n\right)
    \end{align*}
    We can then use the Law of Total Probability, partitioning on whether $\abs{\mathcal N^+(v)} \geq \gamma nr$, to get a bound as follows:
    \begin{align*}
        &\Pr\left(\frac 1 {\abs{\mathcal N^+(v)}} \abs*{\sum_{ u \in \mathcal{N}^+(v)} \left(\mW_n \rvx_{u}^{(0)} - \Ex\left[\mW_n \rvx_{u}^{(0)}\right]\right)} \leq t\right) \\
            &\qquad\leq 2\exp\left(- \frac {K_{\mathrm n} t^2 (\gamma nr)^2} n\right) 
                + 2\exp\left( - \frac{K_0 ((1 - \gamma)nr - 1)^2} n\right)
    \end{align*}
    From now on fix any $\gamma \in (0,1)$.

    Finally let $\vz_1 \vcentcolon= \sigma(\E[\rvy_v])$ for any $v$ (this is the same for every $v$). Applying the two bounds with $t = C \epsilon / 2$ we can bound the deviation of $\rvx_v$ from $\vz_1$ as follows, using the Lipschitz constant $C$.
    \begin{align*}
        \Pr(\abs{[\rvx_v - \vz_1]_i} \geq \epsilon)
            &= \Pr(\abs{[\sigma(\rvy_v) - \sigma(\E[\rvy_v])]_i} \geq \epsilon) \\
            &\leq \Pr(\abs{[\rvy_v - \E[\rvy_v]]_i} \geq C\epsilon) \\
            &\leq \left(
                \begin{array}{l}
                    \Pr\left(\frac 1 {\abs{\mathcal N^+(v)}} \abs*{\sum_{ u \in \mathcal{N}^+(v)} \left[\mW_n \rvx_{u}^{(0)} - \Ex\left[\mW_n \rvx_{u}^{(0)}\right]\right]_i} \leq \frac{C\epsilon}2\right) \\
                    + \Pr\left(\frac 1 {n} \abs*{\sum_{ u \in V} \left[\mW_n \rvx_{u}^{(0)} - \Ex\left[\mW_n \rvx_{u}^{(0)}\right]\right]_i} \leq \frac{C\epsilon}2\right)
                \end{array}
            \right) \\
            &\leq \left(
                \begin{array}{l}
                    2\exp\left(- \frac {K_{\mathrm n} (C\epsilon\gamma r)^2 n} 4\right) \\
                    + 2\exp\left( - \frac{K_0 ((1 - \gamma)nr - 1)^2} n\right) \\
                    + 2\exp\left(- K_{\mathrm g} \frac{(C \epsilon)^2 n}{4}\right)
                \end{array}
            \right)
    \end{align*}
    Taking a union bound, the probability that $\abs{[\rvx_v - \vz_1]_i} < \epsilon$ for every node $v$ and every $i \in \{1, \ldots, d(1)\}$ is at least:
    \begin{equation*}
        1 - nd(i) \Pr(\abs{[\rvx_v - \vz_1]_i} \geq \epsilon)
    \end{equation*}
    This tends to $1$ as $n$ tends to infinity.

    Let us turn now to the second layer. By applying the above result for the first layer to $\epsilon' = \epsilon / (C\max\{\norm{\mW_n^{(2)}}_\infty), \norm{\mW_r^{(2)}}_\infty\}$, we have that for each $i \in \{1, \ldots, d(2)\}$:
    \begin{equation*}
        \Pr\left(\forall v \colon \abs*{\left[\rvx_v^{(1)} - \vz_1\right]_i} < \epsilon'\right) \to 1 \quad\text{as }n \to \infty
    \end{equation*}
    Condition on the event that $\abs{[\rvx_v^{(1)} - \vz_1]_i} < \epsilon'$ for every node $v$ and every $i \in \{1, \ldots, d(1)\}$. 
    
    Fix $v$ and consider the second-layer preactivations:
    \begin{equation*}
        \rvy_{v}^{(2)} = 
            \frac 1 {\abs{\mathcal N^+(v)}}\mW_n^{(2)} \sum_{ u \in \mathcal{N}^+(v)} \rvx_{u}^{(1)}
            + \frac 1 n \mW_r^{(2)} \sum_{u \in V} \rvx_{u}^{(1)}
            + \vb^{(2)}
    \end{equation*}
    Define:
    \begin{equation*}
        \va_2 \vcentcolon= \frac 1 {\abs{\mathcal N^+(v)}}\mW_n^{(2)} \sum_{ u \in \mathcal{N}^+(v)} \vz_1
        + \frac 1 n \mW_r^{(2)} \sum_{u \in V} \vz_1
        + \vb^{(2)}
    \end{equation*}
    Fix $i \in \{1, \ldots, d(2)\}$. Then:
    \begin{align*}
        \abs*{[\rvy_{v}^{(2)} - \va_2]_i}
            &= \abs*{\frac 1 {\abs{\mathcal N^+(v)}}\mW_n^{(2)} \sum_{ u \in \mathcal{N}^+(v)} (\rvx_{u}^{(1)} - \vz_1)
                + \frac 1 n \mW_r^{(2)} \sum_{u \in V} (\rvx_{u}^{(1)} - \vz_1)} \\
            &\leq \frac 1 {\abs{\mathcal N^+(v)}}\norm*{\mW_n^{(2)}}_\infty \sum_{ u \in \mathcal{N}^+(v)} \norm*{\rvx_{u}^{(1)} - \vz_1}_\infty
            + \frac 1 n \norm*{\mW_r^{(2)}}_\infty \sum_{u \in V} \norm*{\rvx_{u}^{(1)} - \vz_1}_\infty \\
            &\leq \frac{\epsilon} C
    \end{align*}
    
    Let $\vz_2 \vcentcolon= \sigma(\va_2)$. Then we can use the Lipschitz continuity of $\sigma$ to bound the deviation of the activation from $\vz_2$ as follows.
    \begin{equation*}
        \abs*{[\rvx_{v}^{(2)} - \vz_2]_i}
            = \abs*{[\sigma(\rvy_{v}^{(2)}) - \sigma(\va_2)]_i}
            \leq C\abs*{[\rvy_{v}^{(2)} - \va_2]_i}
            \leq \epsilon
    \end{equation*}
    Since the probability that $\abs{[\rvx_v^{(1)} - \vz_1]_i} < \epsilon'$ for every node $v$ and every $i \in \{1, \ldots, d(1)\}$ tends to $1$, we get that the probability that $\abs{[\rvx_v^{(2)} - \vz_2]_i} < \epsilon$ for every node $v$ and every $i \in \{1, \ldots, d(2)\}$ also tends to $1$.

    Finally we apply the above argument inductively through all layers to get the desired result.
\end{proof}

The proof of the main result now proceeds as in the proof of \cref{thm:main result gcn}.

\begin{proof}[Proof of \cref{thm:main result mean}]
    By \cref{lem:node features asymptotically constant mean} the final node embeddings $\rvx_v^{(T)}$ deviate less and less from $\vz_T$ as the number of nodes $n$ increases. Therefore, the average-pooled graph-level representation also deviates less an less from $\vz_T$. By inspecting the proof, we can see that this $\vz_T$ is exactly the vector $\vmu_T$ in the definition of non-splitting (\cref{def:non-splitting mean}). This means that $\vz_T$ cannot lie on a decision boundary for the classifier $\mathfrak C$. Hence, there is $\epsilon > 0$ such that $\mathfrak C$ is constant on:
    \begin{equation*}
        \{\vx \in \sR^{d(T)} \mid \forall i \in \{1, \ldots, d(T)\} \colon [\vz_T - \vx]_i < \epsilon\}
    \end{equation*}
    Therefore, the probability that the output of $\mathcal M$ is $\mathfrak C(\vz_T)$ tends to $1$ as $n$ tends to infinity.
\end{proof}


\section{Proof of the zero-one law for \texorpdfstring{\sumgGNN}{SumGNN+}}

The proof of the key lemma works rather differently to the \GCN and \selfgGNN case, but we still make important use of Hoeffding's Inequality.

\begin{proof}[Proof of \cref{lem:node features asymptotically constant sum}]
    Consider the first layer preactivations $\rvy_{v}^{(1)}$ and drop superscript $(1)$'s for notational clarity. We can rearrange the expression as follows:
    \begin{equation*}
        \rvy_{v} = (\mW_s+\mW_g)\rvx_{v}^{(0)} + (\mW_n+\mW_g)\sum_{u \in \mathcal N(v)}\rvx_{u}^{(0)} + \mW_g \sum_{u \in V \setminus \mathcal N^+(v)}\rvx_{u}^{(0)} + \vb
    \end{equation*}
    For $u,v \le n$ define:
    \begin{equation*}
        \rvw_{u,v} = (\ermA_{uv}\mW_n + \mW_g)\rvx_{u}^{(0)}1_{u \neq v} + (\mW_s+\mW_g)\rvx_{u}^{(0)}1_{u=v}
    \end{equation*}
    Using this, we can rewrite:
    \begin{equation*}
        \rvy_{v} = \sum_{u = 1}^n\rvw_{u,v} + \vb
    \end{equation*}
    By assumption on the distribution from which we draw graphs with node features, the $\rvw_{u,v}$'s are independent for any fixed $v$.

    Now fix $i \in \{1, \ldots, d(1)\}$. By \cref{lem:dot prod sub-Gaussian} and \cref{lem:prod sub-Gaussian Bernoulli} we have that each $[\rvw_{u,v}]_i$ is sub-Gaussian. We therefore apply Hoeffding's Inequality to the sum. Note that $\rvw_{u,v}$ can have one of two (sub-Gaussian) distributions, depending on whether $u=v$. Therefore, by \cref{thm:Hoeffding sub-Gaussian} and \cref{lem:sub-Gaussian centering}, there are constants $c$ and $K$ such that, no matter how many nodes $n$ there are, we have that:
    \begin{equation*}
        \Pr(\abs{[\rvy_{v}]_i - \Ex[\rvy_{v}]_i} \geq t) 
            = \Pr\left(\abs*{\sum_{u = 1}^n([\rvw_{u, v}]_i - \Ex[\rvw_{u, v}]_i)} \geq t\right)
            \leq 2\exp\left(- \frac{ct^2}{K^2n}\right)
    \end{equation*}

    Let's now compute $\Ex[\rvy_{v}]$, by first computing $\Ex[\rvw_{u, v}]$. When $u = v$ we have that:
    \begin{align*}
        \Ex [\rvw_{v,v}] 
            &= \Ex [(\mW_s+\mW_g)\rvx_{v}^{(0)}] \\
            &= (\mW_s+\mW_g) \Ex[\rvx_{v}^{(0)}] \\
            &= (\mW_s+\mW_g)\vmu
    \end{align*}
    When $u \neq v$ we have, using the independence of $\rmA_{uv}$ and $\rvx_{v}$:
    \begin{align*}
        \Ex [\rvw_{u,v}] 
            &= \Ex ([\rmA_{uv}\mW_n + \mW_g)\rvx_{u}^{(0)}] \\
            &= (\Ex [\rmA_{uv}] \mW_n + \mW_g) \Ex [\rvx_{v}^{(0)}] \\
            &= (r\mW_n + \mW_g)\vmu
    \end{align*}
    Therefore (separating $\rvw_{v,v}$ from $\rvw_{u,v}$ for $u \neq v$):
    \begin{equation*}
        \Ex[\rvy_{v}] 
            = \sum_{u = 1}^n \Ex[\rvw_{u,v}] + \vb
            = (n - 1) (r\mW_n + \mW_g)\vmu + (\mW_s+\mW_g)\vmu + \vb
    \end{equation*}

    Since $\mathcal M$ is synchronously saturating for $\sG(n,r)$ and $\sD(d)$, we know that $[(r\mW_n + \mW_g)\vmu]_i \neq 0$. Assume without loss of generality that $[(r\mW_n + \mW_g)\vmu]_i > 0$. Then the expected value of $[\rvy_{v}]_i$ increases as $n$ tends to infinity; moreover we have a bound on how much $[\rvy_{v}]_i$ can vary around its expected value.

    Recall that the non-linearity $\sigma$ is eventually constant in both directions. In particular, it is constant with value $\sigma_\infty$ above some $x_\infty$. When $\Ex[\rvy_{v}]_i > x_\infty$ the probability that $[\rvy_{v}]_i$ doesn't surpass this threshold is:
    \begin{align*}
        \Pr([\rvy_{v}]_i < x_\infty)
            &\leq \Pr(\abs{[\rvy_{v}]_i - \Ex[\rvy_{v}]_i} > \abs{x_\infty - \Ex[\rvy_{v}]_i}) \\
            &\leq 2\exp\left(- \frac{c{\abs{x_\infty - \Ex[\rvy_{v}]_i}}^2}{K^2n}\right)
    \end{align*}
    There is a constant $\rho$ such that $\abs{x_\infty - \Ex[\rvy_{v}]_i} \geq \rho n$. Hence for sufficiently large $n$ (i.e.\@ such that $\Ex[\rvy_{v}]_i > x_\infty$):
    \begin{equation*}
        \Pr([\rvy_{v}]_i < x_\infty)
            \leq 2\exp\left(- \frac{c\rho^2 n^2}{K^2n}\right)
            = 2\exp\left(- \frac{c\rho^2 n}{K^2}\right)
    \end{equation*}

    Since the activation $[\rvx_{v}]_i = \sigma([\rvy_{v}]_i)$, the probability that $[\rvx_{v}]_i$ takes value $\sigma_\infty$ is at least $1 - 2\exp\left(- {c\rho^2 n} / {K^2}\right)$. Now, for each node $v$ and each $i \in \{1, \ldots, d(1)\}$, the activation $[\rvx_{v}]_i$ is either $\sigma_\infty$ with high probability or $\sigma_{-\infty}$ with high probability. By taking a union bound, for sufficiently large $n$ the probability that every $[\rvx_{v}]_i$ takes its corresponding value is at least:
    \begin{equation*}
        1 - 2nd(1)\exp\left(- \frac{c\rho^2 n}{K^2}\right)
    \end{equation*}
    This tends to $1$ as $n$ tends to infinity. In other words, there is $\vz_1 \in \{\sigma_{-\infty}, \sigma_\infty\}^{d(1)}$ such that $\rvx_{v}^{(1)} = \vz_1$ for every $v$ asymptotically.

    We now proceed to the second layer, and condition on the event that $\rvx_{v}^{(1)} = \vz_1$ for every $v$. In this case, we have that the second layer preactivation for node $v$ is as follows.
    \begin{equation*}
        \rvy_{v}^{(2)} = (\mW_s^{(2)} + \abs{\mathcal N(v)}\mW_n^{(2)} + n \mW_g^{(2)})\vz_1 + \vb^{(2)}
    \end{equation*}
    Since we're in the situation where every $\rvx_{u}^{(1)} = \vz_1$, the degree $\abs{\mathcal N(v)}$ is simply binomially distributed $\Bin(n, r)$. The preactivation $\rvy_{v}^{(2)}$ then has expected value:
    \begin{equation*}
        \Ex[\rvy_{v}^{(2)}] = n (r\mW_n^{(2)} + \mW_g^{(2)})\vz_1 + \mW_s^{(2)}\vz_1 + \vb^{(2)}
    \end{equation*}

    Fix $i \in \{1, \ldots, d(2)\}$. Since $\mathcal M$ is synchronously saturating for $\sG(n,r)$ and $\sD(d)$, we have that $[(r\mW_n^{(2)} + \mW_g^{(2)})\vz_1]_i \neq 0$. Assume without loss of generality that $[(r\mW_n^{(2)} + \mW_g^{(2)})\vz_1]_i > 0$. Then $\Ex[\rvy_{v}^{(2)}]_i$ tends to infinity as $n$ increases. 
    
    Furthermore, we can view $[n (r\mW_n^{(2)} + \mW_g^{(2)})\vz_1]_i$ as the sum of $n$ Bernoulli random variables (which take values $[(\mW_n^{(2)} + \mW_g^{(2)})\vz_1]_i$ and $[\mW_g^{(2)}\vz_1]_i$). Since by \cref{lem:prod sub-Gaussian Bernoulli} Bernoulli random variables are sub-Gaussian, as in the first-layer case we can apply Hoeffding's Inequality to bound the probability that $[\rvy_{v}^{(2)}]_i$ is less than $x_\infty$. We get that, for sufficiently large $n$, there is a constant $K$ such that this probability is bounded by $2\exp\left(- K n\right)$.

    Then, as before, we find $\vz_2 \in \{\sigma_{-\infty}, \sigma_\infty\}^{d(2)}$ such that, for sufficiently large $n$, every $\rvx_{v}^{(2)} = \vz_2$ with probability at least $1-2nd\exp\left(- K n\right)$.

    Finally, this argument is applied inductively through all layers. As the number of layers remains constant (since $\mathcal M$ is fixed), we find that the node embeddings throughout the model are asymptotically constant.
\end{proof}

With the key lemma in place, we can now prove the main theorem.

\begin{proof}[Proof of \cref{thm:main result sum}]
    Applying \cref{lem:node features asymptotically constant sum} to the final layer, we find $\vz_T \in \{\sigma_{-\infty}, \sigma_\infty\}^{d(T)}$ such that every $\rvx_{v}^{(T)} = \vz_T$ with probability tending to $1$. Since we use either average or component-wise maximum pooling, then means that the final graph-level representation is asymptotically constant, and thus the output of the classifier must be asymptotically constant.
\end{proof}


\section{Proof of the uniform expressive power of \texorpdfstring{\sumgGNN}{SumGNN+} with random features}

We make use of a result due to \citet{AbboudCGL21} which shows that \sumgGNN models with random features can approximate any graph invariant on graphs with a fixed number of nodes.

\begin{definition}
    Let $f$ be a function on graphs, and let $\zeta$ be a random function on graphs. Take $\delta > 0$ and $N \in \sN$. Then $\zeta$ \emph{$\delta$-approximates $f$ up to $N$} if:
    \begin{equation*}
        \forall n \leq N \colon \Pr(\zeta(G) = f(G) \mid \abs G = n) \geq 1 - \delta
    \end{equation*}
\end{definition}

For completeness, we state the definition of the linearized sigmoid here. 

\begin{definition}
    The \emph{linearized $\mathrm{sigmoid}$} is the function $\R \to \R$ defined as follows:
    \begin{equation*}
        x \mapsto \left\{
            \begin{array}{ll}
                -1 & \text{if }x \in (-\infty, -1), \\
                x & \text{if }x \in [-1, 1), \\
                1 & \text{otherwise.}
            \end{array}
        \right.
    \end{equation*}
\end{definition}

\begin{theorem}\label{thm:non-uniform approximation RNI}
    Let $\xi$ be any graph invariant. For every $N \in \sN$ and $\delta > 0$ there is a \sumgGNN with random features $\mathcal M$ which $\delta$-approximates $\xi$ up to $N$. Moreover, $\mathcal M$ uses the linearized $\mathrm{sigmoid}$ as the non-linearity and the distribution of the initial node embeddings consists of $d$ iid $U[0,1]$ random variables.
\end{theorem}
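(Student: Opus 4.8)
The plan is to obtain the statement as a restatement, in the present paper's notation, of the universality theorem of \citet{AbboudCGL21}, verifying that their construction fits the specific \sumgGNN template, non-linearity, and feature distribution required here. The conceptual engine is \emph{individualisation}. Once the node bound $N$ is fixed there are only finitely many underlying graphs of size at most $N$ to get right, and since each node receives a $d$-dimensional vector of iid $U[0,1]$ entries, with probability $1$ all nodes of a given graph receive distinct feature vectors. Conditioned on this almost-sure event the featured graph is rigid, so its isomorphism type pins down the underlying unlabelled graph and hence the value $\xi(G)$; the task reduces to reading off this value.

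First I would invoke \citet{AbboudCGL21} in real-valued form: for any graph invariant regarded as a real-valued target and any desired accuracy, there is a random-feature message-passing model whose output is within that accuracy of the target with probability at least $1-\delta$, and crucially this holds for \emph{every fixed} graph on at most $N$ nodes. Because $\xi$ is Boolean I would set the accuracy below $1/2$; composing with the sigmoid classifier and thresholding at $1/2$ then converts the real approximation into an exact recovery of $\xi(G)$ on the good event. Since the per-graph bound holds uniformly, averaging over $G \sim \sG(n,1/2)$ preserves it, giving $\Pr(\zeta(G) = \xi(G) \mid \abs G = n) \ge 1-\delta$ for every $n \le N$, which is exactly $\delta$-approximation up to $N$.

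The remaining work is to match the architectural specifics. The model of \citet{AbboudCGL21} uses sum aggregation together with a global readout and a final MLP, which is precisely the \sumgGNN template: the readout term $\mW_r^{(t)}\sum_{u \in V}\rvx_u^{(t-1)}$ supplies the global aggregation, and a constant number of additional layers realises the concluding MLP. I would then argue that the linearized $\mathrm{sigmoid}$ suffices as the non-linearity. Because we only consider graphs of size at most $N$ with features bounded in $[0,1]$, every relevant preactivation lies in a fixed compact range, so after rescaling the weights one can keep the arguments inside the linear region $[-1,1]$, where the linearized sigmoid acts as the identity, thereby recovering the piecewise-linear expressivity the construction needs, while still being free to saturate to produce a crisp $\{0,1\}$ output.

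The main obstacle is exactly this last matching step. Universal-approximation arguments usually lean on non-saturating activations, whereas the linearized sigmoid is bounded and eventually constant, so one must carefully confine the approximating computation to its linear regime (via weight rescaling and bounded inputs) while simultaneously using its saturation where a hard decision is wanted. A secondary subtlety is ensuring the guarantee holds uniformly over the continuum of feature realisations of each fixed graph rather than for a single draw; this is already built into \citet{AbboudCGL21}'s bound, whose failure event (a feature collision, or an output landing near the threshold) has probability at most $\delta$ independently of the underlying graph.
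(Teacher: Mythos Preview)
Your proposal is correct and takes essentially the same approach as the paper: the paper's entire proof is the single line ``See \citep[Theorem~1]{AbboudCGL21},'' so both you and the paper obtain the statement by invoking the universality result of \citet{AbboudCGL21}. Your additional discussion of why their construction matches the \sumgGNN template, the linearized sigmoid, and the $U[0,1]$ feature distribution is more detailed than what the paper provides, but it is elaboration of the same citation rather than a different route.
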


\begin{proof}
    See \citep[Theorem~1]{AbboudCGL21}.
\end{proof}

With this result we can now prove the uniform expressivity result.

\begin{proof}[Proof of \cref{thm:uniform expressivity RNI}]
    First, $\xi$ satisfies a zero-one law for $\sG(n, 1/2)$. Without loss of generality assume that $\xi$ is asymptotically $1$. There is $N \in \sN$ such that for every $n > N$ we have:
    \begin{equation*}
        \Pr(\xi(G) = 1 \mid G \sim \sG(n, 1/2)) \geq 1 - \delta
    \end{equation*}
    Note that this $N$ depends on both $\xi$ and $\delta$.

    Second, by \cref{thm:non-uniform approximation RNI} there is a \sumgGNN with random features $\mathcal M$ which $\delta$-approximates $\xi$ up to $N$. Moreover, $\mathcal M'$ uses the linearized $\mathrm{sigmoid}$ as the non-linearity and the distribution of the initial node embeddings consists of $d$ iid $U[0,1]$ random variables.

    Using the global readout and the linearized $\mathrm{sigmoid}$, we can condition the model behavior on the number of nodes. We give a rough description of the model as follows. Define a \sumgGNN with random features $\mathcal M$ by extending $\mathcal M'$ as follows.
    \begin{itemize}
        \item Increase the number of layers to at least three.
        \item Increase each embedding dimension by $1$. For convenience call this the $0$th component of each embedding.
        \item Use the bias term in the first layer to ensure that the $0$th component of the activation $\rvx_v^{(1)}$ for each node $v$ is $1$.
        \item Use the global readout to threshold the number of nodes on $N$. The $0$th row of the matrix $\mW_g^{(2)}$ should have a $2$ in the $0$th position and $0$'s elsewhere. The $0$th component of the bias vector $\vb^{(2)}$ should be $2N - 1$. This ensures that the $0$th component of every activation $\rvx_v^{(2)}$ is $1$ if $n > N$ and $-1$ otherwise.
        \item Propagate this value through the $0$th component of each layer embedding.
        \item In the final layer, use this value to decide whether to output what $\mathcal M'$ would output, or simply to output $1$.
    \end{itemize}
    For any $n \leq N$ the model $\mathcal M$ behaves like $\mathcal M'$. Therefore:
    \begin{equation*}
        \Pr(\xi(G) = \mathcal M(G) \mid \abs G = n) \geq 1 - \delta
    \end{equation*}
    On the other hand, for $n > N$ the model $\mathcal M$ simply outputs $1$ and so:
    \begin{equation*}
        \Pr(\xi(G) = \mathcal M(G) \mid \abs G = n) = \Pr(\xi(G) = 1 \mid \abs G = n) \geq 1 - \delta
    \end{equation*}
    Thus $\mathcal M$ uniformly $\delta$-approximates $\xi$.
\end{proof}

\section{Further Experiments}

In this section, we focus on \GCNs and provide further experiments regarding our results. In particular, we pose the following questions:

\begin{enumerate}[leftmargin=.75cm]
    \item Our theoretical results entail a zero-one law for a large class of distributions: do we empirically observe a zero-one law when node features are instead drawn from a normal distribution (\Cref{ssec:normal initial node features})? 
    \item Our theoretical results state a zero-one law for a large class of non-linearities: do we empirically observe a zero-one law when considering other common non-linearities (\Cref{ssec:other non-linearities})?
    \item Does a zero-one law also manifest itself empirically for GAT models (\Cref{ssec:GAT})?
    \item Do we empirically observe a zero-one law if we were to consider sparse Erd\H{o}s-R{\'e}nyi graphs (\Cref{ssec:sparse er})?
    \item Is there empirical evidence for our results to apply to other random graph models, such as the Barabási-Albert model (\Cref{ssec:ba model})? 
\end{enumerate}

\subsection{Experiments with initial node features drawn from a normal distribution}
\label{ssec:normal initial node features}

\begin{figure*}[!h]
\centering
\begin{subfigure}
    \centering
    \includegraphics[width=0.32\columnwidth]{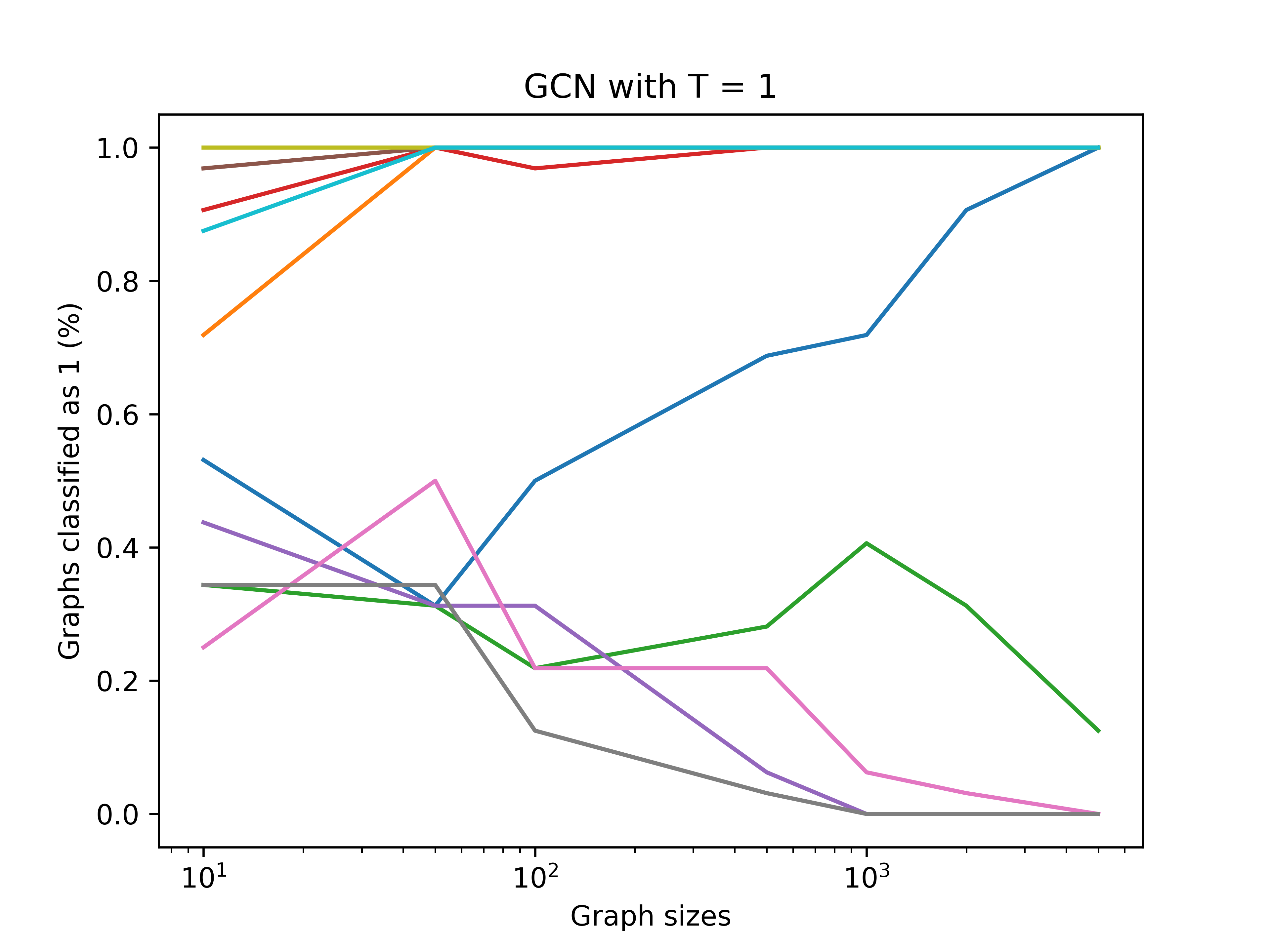}
\end{subfigure}
\hfill
\begin{subfigure}
    \centering
    \includegraphics[width=0.32\columnwidth]{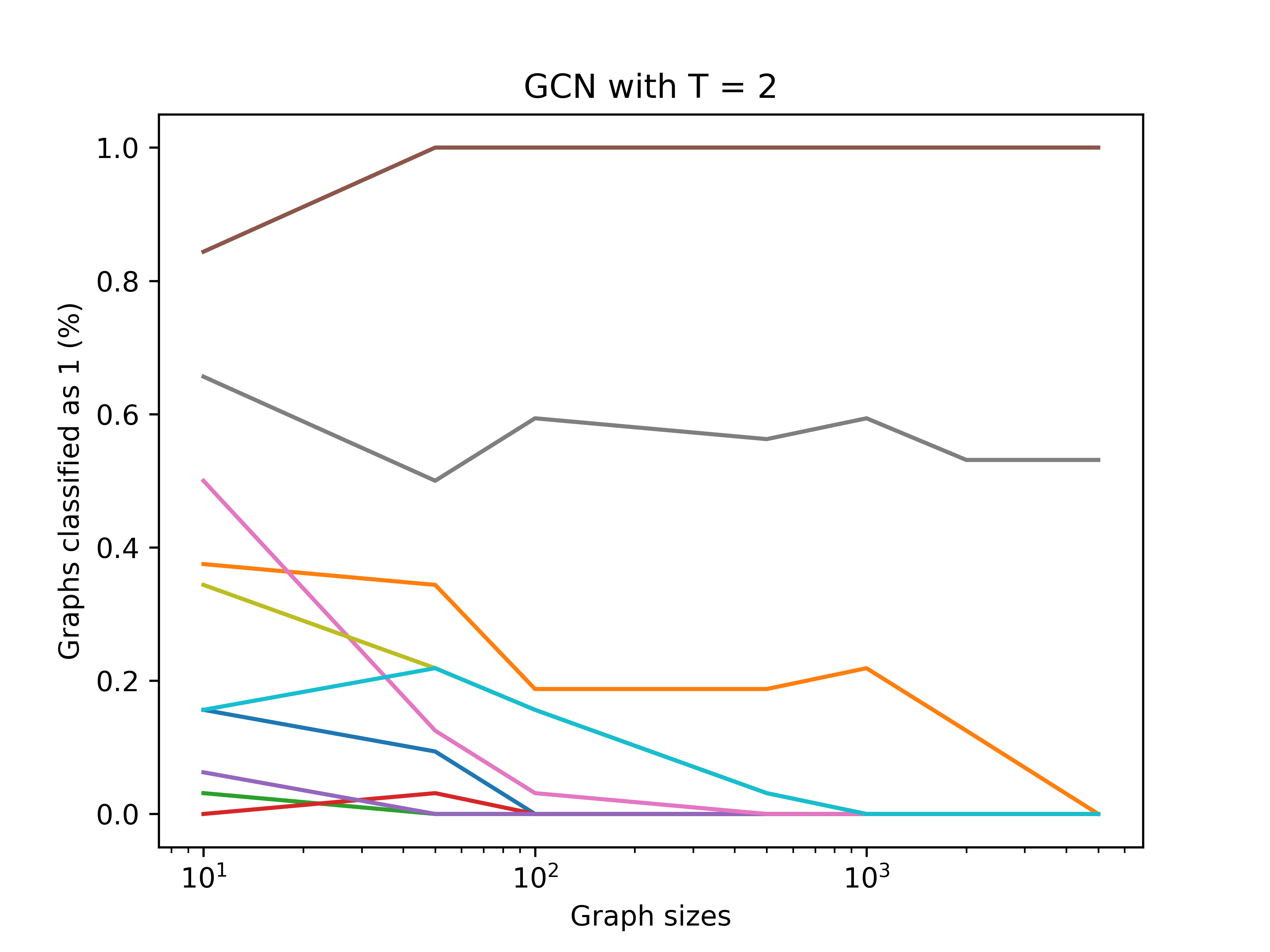}
\end{subfigure}
\hfill
\begin{subfigure}
    \centering
    \includegraphics[width=0.32\columnwidth]{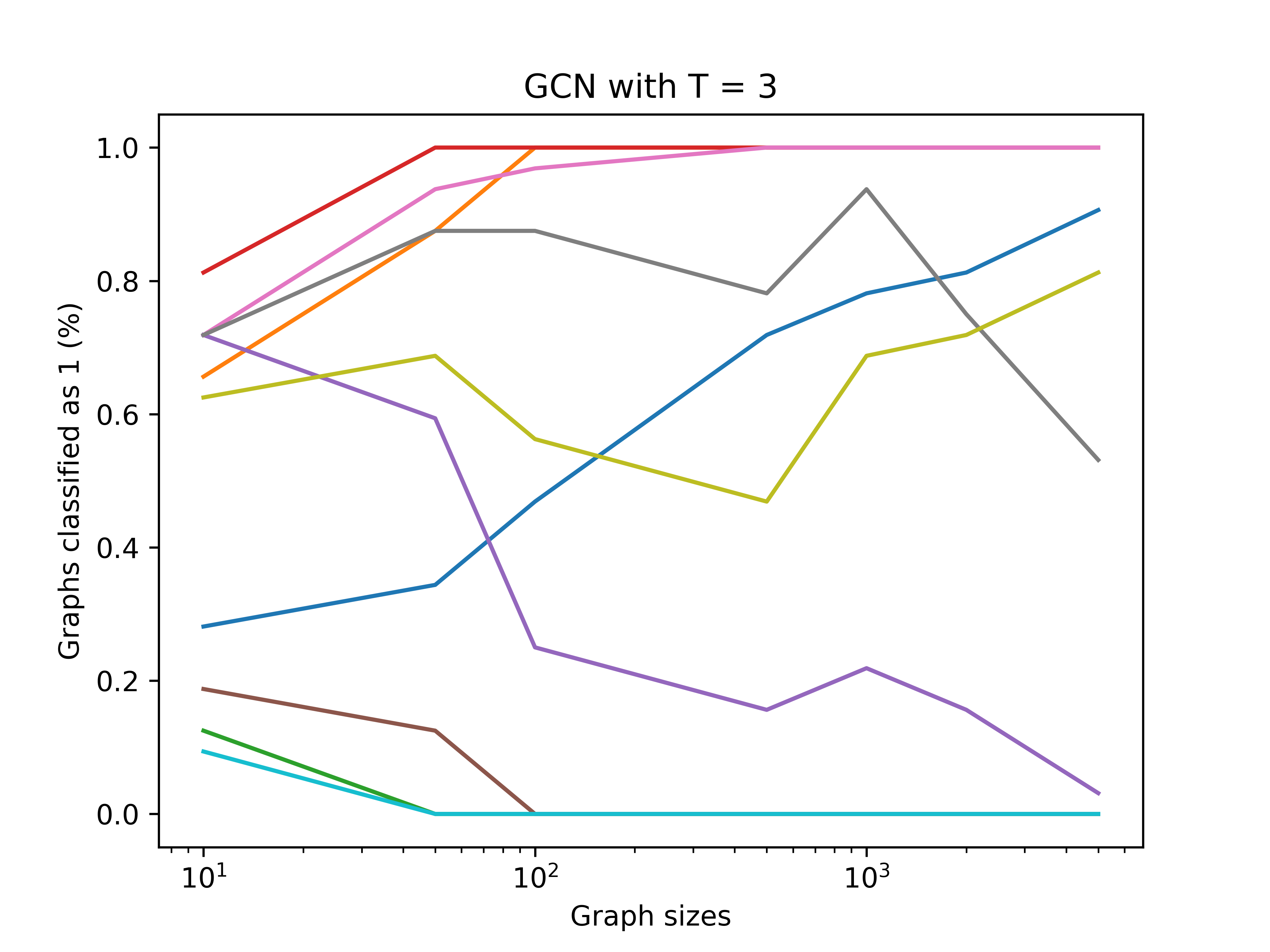}
\end{subfigure}
\caption{Normally distributed random node features with \GCN models. Each plot shows the proportion of graphs of certain size which are classified as $1$ by a set of ten GCN models. Each curve (color-coded) shows the behavior of a model, as we draw increasingly larger graphs. The phenomenon is observed for 1-layer models (left column), 2-layer models (mid column), and 3-layer models (last column). We draw the initial features randomly from a normal distribution with mean 0.5 and standard deviation $1$.} 
\label{fig:gcn normal}
\end{figure*}
Here we consider using a normal distribution to draw our initial node features. Note that normal distributions are sub-Gaussian, and hence our theoretical findings (\cref{thm:main result gcn}) confer a zero-one law in this case. \Cref{fig:gcn normal} demonstrates the results for \GCN models. We observe the expected asymptotic behavior in most cases, however in the two and three layer cases a few models have not converged by the end of the experiment.

\subsection{Experiments with other non-linearities}
\label{ssec:other non-linearities}

In this subsection we test the effect of using different non-linearities in the layers of our \GCN models. \Cref{thm:main result gcn} applies in all of these cases, so we do expect to see a zero-one law. \Cref{fig:gcn relu,fig:gcn tanh,fig:gcn sigmoid} present the results for $\mathrm{ReLU}$, $\tanh$ and $\mathrm{sigmoid}$, respectively. We see the expected behavior in all cases. Note however that, in contrast with other non-linearities, when we use $\mathrm{sigmoid}$ we observe that the rate of convergence actually increases as the number of layers increases. This suggests a complex relationship between the rate of convergence, the non-linearity and the number of layers.

\begin{figure*}[!h]
\centering
\begin{subfigure}
    \centering
    \includegraphics[width=0.32\columnwidth]{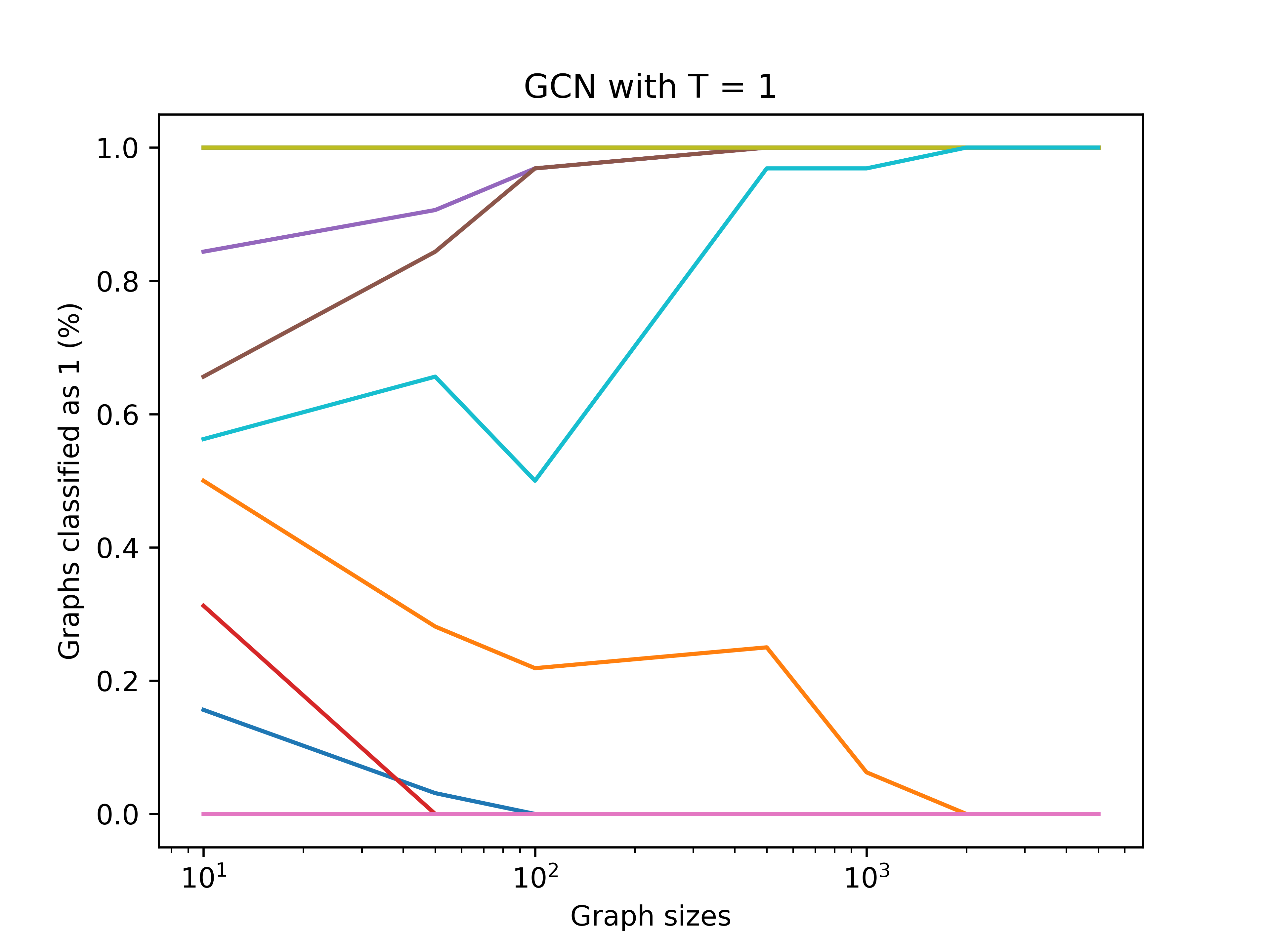}
\end{subfigure}
\hfill
\begin{subfigure}
    \centering
    \includegraphics[width=0.32\columnwidth]{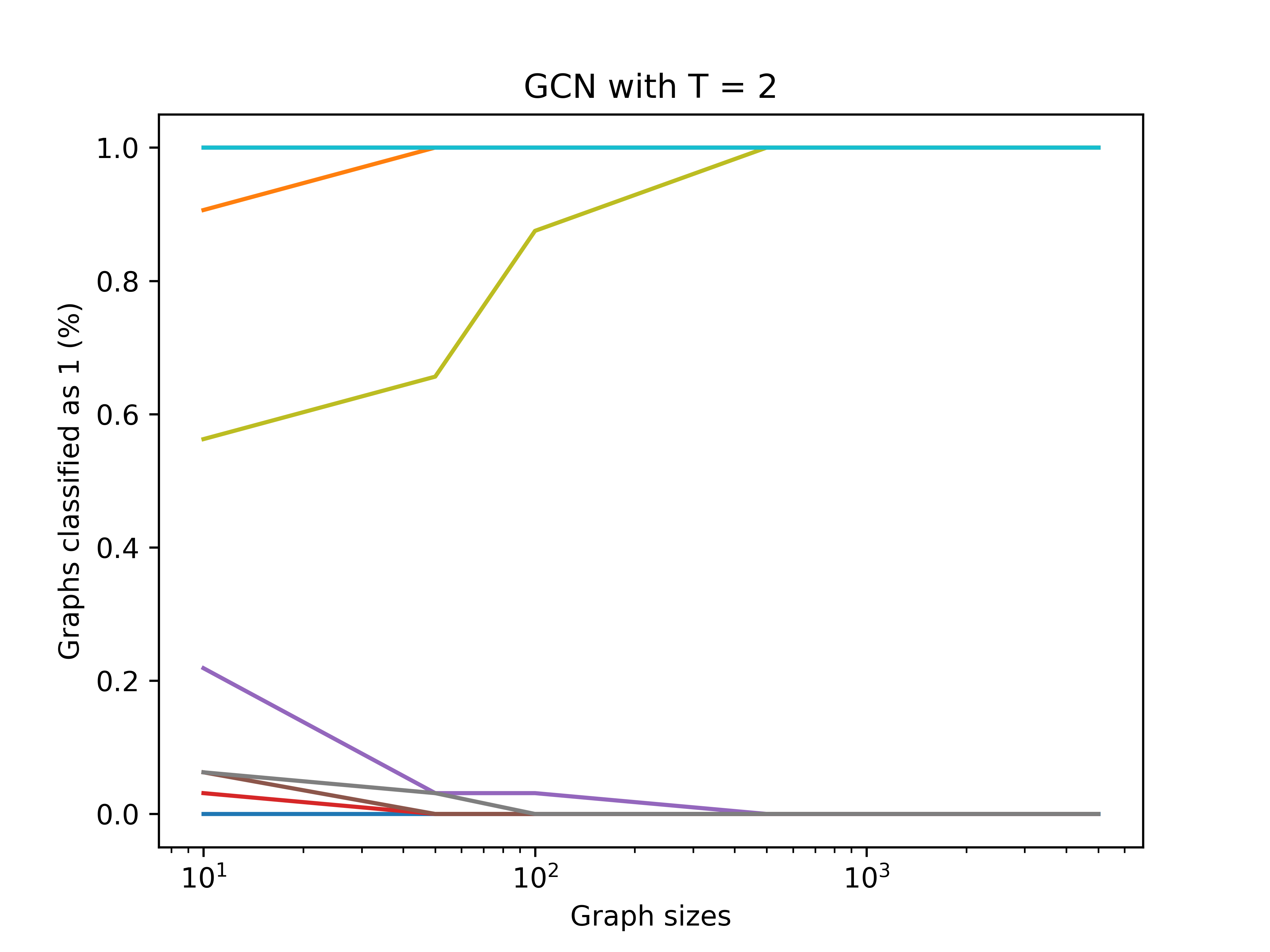}
\end{subfigure}
\hfill
\begin{subfigure}
    \centering
    \includegraphics[width=0.32\columnwidth]{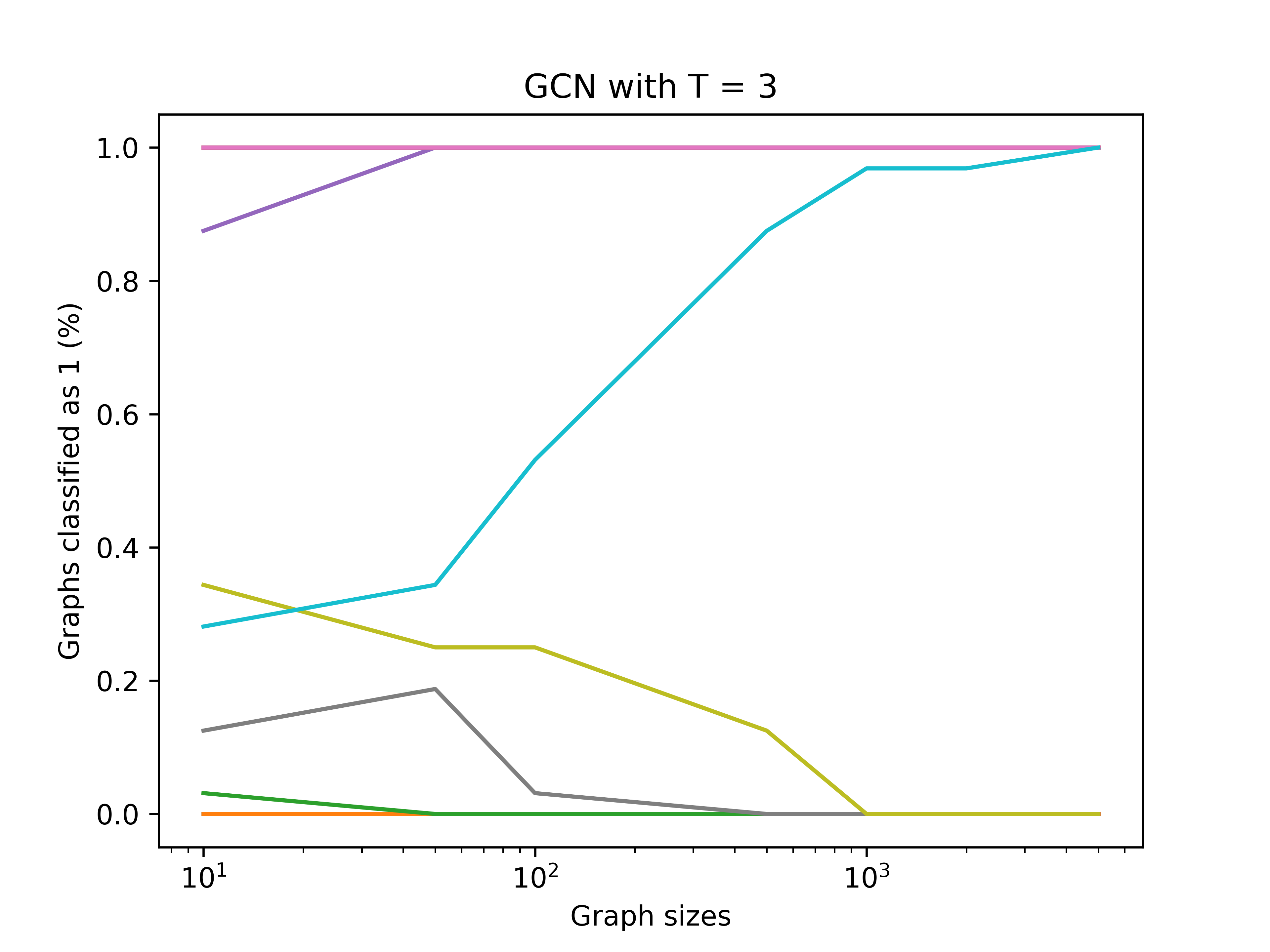}
\end{subfigure}
\caption{\GCN models with $\mathrm{ReLU}$ non-linearity. Each plot shows the proportion of graphs of certain size which are classified as $1$ by a set of ten GCN models. Each curve (color-coded) shows the behavior of a model, as we draw increasingly larger graphs. The phenomenon is observed for 1-layer models (left column), 2-layer models (mid column), and 3-layer models (last column). This time we choose the $\mathrm{ReLU}$ activation function for the GNN layers. Apart from this, the setup is the same as in the main body of the paper.} 
\label{fig:gcn relu}
\end{figure*}

\begin{figure*}[!h]
\centering
\begin{subfigure}
    \centering
    \includegraphics[width=0.32\columnwidth]{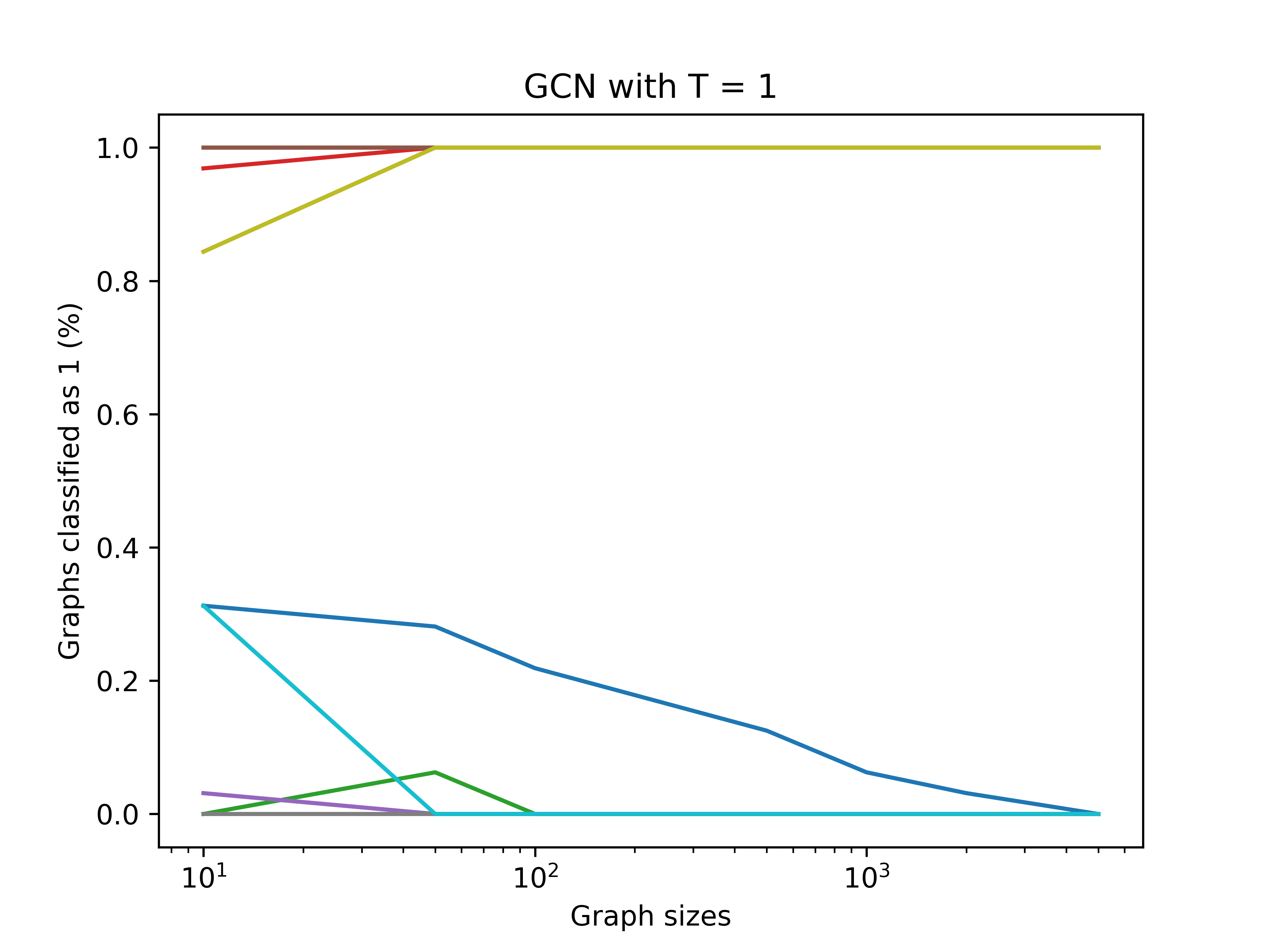}
\end{subfigure}
\hfill
\begin{subfigure}
    \centering
    \includegraphics[width=0.32\columnwidth]{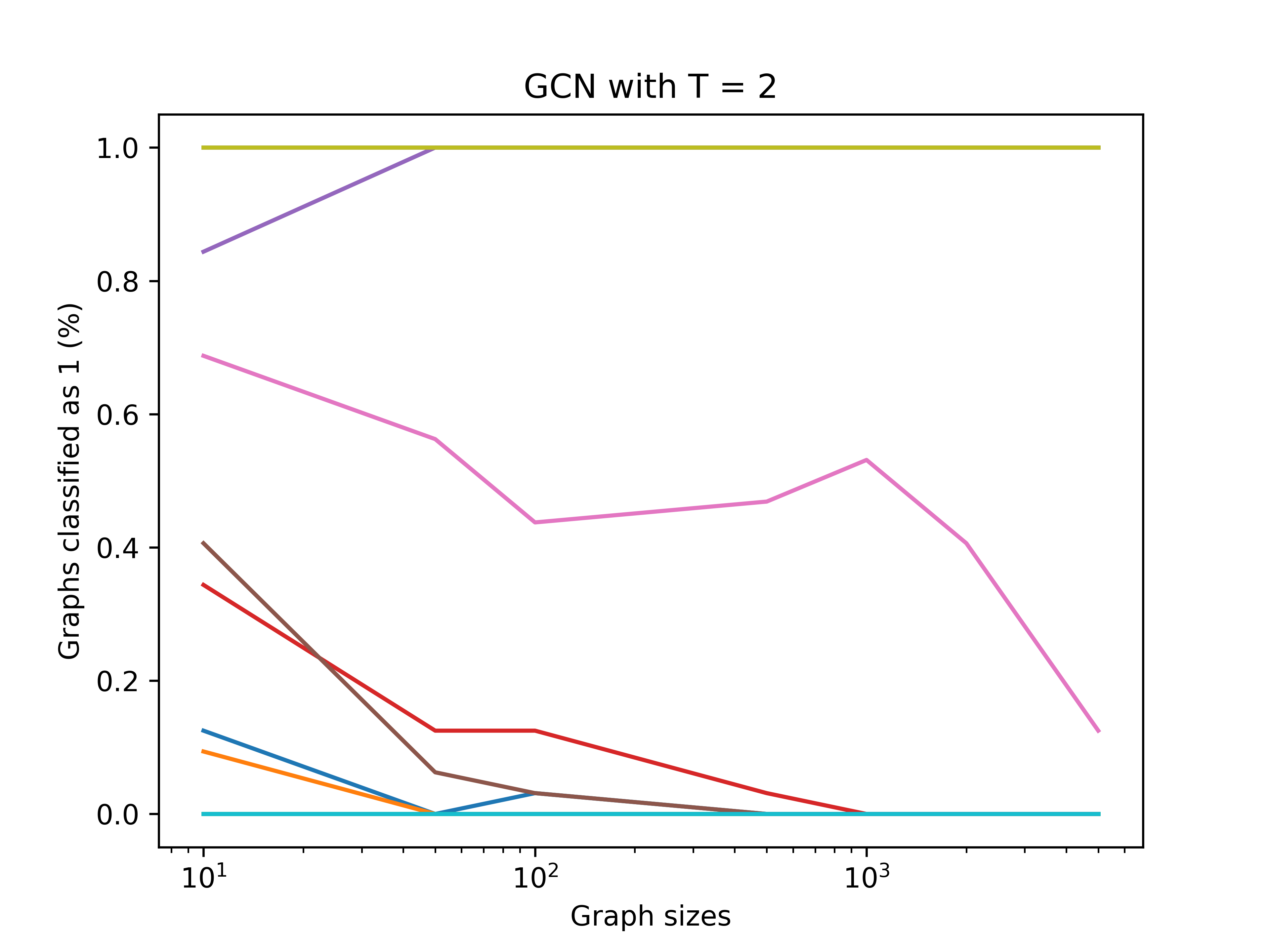}
\end{subfigure}
\hfill
\begin{subfigure}
    \centering
    \includegraphics[width=0.32\columnwidth]{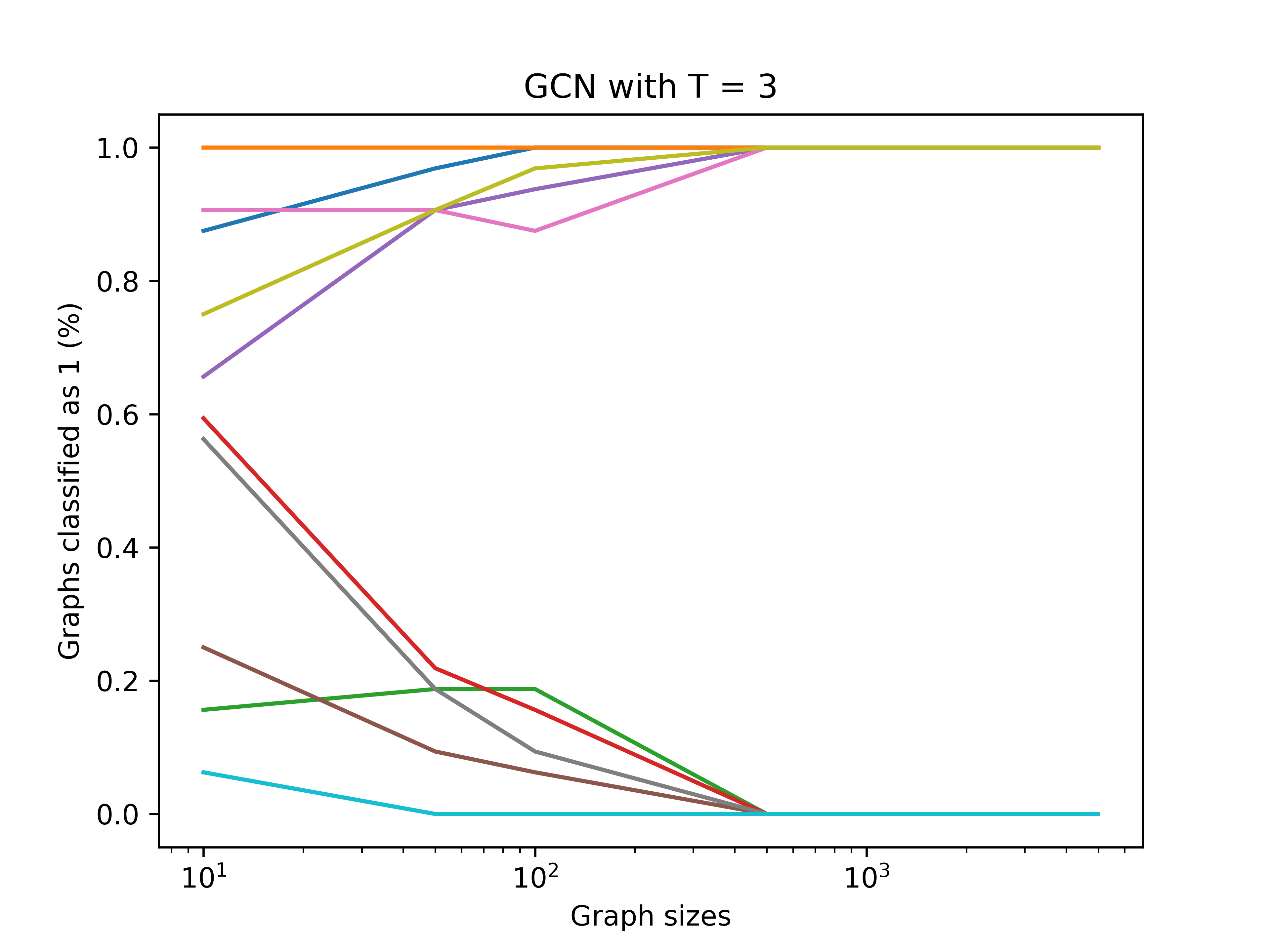}
\end{subfigure}
\caption{\GCN models with $\tanh$ non-linearity. Each plot shows the proportion of graphs of certain size which are classified as $1$ by a set of ten GCN models. Each curve (color-coded) shows the behavior of a model, as we draw increasingly larger graphs. The phenomenon is observed for 1-layer models (left column), 2-layer models (mid column), and 3-layer models (last column). We use $\tanh$ as an activation function for the GNN layers, and keep everything else the same.} 
\label{fig:gcn tanh}
\end{figure*}

\begin{figure*}[!h]
\centering
\begin{subfigure}
    \centering
    \includegraphics[width=0.32\columnwidth]{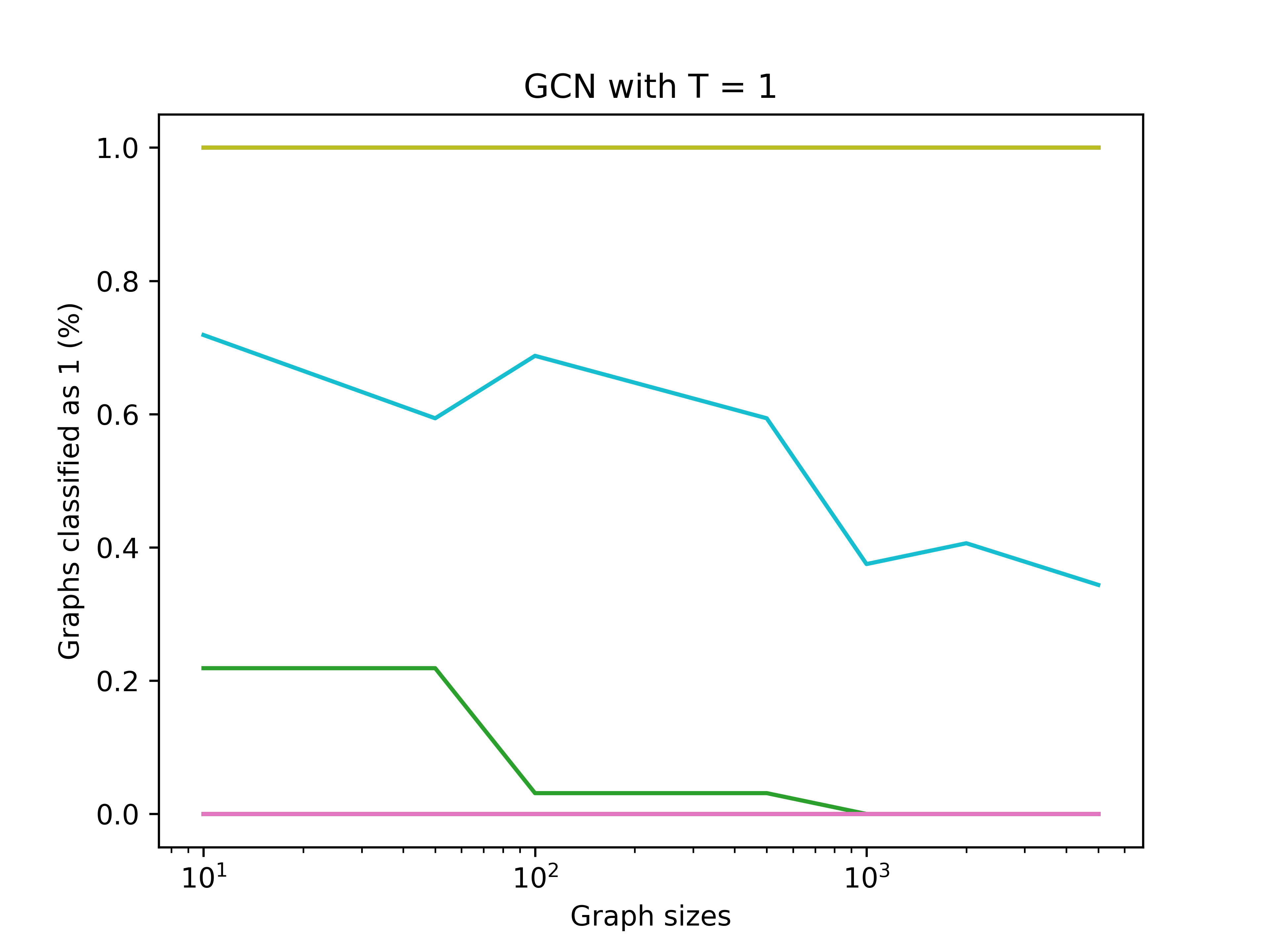}
\end{subfigure}
\hfill
\begin{subfigure}
    \centering
    \includegraphics[width=0.32\columnwidth]{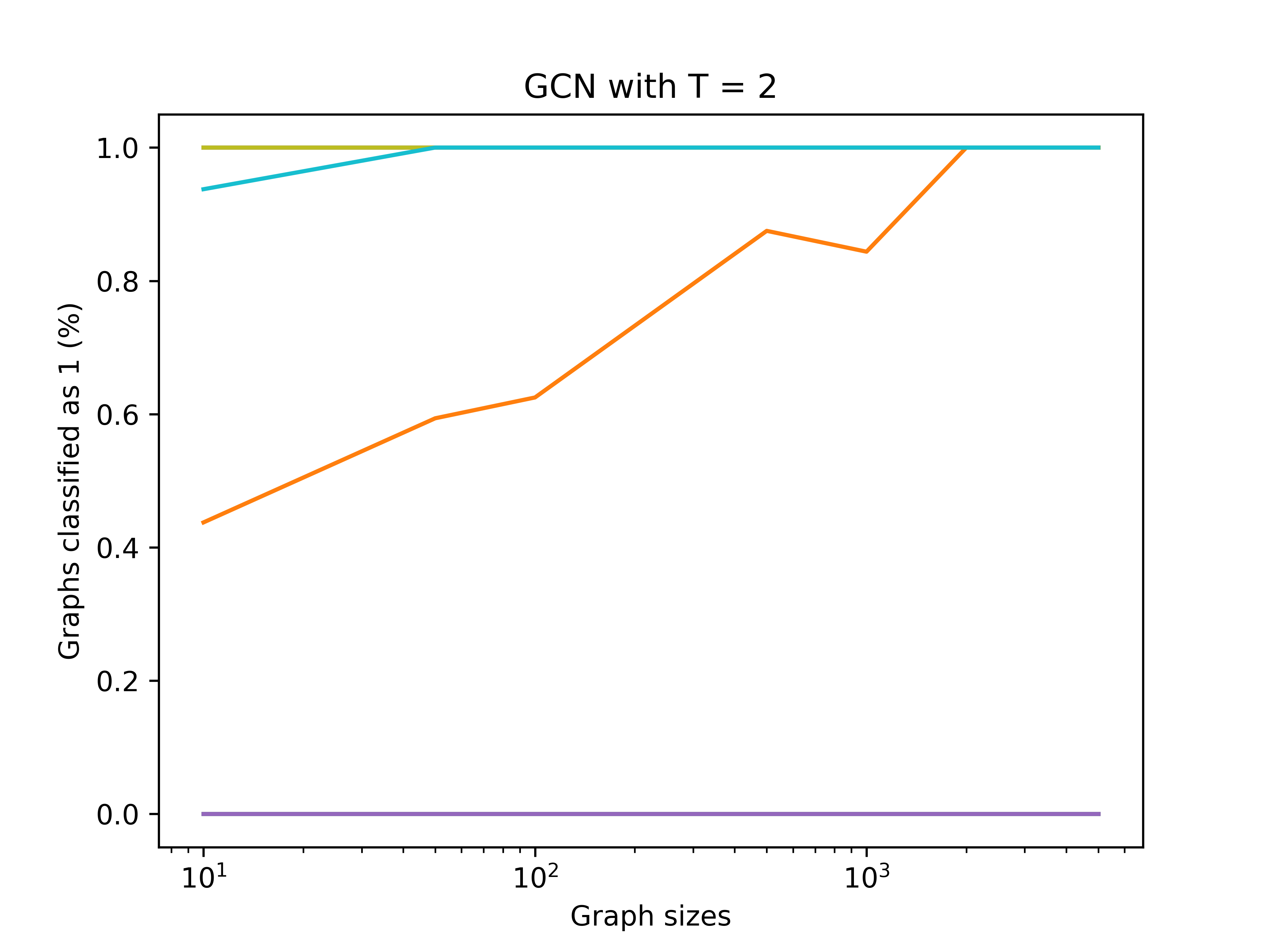}
\end{subfigure}
\hfill
\begin{subfigure}
    \centering
    \includegraphics[width=0.32\columnwidth]{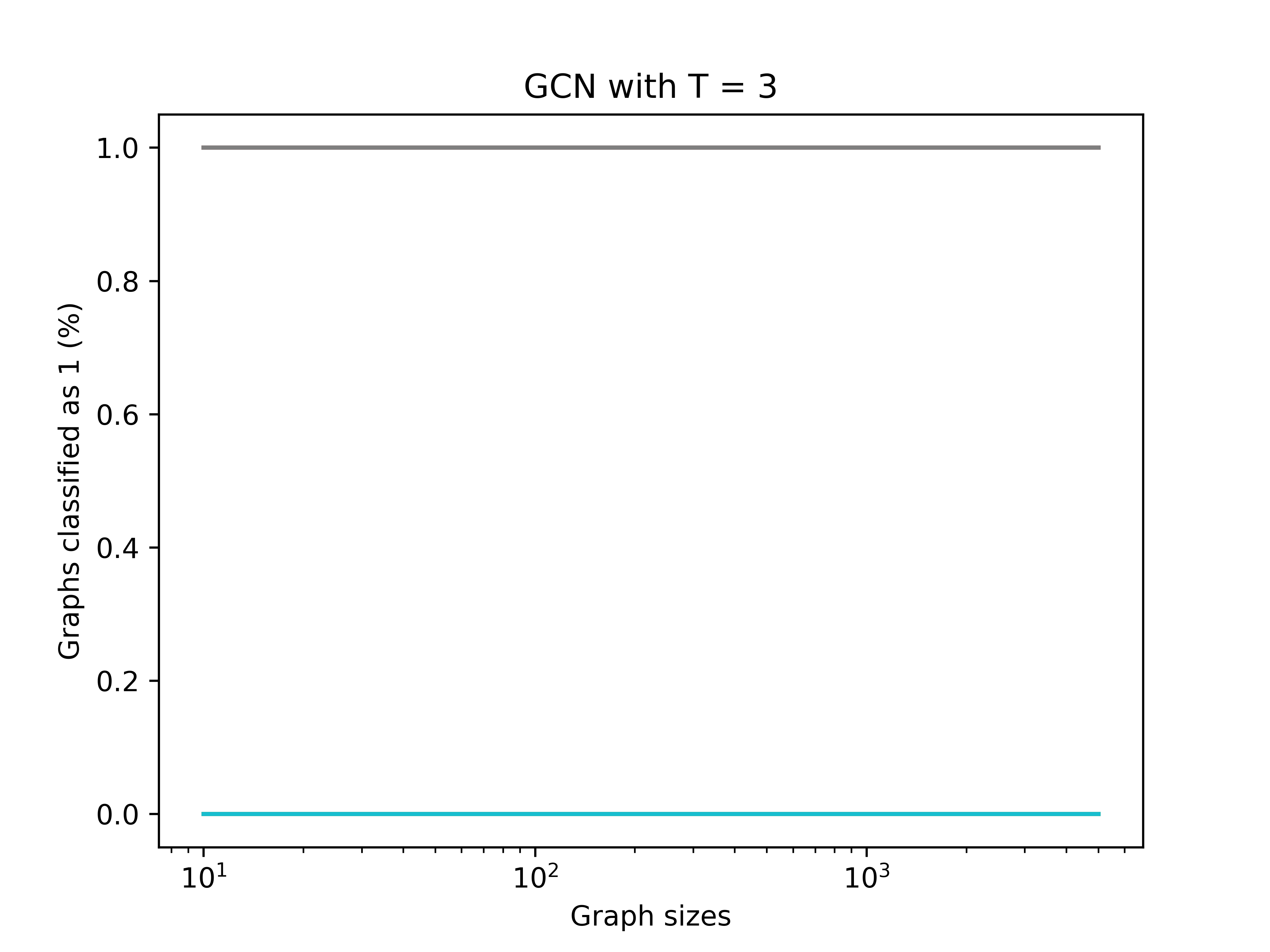}
\end{subfigure}
\caption{\GCN models with $\mathrm{sigmoid}$ non-linearity. Each plot shows the proportion of graphs of certain size which are classified as $1$ by a set of ten GCN models. Each curve (color-coded) shows the behavior of a model, as we draw increasingly larger graphs. The phenomenon is observed for 1-layer models (left column), 2-layer models (mid column), and 3-layer models (last column). We use the $\mathrm{sigmoid}$ activation function for the GNN layers, and keep everything else the same.} 
\label{fig:gcn sigmoid}
\end{figure*}

\subsection{Experiments with GAT}
\label{ssec:GAT}

Here we investigate the asymptotic behaviour of a GNN architecture not considered in the main body: the Graph Attention Network \cite{velickovic2018graph}. Cast as an MPNN, this architecture uses an attention mechanism as the aggregate function $\phi$ in the message passing step. The techniques used in this paper to establish a zero-one law for other GNN architectures do not easily extend to GAT. However, our experiments demonstrate a very quick convergence to $0$ or $1$.

\begin{figure}[h]
    \centering
    \begin{subfigure}
        \centering
        \includegraphics[width=0.3\columnwidth]{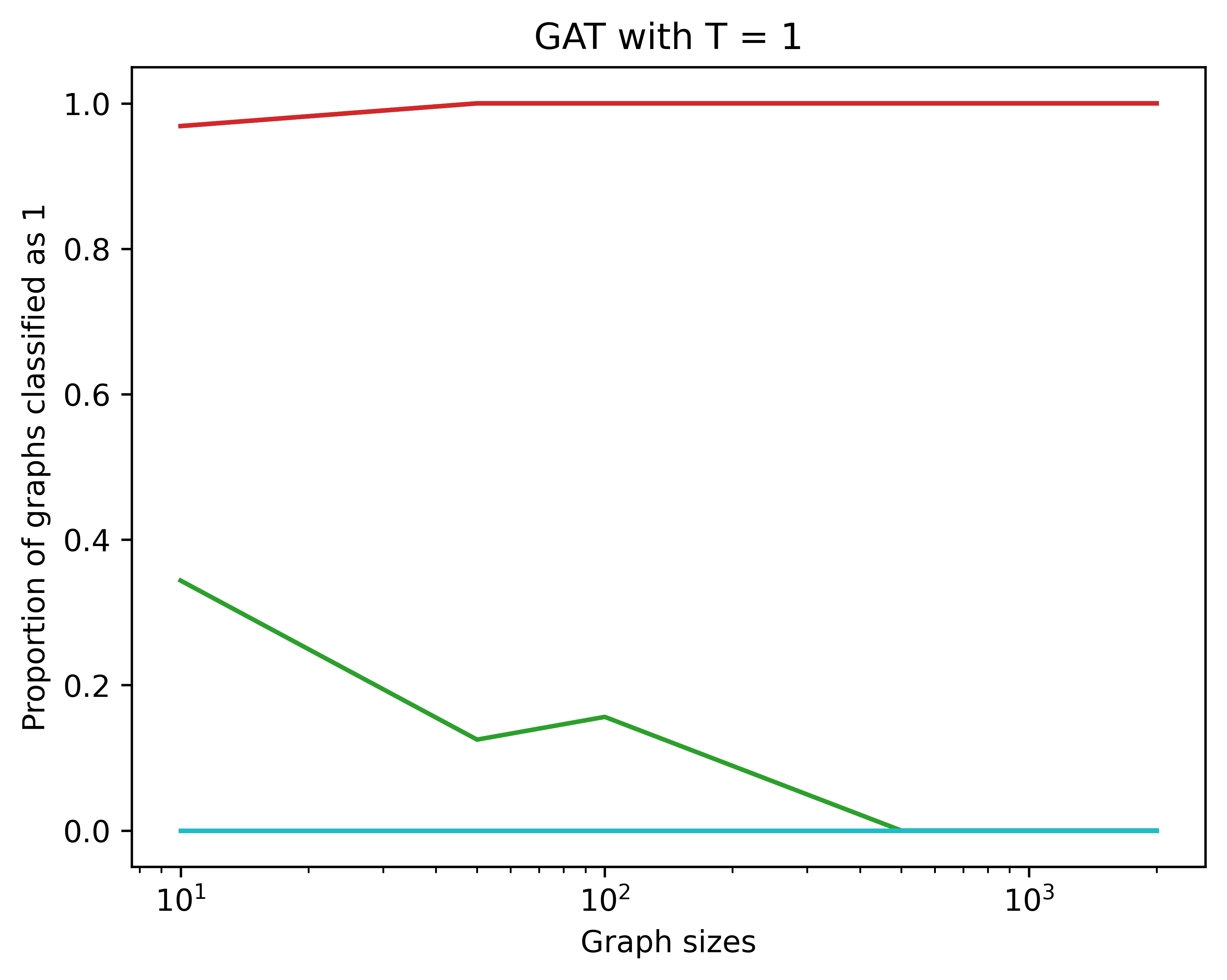}
    \end{subfigure}
    \begin{subfigure}
        \centering
        \includegraphics[width=0.3\columnwidth]{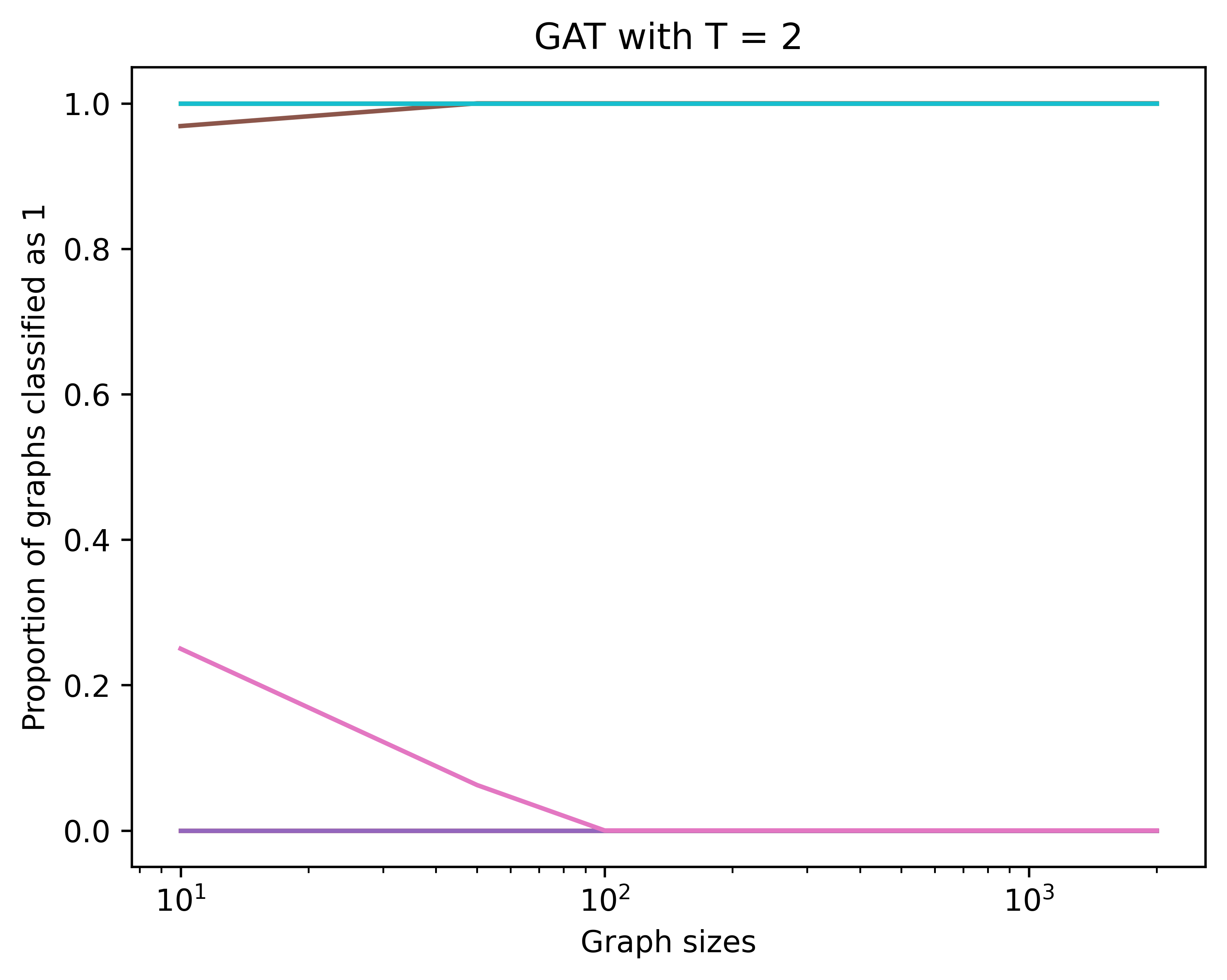}
    \end{subfigure}
    \begin{subfigure}
        \centering
        \includegraphics[width=0.3\columnwidth]{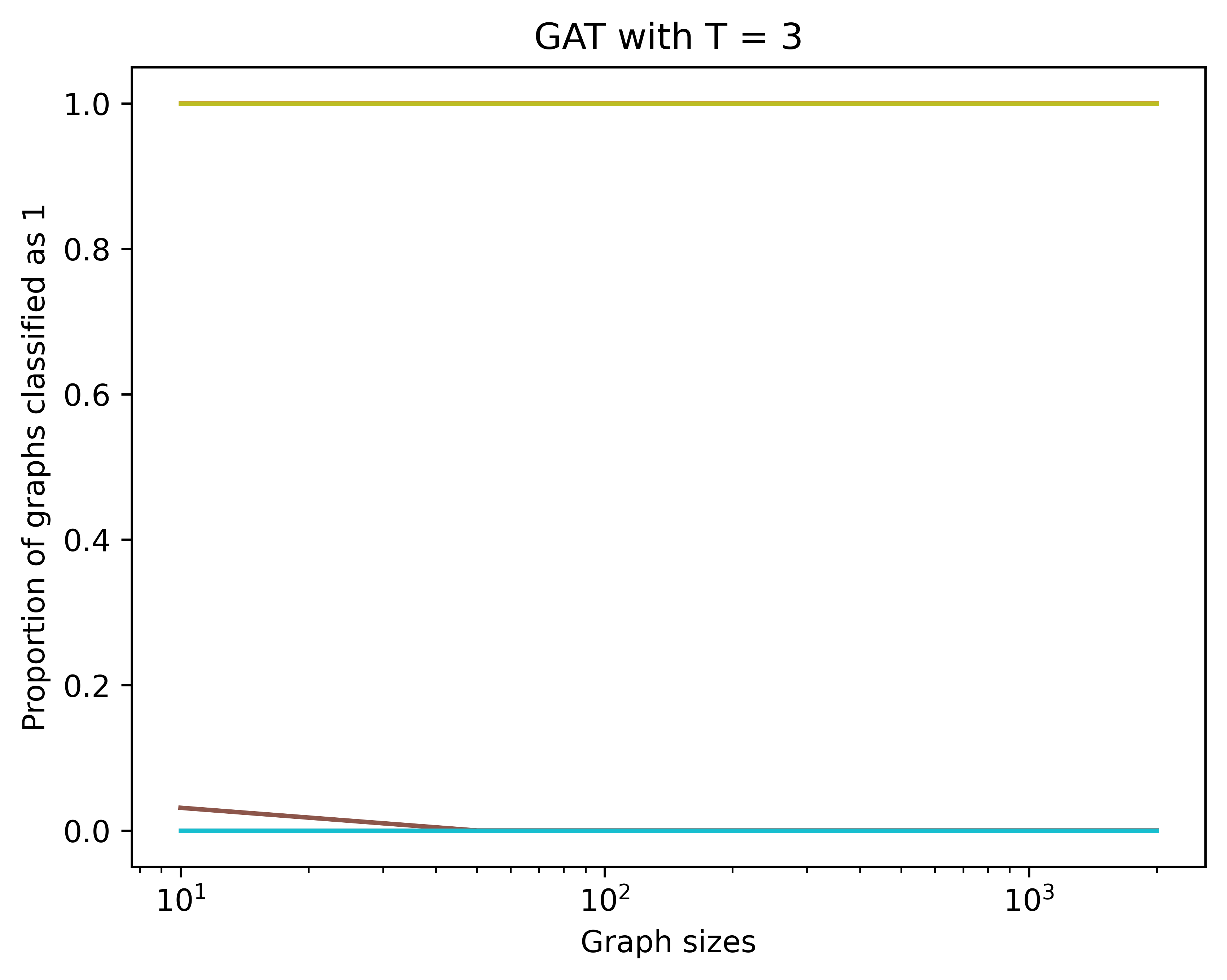}
    \end{subfigure}
    \caption{Ten $\textsc{GAT}$ models with number of layers 1, 2 and 3 are run on graphs of increasing size, with the proportion of nodes classified as 1 recorded. We observe a convergence to zero-one law very quickly.}
\end{figure}

\subsection{Experiments on sparse Erd\H{o}s-R{\'e}nyi graphs}
\label{ssec:sparse er}

In these experiments, we consider \GCN models on a variation of the Erd\H{o}s-R{\'e}nyi distribution in which the edge probability $r$ is allowed to vary as a function of $n$. Specifically, we set $r = \log(n) / n$, which yields sparser graphs than in the standard distribution. Our experiments provide evidence for a zero-one law also in this case (\cref{fig:gcn sparse}). 

\begin{figure*}[!h]
\centering
\begin{subfigure}
    \centering
    \includegraphics[width=0.32\columnwidth]{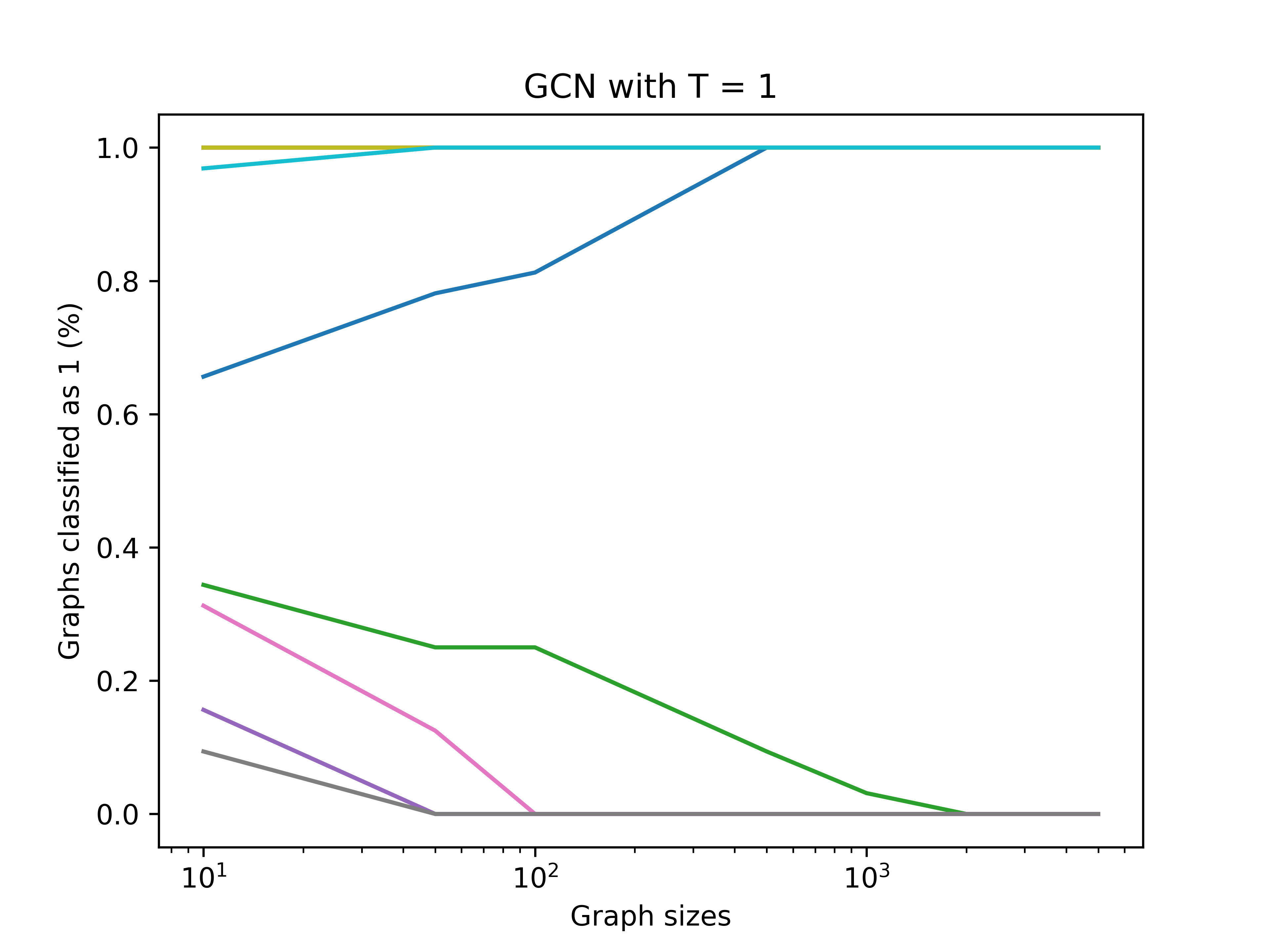}
\end{subfigure}
\hfill
\begin{subfigure}
    \centering
    \includegraphics[width=0.32\columnwidth]{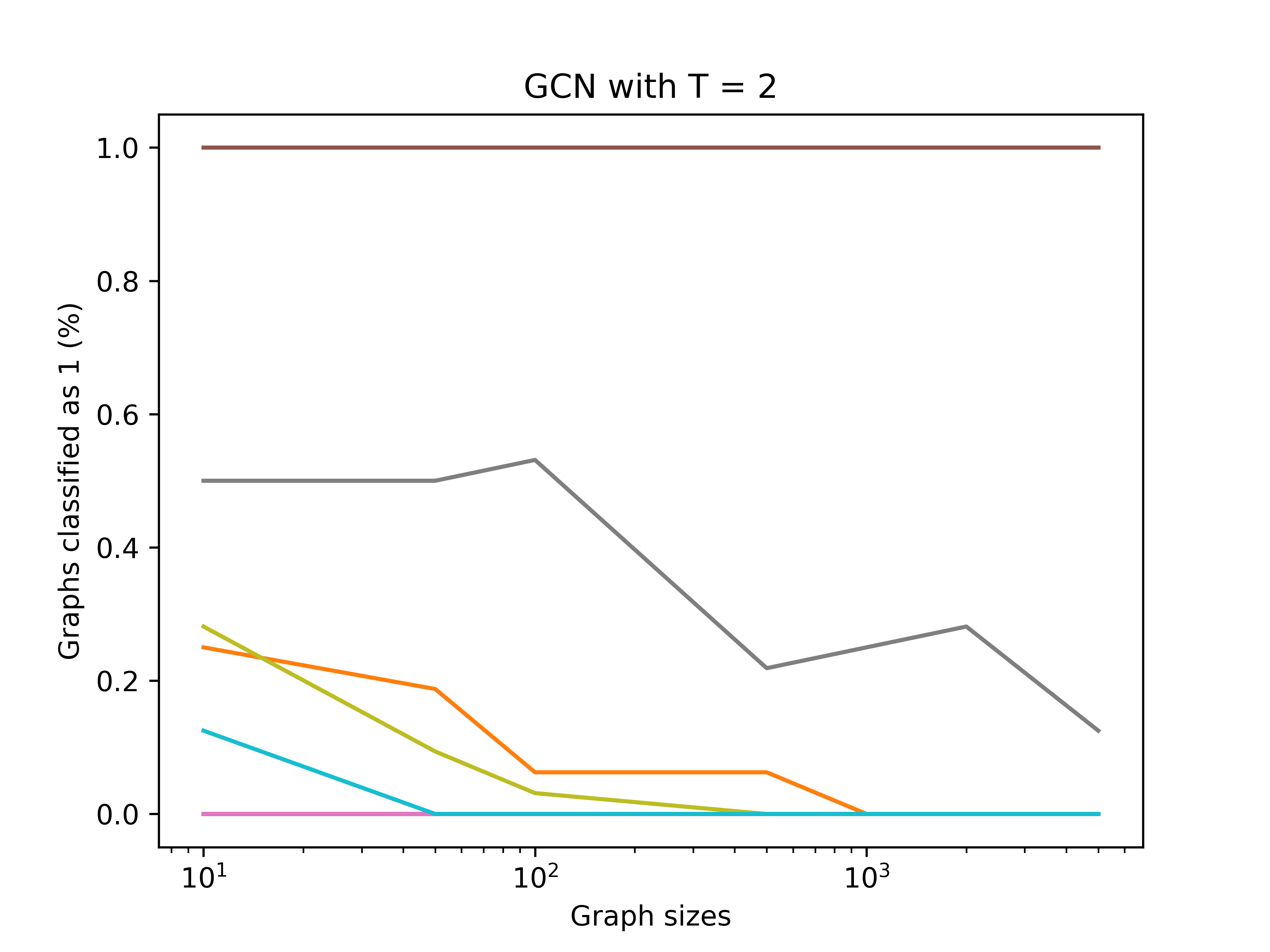}
\end{subfigure}
\hfill
\begin{subfigure}
    \centering
    \includegraphics[width=0.32\columnwidth]{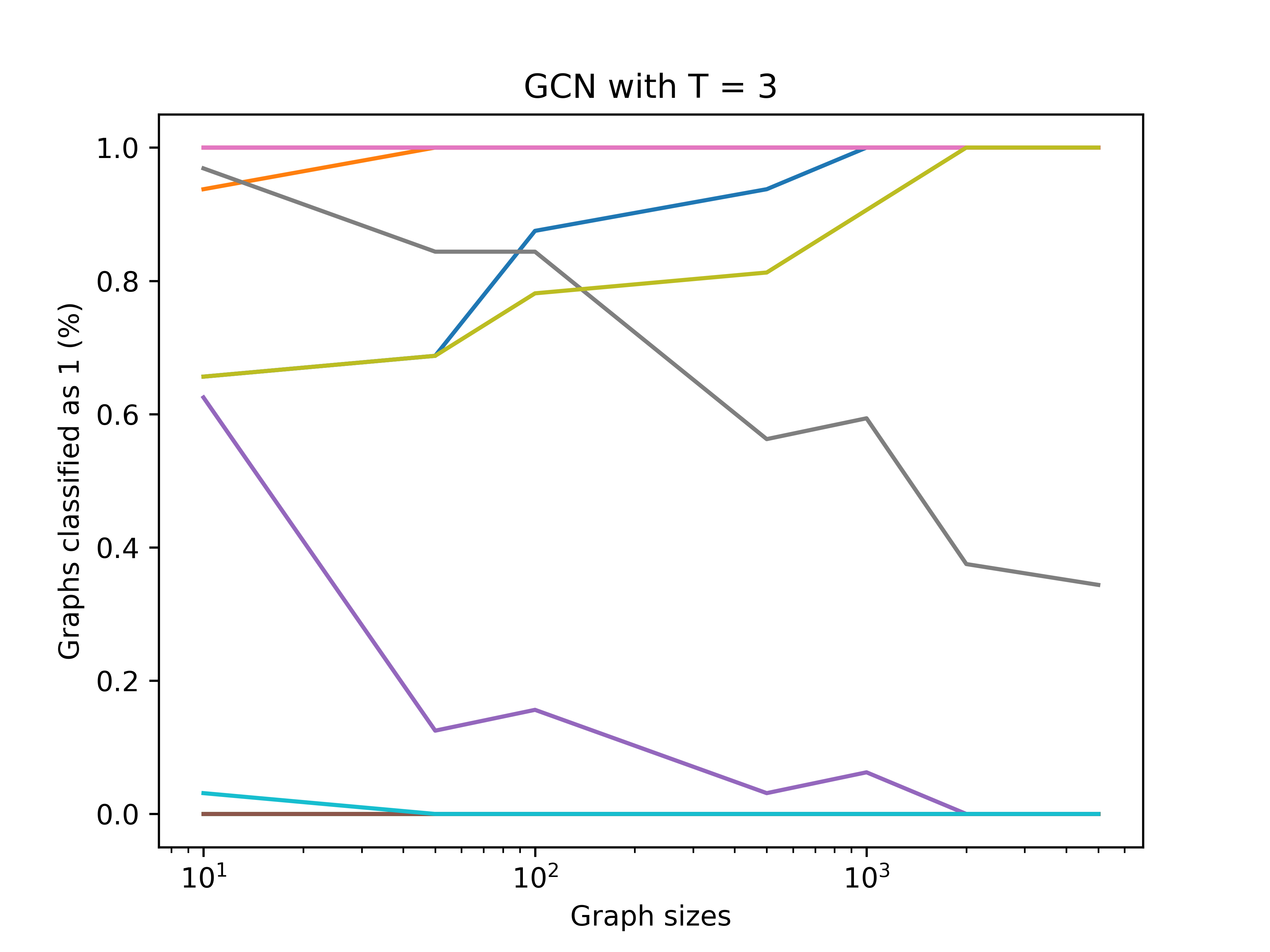}
\end{subfigure}
\caption{Sparse Erd\H{o}s-R{\'e}nyi graphs with \GCN models. Each plot shows the proportion of graphs of certain size which are classified as $1$ by a set of ten GCN models. Each curve (color-coded) shows the behavior of a model, as we draw increasingly larger graphs. The phenomenon is observed for 1-layer models (left column), 2-layer models (mid column), and 3-layer models (last column). We let the probability $r$ of an edge appearing be $\frac{\log(n)}{n}$. All the other parameters are the same as in the experiments of the main body of the paper.} 
\label{fig:gcn sparse}
\end{figure*}

\subsection{Experiments on the Barabási-Albert random graph model}
\label{ssec:ba model}

In this subsection, we consider another alternative graph distribution: the Barabási-Albert model \cite{doi:10.1126/science.286.5439.509}. This model aims to better capture the degree distributions commonly found in real-world networks. We can again observe a zero-one law for \GCN models under this distribution (\cref{fig:gcn BA model}).

\begin{figure*}[!h]
\centering
\begin{subfigure}
    \centering
    \includegraphics[width=0.32\columnwidth]{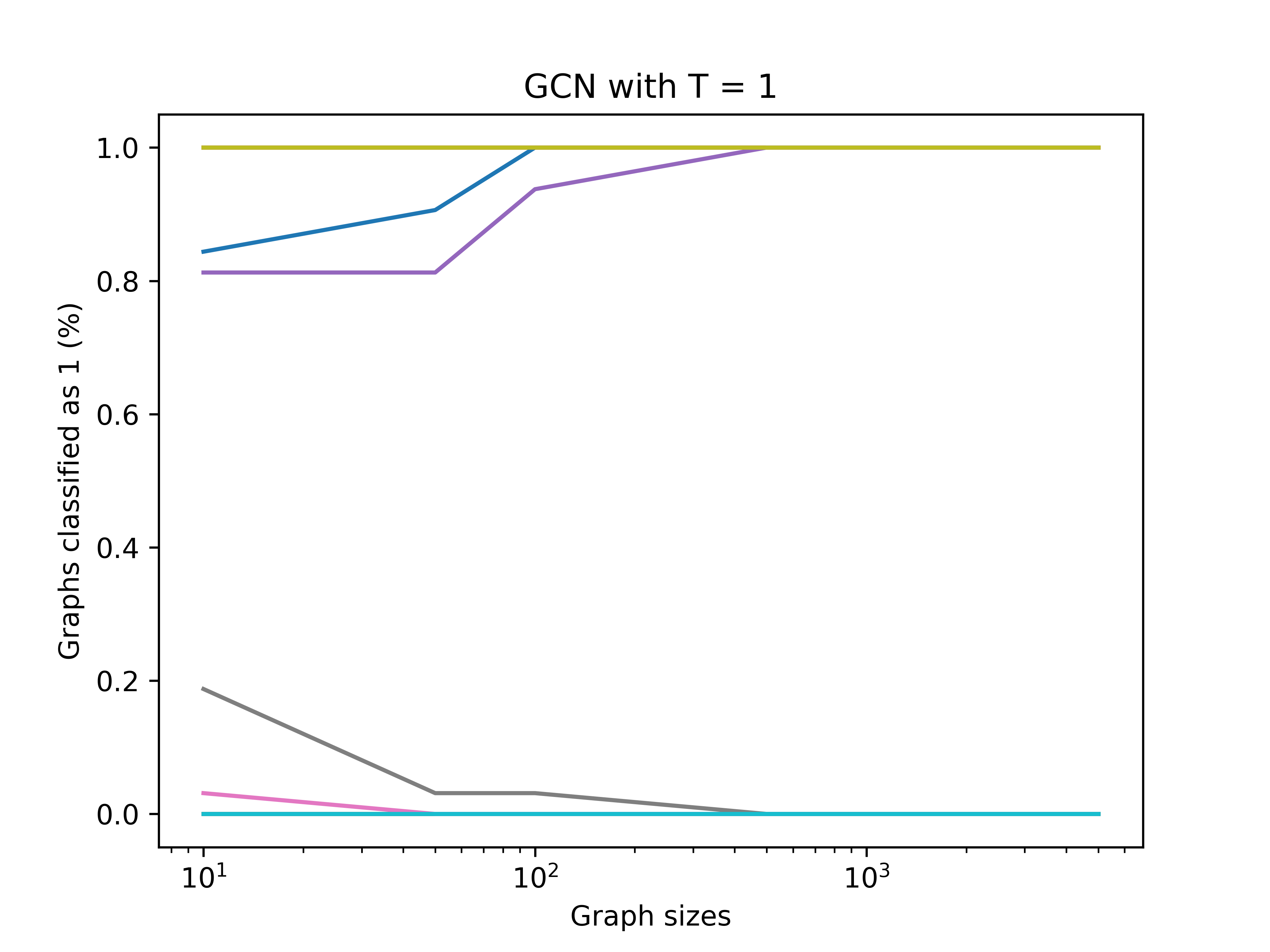}
\end{subfigure}
\hfill
\begin{subfigure}
    \centering
    \includegraphics[width=0.32\columnwidth]{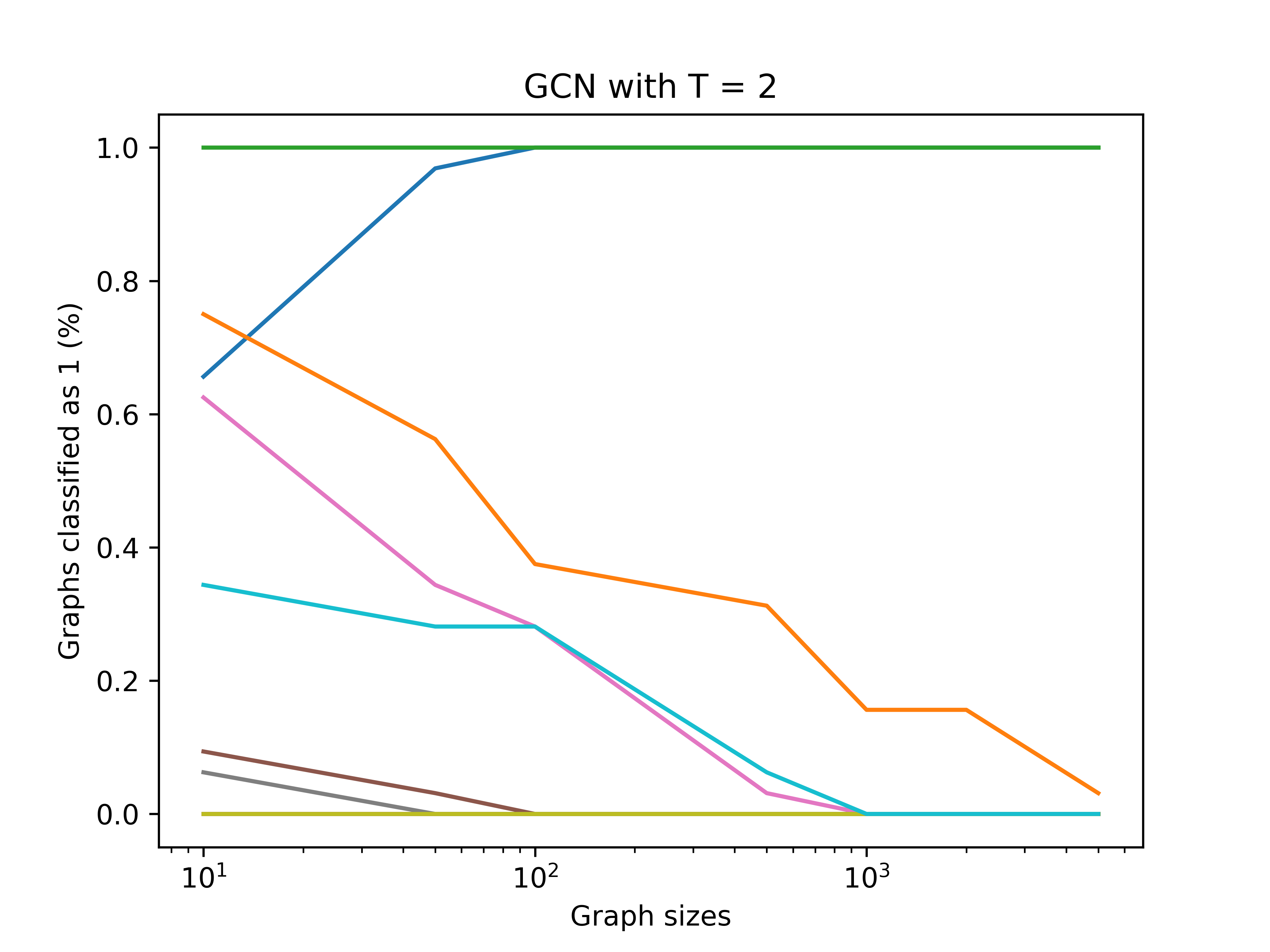}
\end{subfigure}
\hfill
\begin{subfigure}
    \centering
    \includegraphics[width=0.32\columnwidth]{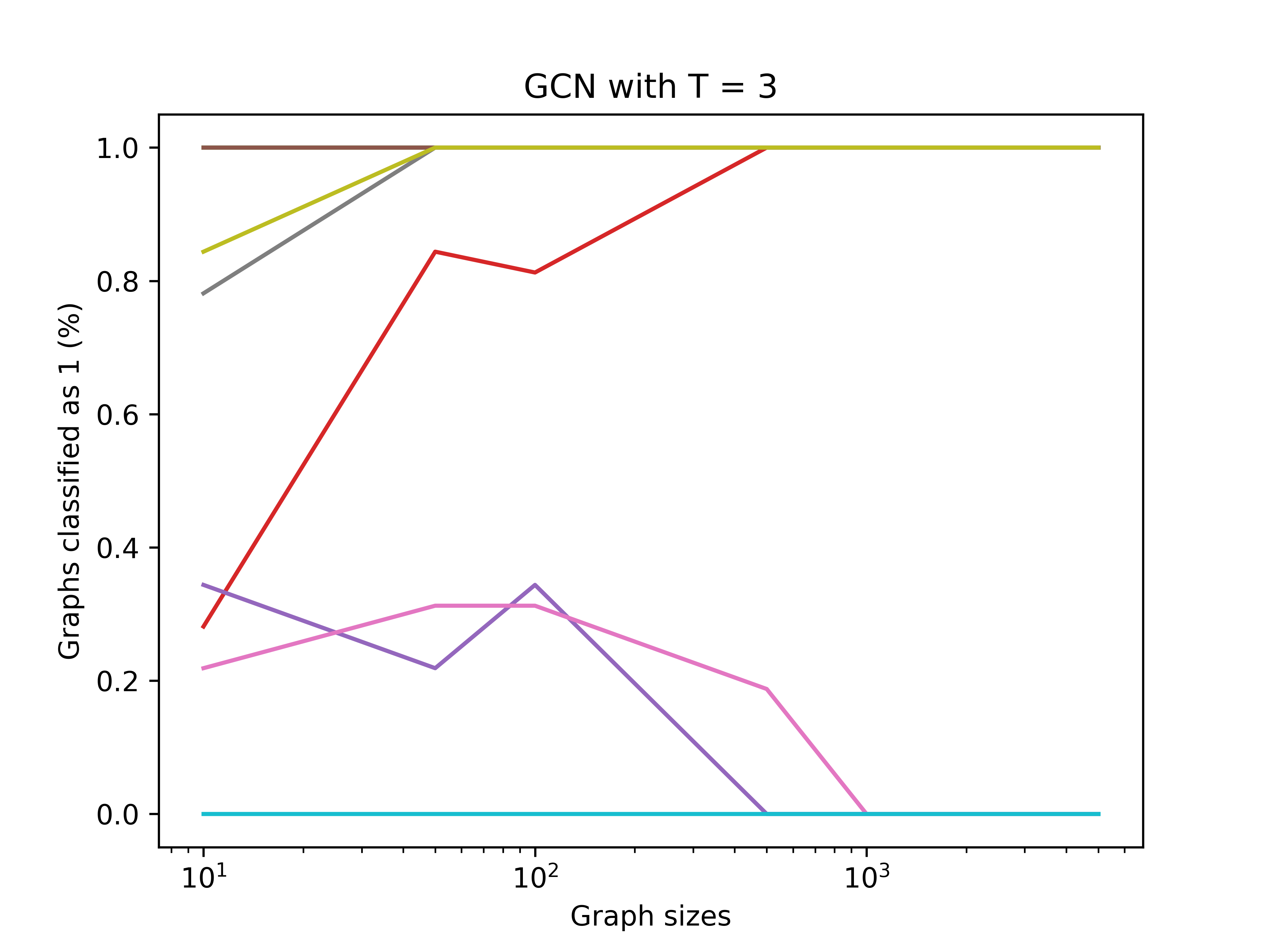}
\end{subfigure}
\caption{Barabási-Albert graphs with \GCN models. Each plot shows the proportion of graphs of certain size which are classified as $1$ by a set of ten GCN models. Each curve (color-coded) shows the behavior of a model, as we draw increasingly larger graphs. The phenomenon is observed for 1-layer models (left column), 2-layer models (mid column), and 3-layer models (last column). We generate the graphs using the Barabási-Albert model; apart from this the setup is the main as in the experiments in the main body of the paper.}  
\label{fig:gcn BA model}
\end{figure*}

\end{document}